\definecolor{darkred}{RGB}{100,0,0}
\definecolor{darkblue}{RGB}{0,0,100}
\definecolor{darkgreen}{RGB}{0,75,0}
\declaretheoremstyle[
    spaceabove    = \parsep,
    spacebelow    = \parsep,
    bodyfont      = \normalfont\itshape,
]{theoremsty}
\declaretheoremstyle[
    spaceabove    = \parsep,
    spacebelow    = \parsep,
    bodyfont      = \normalfont,
]{normalsty}
\declaretheorem[name=Theorem, style=theoremsty, mdframed={style = coloredstyle}]{theorem}
\declaretheorem[name=Proposition, style=theoremsty, mdframed={style = coloredstyle}]{proposition}
\declaretheorem[name=Lemma, style=theoremsty, mdframed={style = coloredstyle}]{lemma}
\declaretheorem[name=Theorem, style=theoremsty, mdframed={style = coloredstyle}]{theorem*}
\declaretheorem[name=Proposition, style=theoremsty, mdframed={style = coloredstyle}]{proposition*}
\declaretheorem[name=Corollary, style=theoremsty, mdframed={style = coloredstyle}]{corollary*}
\declaretheorem[name=Lemma, style=theoremsty, mdframed={style = coloredstyle}]{lemma*}
\declaretheorem[name=Conjecture, style=theoremsty, mdframed={style = coloredstyle}]{conjecture*}
\declaretheorem[name=Theorem, style=theoremsty, mdframed={style = coloredstyle}, numberwithin=section]{theorem_ap}
\declaretheorem[name=Proposition, style=theoremsty, mdframed={style = coloredstyle}, numberwithin=section]{proposition_ap}
\declaretheorem[name=Corollary, style=theoremsty, mdframed={style = coloredstyle}, numberwithin=section]{corollary_ap}
\declaretheorem[name=Lemma, style=theoremsty, mdframed={style = coloredstyle}, numberwithin=section]{lemma_ap}
\declaretheorem[name=Remark, style=normalsty]{remark}
\declaretheorem[name=Definition, style=normalsty, mdframed={style = coloredstyle}]{definition}
\declaretheorem[name=Assumption, style=normalsty, mdframed={style = coloredstyle}]{assumption}
\declaretheorem[name=Remark, style=normalsty, numbered=no]{remark*}
\declaretheorem[name=Definition, style=normalsty, mdframed={style = coloredstyle}, numbered=no]{definition*}
\declaretheorem[name=Assumption, style=normalsty, mdframed={style = coloredstyle}, numbered=no]{assumption*}
\newcommand{\calD}{{\mathcal{D}}}
\newcommand{\calF}{{\mathcal{F}}}
\newcommand{\calH}{{\mathcal{H}}}
\newcommand{\calL}{{\mathcal{L}}}
\newcommand{\calN}{{\mathcal{N}}}
\newcommand{\calO}{{\mathcal{O}}}
\newcommand{\calX}{{\mathcal{X}}}
\newcommand{\bbE}{\mathbb{E}}
\newcommand{\bbI}{\mathbb{I}}
\newcommand{\bbN}{\mathbb{N}}
\newcommand{\bbP}{\mathbb{P}}
\newcommand{\bbR}{\mathbb{R}}
\newcommand{\lbra}[1]{\left[#1\right]}
\newcommand{\mbra}[1]{\left\{#1\right\}}
\newcommand{\sbra}[1]{\left(#1\right)}
\newcommand{\norm}[1]{\lVert#1\rVert}
\newcommand{\abs}[1]{\left| #1\right|}
\newcommand{\inner}[1]{\langle#1\rangle}
\newcommand{\eye}{{\mathbbm{1}}}
\newcommand{\mean}[1]{\mathbb{E}\left[#1\right]}
\newcommand{\divergence}{\textsf{div}}
\newcommand{\KL}{\textsf{KL}}
\newcommand{\trace}[1]{\textsf{Tr}\sbra{#1}}
\newcommand{\dist}{\text{dist}}
\DeclareMathOperator*{\argmin}{argmin}
\definecolor{blendedblue}{rgb}{0.2,0.2,0.7}
\title{Fast Non-Log-Concave Sampling under Nonconvex Equality and Inequality Constraints with Landing}
\author{
    Kijung Jeon\\
    Georgia Institute of Technology\\
    \texttt{kjeon@gatech.edu}\\
    \And
    Michael Muehlebach\\
    MPI-IS\\
    \texttt{michaelm@tue.mpg.de}
    \And
    Molei Tao\\
    Georgia Institute of Technology\\
    \texttt{mtao@gatech.edu}
}
\begin{document}

\maketitle

\begin{abstract}

Sampling from constrained statistical distributions is a fundamental task in various fields including Bayesian statistics, computational chemistry, and statistical physics. This article considers sampling from a constrained distribution that is described by an unconstrained density, as well as additional equality and/or inequality constraints, which often make the constraint set nonconvex. Existing methods struggle in the presence of such nonconvex constraints, as they rely on projections, which are computationally expensive or intractable, are specialized to either inequality or equality constraints, and often lack rigorous quantitative convergence guarantees.
In this paper, we introduce \textit{Overdamped Langevin with LAnding} (OLLA), a new framework that can design overdamped Langevin dynamics accommodating both nonlinear equality and inequality constraints. The proposed dynamics also deterministically corrects trajectories along the normal direction of the constraint surface, thus obviating the need for explicit projections. We show that, under suitable regularity conditions on the target density 
and the feasible set $\Sigma\subset\bbR^d$, OLLA converges exponentially fast in 2-Wasserstein distance to the constrained target density $\rho_\Sigma(x) \propto \exp(-f(x))d\sigma_\Sigma$.
Lastly, through experiments, we demonstrate the efficiency of OLLA compared to known constrained Langevin algorithms and their slack variable variants, highlighting its favorable computational cost and fast empirical mixing.\footnote{All code is provided in the following repository: \url{https://github.com/KraitGit/OLLA}}
\end{abstract}
\section{Introduction}
\label{main:sec:Introduction}

Sampling from complex, constrained statistical distributions is a fundamental problem in machine learning, with applications ranging from Bayesian inference under structured priors to training generative models with safety or fairness constraints. When there is no constraint, a prominent class of sampling techniques is centered around (overdamped) Langevin dynamics, where the drift is set to the gradient of the log-target density. These have gained significant traction due to their strong theoretical guarantees: for example under log-concave target densities \cite{dalalyan2017theoretical,durmus17,durmus19,erdogdu2022convergence,cheng2018convergence,dalalyan2017further,durmus2019analysis,li2022sqrt} or more general densities that satisfy relaxed conditions such as isoperimetric inequalities \citep{NEURIPS2019_65a99bb7,SinhoChewi} 
one can obtain fast, non-asymptotic convergence rates. Langevin dynamics can even be generalized, via a Riemannian Langevin approach, to sample under convex constraints \cite[e.g.,][]{zhang2020wasserstein, li2022mirror}. However, extending Langevin-based approaches to sample from distributions supported on nonconvex sets $\Sigma\subset \bbR^d$ remains a major challenge. Existing techniques typically rely on projection steps, which are computationally expensive and require convexity to ensure convergence. Moreover, most methods offer limited or no quantitative convergence guarantees in this case; in fact, even if the density is log-concave, a nonconvex constraint can easily make the target distribution much harder to sample from, also rendering the analysis more difficult. This is a critical bottleneck for many emerging machine learning applications -- such as imitation learning \citep{zare2024survey} or constrained generative modeling -- where the feasible set is implicitly defined by complex equality and inequality constraints. 

In this article, we introduce OLLA (Overdamped Langevin with LAnding), a suite of projection-free stochastic dynamics that could serve as the foundation for sampling from constrained distributions. OLLA avoids projections by combining two key ideas: (1) relying on local approximations of the feasible set to guide the sampling and (2) introducing a restitution mechanism called ``landing" that guarantees convergence to the feasible set. These techniques build on ideas previously developed in the context of nonconvex optimization, where they have been shown to provide scalable and effective algorithms \citep{JMLR:v23:21-0798,muehlebach2025accelerated,schechtman2023orthogonal}, and share close connection to several powerful constrained sampling approaches in the literature \citep{zhang2022sampling, zhang2024functional, rousset2010free, lelievre2012langevin, lelievre2019hybrid}. 
We adapt and extend the ideas to the sampling setting, which makes OLLA both computationally efficient and theoretically grounded. Our main contributions are summarized as follows:
\begin{itemize}[leftmargin=3mm]
    \item \textbf{(Unified treatment of constraints)} OLLA is described by a stochastic differential equation (SDE) that, in contrast to related prior works, enforces both equality and inequality constraints. This is achieved by constructing the tangent space to the constraint manifold and projecting the overdamped Langevin drift and diffusion terms onto the tangent space resulting in a simple least-squares problem. OLLA recovers the classical equality-only constrained Langevin dynamics as a special case, yet seamlessly accommodates arbitrary smooth inequality constraints without resorting to slack variables or projections.
    \item\textbf{(Exponential convergence)} We prove that the continuous version of OLLA converges to the constrained target distribution $\rho_\Sigma$ at an exponential rate under appropriate regularity assumptions. Our convergence results are non-asymptotic and characterized by the 2-Wasserstein distance in all scenarios (equality constraints only, inequality constraints only, and mixed).
    \item\textbf{(Efficient SDE discretization with trace-estimation)} We introduce OLLA-H, a computationally efficient Euler-Maruyama (EM) discretization of the aforementioned SDE that features a Hutchinson trace estimator \citep{hutchinson1989stochastic} for approximating the Itô-Stratonovich correction term arising from the diffusion. As a result, OLLA-H has low computational cost per iteration, even in high dimensions, and it achieves relatively accurate sampling and empirical mixing, an aspect that we demonstrate in various numerical experiments.
\end{itemize}

\vspace{-10pt}
\section{Related Works}
\vspace{-5pt}
\label{main:sec:Related Works}
We first focus our literature review on recent works that consider Langevin sampling under nonlinear constraints. One of the closest touching points is \cite{zhang2022sampling}, which describes a gradient descent approach on the KL divergence. The algorithm shares some similarities to ours in that projections are avoided, but the work focused on equality constraints only, whereas OLLA covers both equality and inequality constraints. The work \cite{power2024constrained} proposes the use of slack variables to incorporate inequality constraints to change a mixed problem into an equality-only problem, which comes at the cost of additional spurious dimensions. In a similar vein, \cite{zhang2024functional} designs a particle-based variational inference method to incorporate inequality constraints. The method is effective at sampling under inequality constraints only, suffers, however, from high computational cost in high dimensions due to the estimation of associated boundary integrals.

OLLA is inspired by recent methods in nonlinear optimization 
\cite{muehlebach2025accelerated, JMLR:v23:21-0798, schechtman2023orthogonal} that use a similar landing mechanism and avoid projections onto the feasible set. There has also been important work by  \cite{rousset2010free, lelievre2012langevin, lelievre2019hybrid} who introduced a constrained Langevin dynamics based on numerical schemes such as SHAKE, RATTLE \citep{ciccotti1986molecular, ryckaert1977numerical, andersen1983rattle}, and including Metropolis-Hastings corrections \citep{lelievre2019hybrid}. These works mainly focus on equality-only constraints, although inequality constraints can be incorporated via including slack variables or applying reflection at the boundary. We use these algorithms as baselines and refer them to Constrained Langevin (CLangevin) \citep{lelievre2012langevin}, Constrained Hamiltonian Monte Carlo (CHMC) \citep{lelievre2012langevin}, and Constrained generalized Hybrid Monte Carlo (CGHMC) \citep{lelievre2019hybrid}.

In the present work, constraints are handled through a careful decomposition of the stochastic dynamics on the boundary into normal and tangential parts, thereby avoiding projections and even enabling infeasible initialization. There have also been alternative algorithm designs that, however, do not share these features, for example, \citep{bubeck2018sampling, bubeck2015finite} (based on projection), \citep{kook2022sampling, kook2024gaussian} (barrier functions),
\citep{sato2025convergence} (reflections), or  \citep{zhang2020wasserstein,ahn2021efficient, li2022mirror} (mirror maps). Other works by \cite{karagulyan2020penalized, gurbuzbalaban2024penalized} introduce penalties for constraint violations or relax the notion of constraint satisfaction \cite{chamon2024constrained}. 
In addition, we note that, although closely related, constrained sampling is not the same as sampling on manifolds \cite{girolami2011riemann,cheng2022efficient,gatmiry2022convergence,kong2024convergence}.

\section{Preliminaries \& Notations}
\label{main:sec:preliminaries}

We consider sampling from a target density supported on the compact and connected Riemannian submanifold of $\bbR^d$ defined by $\Sigma:= \mbra{x \in \bbR^d \mid h(x) = 0 ,g(x) \leq 0}$ where $h: \bbR^d \rightarrow \bbR^m$ and $g: \bbR^d \rightarrow \bbR^l$ are smooth functions.  To guarantee that all constraint-related constructions are well-posed, we impose the Linear Independence Constraint Qualification (LICQ) condition \citep{rockafellar2009variational}.
\begin{definition} \ 
    The functions $h,g$ satisfy LICQ if 
        $\mbra{\nabla h_1(x), ... \nabla h_m(x)} \cup \mbra{\nabla g_i(x)}_{i \in I_x}$ 
    are linearly independent for every $x \in \Sigma$, where $I_x$ denotes the set of active inequality constraints, i.e., $I_x:=\mbra{i\in [l]~|~g_i(x)\geq 0}$.
\end{definition}

As a result of LICQ, the tangent space of $\Sigma$ at $x \in \Sigma$ can be defined as 
\begin{equation*}
    T_x \Sigma := \mbra{v \in \bbR^d \mid \nabla h(x) v = 0, \nabla g_{i}(x)v =  0, \ \forall i \in I_x}
\end{equation*} and its orthogonal projector onto $T_x \Sigma$ is $\Pi(x) = I - \nabla J(x)^T G(x)^{-1} \nabla J(x)$, where $J(x): = \lbra{h_1(x), ... , h_m(x), g_{i_1}(x)+\epsilon,..., g_{i_{\abs{I_x}}}(x) +\epsilon}^T, \mbra{i_1,...,i_{\abs{I_x}}} = I_x$ denotes the stacking of constraint functions for some $\epsilon > 0$ and $G(x):= \nabla J(x) \nabla J(x)^T$ is its associated Gram matrix.

For any smooth $f:\Sigma \rightarrow\bbR$ and a smooth vector field $X: \Sigma \rightarrow \bbR^d$, the intrinsic gradient and divergence on $\Sigma$ are given by $\nabla_\Sigma f(x) = \Pi(x) \nabla f(x)$, $\divergence_\Sigma X(x) = \trace{\Pi(x) \nabla X(x)}$, where $\nabla$ denotes the usual Euclidean gradient or Jacobian in $\bbR^d$. Our target density is 
set to be $\rho_\Sigma(x) \propto \exp(-f(x)) d\sigma_\Sigma$ on $\Sigma$ with $d \sigma_\Sigma$ being the induced Hausdorff measure on $\Sigma$. We write $\rho_t$, $\tilde{\rho}_t$ be the law of OLLA and the projected process of OLLA onto $\Sigma$ at time $t$. On $\Sigma$, the natural extension of KL divergence and Fisher information take the form:
\begin{equation*}
    \KL^\Sigma (\rho ||\rho_\Sigma) := \int_\Sigma \rho \ln \sbra{\frac{\rho}{\rho_\Sigma}} d\sigma_\Sigma\quad \text{and} \quad I^\Sigma (\rho ||\rho_\Sigma) := \int_\Sigma \rho \norm{\nabla_\Sigma \ln \sbra{\frac{\rho}{\rho_\Sigma}}}_2^2 d\sigma_\Sigma
\end{equation*}

We now streamline notations appearing in \cref{main:sec:main_results}. A complete list of symbols and precise technical definitions are included in \cref{appendix:sec:Table of Key Notations and assumptions}. In particular, let $\pi(x)$ denote the nearest-point (Euclidean) projection onto $\Sigma$, and let $\lambda_{\text{LSI}}$ be the log-Sobolev constant of $(\Sigma, \rho_\Sigma)$. We then assume the existence of the following constants.
\begin{equation*}
    M_h := \sup_{x_0 \in \text{supp}(\rho_0)} \norm{h(x_0)}_2, \quad M_g := \sup_{x_0 \in \text{supp}(\rho_0)} \norm{g(x_0)}_2,
\end{equation*}
over the support of the initial law $\rho_0$. The constant $\kappa$ (\cref{lem:regularity lemma}, \cref{lem:regularity lemma with boundary}) and $\delta$  captures the regularity of $\Sigma$ and $\hat{U}_\delta$ denotes the tubular neighborhood of width $\delta$ with a special ``recovery'' property (see \cref{thm:recoverable tubular neighborhood}, \cref{thm:recoverable tubular neighborhood with boundary} for the precise definitions of $\delta$ and $\hat{U}_\delta$).

\vspace{-5pt}
\section{Main results}
\vspace{-5pt}
\label{main:sec:main_results}

\subsection{Construction of OLLA via Least Squares}
We now derive the continuous-time dynamics of OLLA by choosing the drift vector $q$ and the symmetric diffusion matrix $Q$ to be the closest—in a least-squares sense—to the unconstrained usual Langevin coefficients, subject to enforcing both the equality constraints $\mbra{h_i(x)}_{i=1}^m$ and the active inequality constraints $\mbra{g_j(x)}_{j \in I_x}$. This is achieved by applying It\^o's lemma to each constraint function $h_i$ and $g_j$ and splitting the change in, for example, $h_i$, into a martingale term $\nabla h_i(X_t)^T Q(X_t)dW_t$ and a drift term $\lbra{\nabla h_i(X_t)^T q(X_t) + \frac{1}{2} \trace{\nabla^2 h_i(X_t) Q(X_t) Q(X_t)^T}}dt$. By choosing $Q$ so that $Q(x) \nabla h_i(x) = Q(x) \nabla g_j(x) = 0$, the martingale piece vanishes exactly in the normal directions. Simultaneously, we pick the drift vector $q$ to satisfy the linear equation 
\begin{equation}
    \nabla h_i^T q + \frac{1}{2} \trace{\nabla^2 h_i QQ^T} + \alpha h_i = 0, \quad \nabla g_j^T q + \frac{1}{2} \trace{\nabla^2 g_j QQ^T} + \alpha (g_j +\epsilon) = 0
\end{equation}
so that $h(X_t) = h_i(X_0)e^{-\alpha t}$ and  $g_j(X_t)+\epsilon = (g _j(X_0)+\epsilon) e^{-\alpha t}$, where hyperparameters $\alpha, \epsilon>0$ denote the landing or boundary repulsion rate, respectively. This enforces $g_j(X_t)$ to hit $0$ in finite time $t = \frac{1}{\alpha} \ln ((g_j(X_0)+\epsilon)/\epsilon)$, after which $g_j(X_t) \leq 0$ remains forever.
As a result, this approach removes any noise and drift direction in the normal of constraints and implants a pure drift normal to constraints, guaranteeing exponential decay of both equality and active-inequality constraints at a rate $\alpha$. This yields the closed-form SDE in \cref{prop: construction of OLLA} and \cref{lem:Exponential decay of constraint functions}.

\begin{proposition}[Construction of OLLA and its closed form SDE] \ 
    \label{prop: construction of OLLA}
    Consider the following SDE:
    \begin{equation}
        \label{eqn:mixed_OLLA_premitive_form}
        dX_t = q(X_t)dt + Q(X_t) dW_t
    \end{equation}
    where
    \begin{align}
    Q&:= \argmin\limits_{\bar{Q}\in\bbR^{d\times d}} \norm{\sqrt{2}I-\bar{Q}}_F^2 \quad \text{s.t} \quad     \begin{cases}
        \bar{Q}  \nabla h_i = 0, \ \forall i \in [m],\\
          \bar{Q} \nabla g_j = 0, \ \forall j \in I_x,
    \end{cases} \notag \\
    q&:= \argmin\limits_{\bar{q}\in\bbR^{d}} \norm{\bar{q} + \nabla f}_2^2 \quad \text{s.t} \quad 
    \begin{cases}
        \nabla h_i^T \bar{q} + \frac{1}{2} \trace{\nabla^2h_i QQ^T} + \alpha h_i = 0, \qquad \ \ \  \forall i \in [m], \notag \\
        \nabla g_j^T \bar{q} + \frac{1}{2} \trace{\nabla^2g_j QQ^T} + \alpha (g_j + \epsilon) = 0, \ \ \ \forall j \in I_x.
    \end{cases}
    \end{align}
    Then, there exists a closed form SDE (OLLA) of $\eqref{eqn:mixed_OLLA_premitive_form}$ given by:
    \begin{equation}
        \label{eqn:mixed_OLLA_closed_form}
        dX_t = -[\Pi(X_t) \nabla f(X_t) +\alpha \nabla J(X_t)^T G^{-1}(X_t) J(X_t)]dt + \calH(X_t)dt + \sqrt{2}\Pi(X_t) dW_t
    \end{equation}
    where
    \begin{equation}
        \label{eqn:mean_curvature_term}
        \calH := -\nabla J^T G^{-1} \lbra{\trace{\nabla^2 h_1 \Pi}, ..., \trace{\nabla^2 h_{m} \Pi}, \trace{\nabla^2 g_{i_1} \Pi} ,..., \trace{\nabla^2 g_{i_{\abs{I_x}}} \Pi}}^T
\end{equation} is the associated mean curvature correction term of $\Sigma_{I_x} := \mbra{ x \in \bbR^d \mid h(x) = 0, g_{I_x} (x) = 0}$.
\end{proposition}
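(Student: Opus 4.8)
The plan is to solve the two constrained least-squares problems that define $Q$ and $q$ explicitly, and then read off the drift and diffusion of \eqref{eqn:mixed_OLLA_closed_form}. Everything rests on the fact that, under LICQ, the Gram matrix $G = \nabla J \nabla J^T$ is invertible and $\Pi = I - \nabla J^T G^{-1}\nabla J$ is the orthogonal projector onto $T_x\Sigma$; in particular $\Pi \nabla J^T = \nabla J^T - \nabla J^T G^{-1} G = 0$, so $\Pi$ annihilates every active constraint gradient.

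First I would solve for $Q$. The constraints $\bar Q \nabla h_i = 0$ and $\bar Q \nabla g_j = 0$ are equivalent to $\bar Q \nabla J^T = 0$, i.e. every row of $\bar Q$ (viewed as a column after transposition) is orthogonal to all active constraint gradients and hence lies in $T_x\Sigma$. Since $\norm{\sqrt 2 I - \bar Q}_F^2 = \sum_{k=1}^d \norm{\sqrt 2 e_k - (\bar Q_{k,:})^T}_2^2$ decouples over rows, the minimization reduces to independently projecting each $\sqrt 2 e_k$ onto $T_x\Sigma$, giving $(\bar Q_{k,:})^T = \sqrt 2\, \Pi e_k$ and therefore $Q = \sqrt 2\, \Pi$. (Symmetry of $Q$ is not imposed but emerges from this computation.) Consequently $QQ^T = 2\Pi\Pi^T = 2\Pi$ by idempotence, so $\tfrac12 \trace{\nabla^2 h_i QQ^T} = \trace{\nabla^2 h_i \Pi}$, and likewise for each $g_j$.

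Next I would solve for $q$. Writing the stacked constant vector as $c := c^{\mathrm{curv}} + \alpha J$, where $c^{\mathrm{curv}}$ collects the mean-curvature traces $\trace{\nabla^2 h_i \Pi}, \trace{\nabla^2 g_j \Pi}$ and $\alpha J$ collects the terms $\alpha h_i, \alpha(g_j + \epsilon)$, the feasibility constraints for $\bar q$ become the affine system $\nabla J \bar q = -c$. Substituting $u := \bar q + \nabla f$ turns the problem into minimizing $\norm{u}_2^2$ subject to $\nabla J u = \nabla J \nabla f - c$; since $\nabla J$ has full row rank, the unique minimum-norm solution is $u^\star = \nabla J^T G^{-1}(\nabla J \nabla f - c)$. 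Unwinding and using $I - \nabla J^T G^{-1}\nabla J = \Pi$ gives $q = u^\star - \nabla f = -\Pi \nabla f - \nabla J^T G^{-1} c$. Splitting $c$ yields $q = -\Pi \nabla f - \alpha\, \nabla J^T G^{-1} J - \nabla J^T G^{-1} c^{\mathrm{curv}}$, and the final term is exactly $\calH$ as defined in \eqref{eqn:mean_curvature_term}. Together with the diffusion $Q\,dW_t = \sqrt 2\,\Pi\,dW_t$, this is precisely \eqref{eqn:mixed_OLLA_closed_form}.

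The computations are mostly routine linear algebra, so the genuine points requiring care are: (i) confirming that $\Pi$ is the orthogonal projector onto $T_x\Sigma$ and that $G$ is invertible, both consequences of LICQ; and (ii) verifying that the stationarity / minimum-norm conditions I exploit really deliver global minimizers — this is immediate because each objective is a convex quadratic over a (linear or affine) feasible set, so the first-order conditions are sufficient. The main obstacle I anticipate is bookkeeping rather than conceptual: keeping the stacked Jacobian $\nabla J$, the Gram matrix $G$, and the active set $I_x$ consistent so that the projector identities and the pseudoinverse formula combine cleanly into the stated closed form.
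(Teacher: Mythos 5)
Your proposal is correct and takes essentially the same route as the paper: both solve the two convex quadratic programs in closed form under LICQ, the paper via Lagrange multipliers and you via row-wise orthogonal projection for $Q$ and the minimum-norm (right-pseudoinverse) solution for $q$, which are equivalent KKT computations yielding $Q=\sqrt{2}\Pi$ and $q=-\Pi\nabla f-\alpha\nabla J^T G^{-1}J+\calH$. The only piece you omit is the identification of $\calH$ with the mean curvature/It\^o--Stratonovich correction of $\Sigma_{I_x}$ (which the paper verifies by citing identities from Rousset et al.), but since the proposition defines $\calH$ by the explicit formula \eqref{eqn:mean_curvature_term}, your derivation already establishes the stated closed form.
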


\begin{remark}[Mean curvature = It\^o-Stratonovich correction] \quad By technical stochastic-calculus identities on manifolds (see, e.g., \citet{rousset2010free}, Lemma 3.19), the It\^o-Stratonovich correction arising from the Stratonovich SDE
    \begin{equation}
        \label{eqn:Stratonovich_form_OLLA}
        dX_t = -[\Pi(X_t) \nabla f(X_t) +\alpha \nabla J(X_t)^T G^{-1}(X_t) J(X_t)]dt + \sqrt{2}\Pi(X_t) \circ dW_t 
    \end{equation}
coincides exactly with the mean-curvature term $\calH(x)$ of $\Sigma_{I_x} := \mbra{x \in \bbR^d \mid h(x)=0, g_{I_x} (x) = 0}$.
\end{remark}

\begin{remark}[Relation to orthogonal direction samplers from variational $\KL$] \quad 
    \citet{zhang2022sampling} introduced an overdamped-Langevin sampler for equality constraints by minimizing the constrained KL divergence via an orthogonal-space variational formulation, and \citet{zhang2024functional} also used a similar approach to handle single inequality constraint. In the absence of inequality constraints, our OLLA dynamics coincide with the equality constrained sampler of \cite{zhang2022sampling} up to a modified potential $\hat{f}(x) = f(x) + \frac{1}{2} \ln \det(G)$, since the mean curvature correction satisfies 
    \begin{equation}
        \calH(x) = \divergence \Pi(x) + \Pi(x) \nabla \ln \sbra{ \sbra{\det G(x)}^{\frac{1}{2}}}
    \end{equation} (see \citet{rousset2010free}, Lemma 3.21). Correspondingly, whereas their framework yields a stationary measure proportional to $e^{-f(x)} \delta_\Sigma (dx)$ under the coarea formula, OLLA converges to the Riemannian volume-weighted density $e^{-f(x)} d\sigma_\Sigma(x)$ on $\Sigma$ (see \citet{rousset2010free}, Lemma 3.2). In addition to this, the work \cite{zhang2024functional} enforces a single inequality constraint via a purely deterministic normal drift $-\alpha \nabla g/\norm{\nabla g}_2$, without any stochastic component in the tangential direction. OLLA instead projects noise and drift vectors tangentially even during the landing phase, thereby preserving exploration on the evolving manifold $\Sigma^t := \mbra{x \in \bbR^d \mid h(x) = h(X_0)e^{-\alpha t}}$ and improving mixing. 
\end{remark}

\begin{lemma}[Exponential decay of constraint functions] \ 
    \label{lem:Exponential decay of constraint functions}  
    The dynamics induced by \eqref{eqn:mixed_OLLA_closed_form} satisfy the following properties almost surely for $\forall i \in [m], \forall j \in I_{X_0}$:
    \begin{equation}
        h_i(X_t) = h_i(X_0)e^{-\alpha t}, \quad t\geq0
    \end{equation}
    and
    \begin{equation*}
        \begin{cases}
            \begin{aligned}[t]
            g_j(X_t) &= -\epsilon + (g_j(X_0) + \epsilon) e^{-\alpha t}, \quad && t\leq \frac{1}{\alpha} \ln \sbra{ \frac{g_j(X_0)+\epsilon}{\epsilon}}\\
            g_j(X_t) &\leq 0,\quad && t\geq \frac{1}{\alpha} \ln \sbra{ \frac{g_j(X_0)+\epsilon}{\epsilon}}
            \end{aligned}
        \end{cases}
    \end{equation*}
    with $g(X_t) \leq 0, \forall t \geq 0$ for $j \notin I_{X_0}$, where $I_x := \mbra{k \in [l] \mid g_k(x) \geq 0}$ is the index set of active inequality constraints.
\end{lemma}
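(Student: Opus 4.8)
The plan is to track each constraint function along the flow of \eqref{eqn:mixed_OLLA_closed_form} by It\^o's formula and to use the defining least-squares conditions on $q$ and $Q$ to annihilate the stochastic part and linearize the drift. For a fixed equality constraint, It\^o's lemma applied to $h_i(X_t)$ produces
\begin{equation*}
    dh_i(X_t) = \Big[\nabla h_i^T q + \tfrac12\trace{\nabla^2 h_i\,QQ^T}\Big]\,dt + \nabla h_i^T Q\,dW_t .
\end{equation*}
First I would note that the martingale term vanishes identically: $Q$ is symmetric and satisfies $Q\nabla h_i=0$, so $\nabla h_i^T Q=(Q\nabla h_i)^T=0$. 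By the first line of the linear system defining $q$, the remaining drift coefficient equals $-\alpha h_i$. Hence $dh_i(X_t)=-\alpha h_i(X_t)\,dt$ is a noise-free scalar linear ODE, and integrating gives $h_i(X_t)=h_i(X_0)e^{-\alpha t}$ for all $t\ge 0$.

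Next I would repeat this computation for an inequality constraint $g_j$ active at the start ($j\in I_{X_0}$). As long as $j$ stays active, $Q\nabla g_j=0$ removes the noise and the second line of the defining system gives the drift $-\alpha(g_j+\epsilon)$, so that
\begin{equation*}
    d\big(g_j(X_t)+\epsilon\big)=-\alpha\big(g_j(X_t)+\epsilon\big)\,dt .
\end{equation*}
Solving this yields $g_j(X_t)=-\epsilon+(g_j(X_0)+\epsilon)e^{-\alpha t}$, which decreases monotonically and crosses $0$ exactly at $t^\star=\tfrac1\alpha\ln\!\big((g_j(X_0)+\epsilon)/\epsilon\big)$, establishing the first branch of the claim and identifying the stated hitting time.

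The step I expect to carry the real weight is proving $g_j(X_t)\le 0$ for all $t\ge t^\star$, since the active index set $I_{X_t}$ -- and with it the coefficients $\Pi, J, G, q, Q$ -- switches discontinuously the instant $g_j$ leaves $0$, so the clean ODE above ceases to govern the motion once $g_j<0$. I would handle this by a purely pathwise invariance argument rather than local-time machinery. Suppose, for contradiction, that $g_j(X_s)>0$ for some $s>t^\star$, and set $u:=\sup\{t<s:\ g_j(X_t)=0\}$; by continuity $g_j(X_u)=0$ and $g_j(X_t)>0$ on the whole interval $(u,s]$. But $g_j(X_t)>0$ forces $j\in I_{X_t}$ throughout $(u,s]$, so on this interval the deterministic decay law applies and yields $g_j(X_s)=-\epsilon+\epsilon\,e^{-\alpha(s-u)}<0$, contradicting $g_j(X_s)>0$. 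Hence $\{g_j\le 0\}$ is forward invariant, which simultaneously gives the second branch ($g_j(X_t)\le 0$ for $t\ge t^\star$) and, applied from $t=0$, the statement for constraints inactive at initialization ($j\notin I_{X_0}$, where $g_j(X_0)<0$ and the same argument shows $g_j(X_t)\le 0$ for all $t\ge 0$).

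The one point demanding additional care -- and the place I would expect a referee to probe -- is the well-posedness of the switched dynamics underlying these manipulations: the coefficients are discontinuous across each surface $\{g_j=0\}$, so one must justify that $t\mapsto g_j(X_t)$ is a bona fide continuous semimartingale to which It\^o's formula applies on active intervals, and that the crossing of $\{g_j=0\}$ is instantaneous (the strictly inward, noiseless drift $-\alpha\epsilon<0$ at the boundary prevents the process from lingering there). Granting existence and uniqueness of the solution -- or constructing it by concatenating the dynamics across successive switching times -- the It\^o computation and the pathwise contradiction above are then routine, and the three cases of the lemma follow.
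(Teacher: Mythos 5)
Your proposal is correct, and its core computation is essentially the paper's: the paper applies the Stratonovich chain rule to the closed form \eqref{eqn:mixed_OLLA_closed_form}, where $\nabla h_k^T \Pi = 0$ kills both the projected drift and the noise and the landing term yields $dh_k = -\alpha h_k\,dt$ (and $d g_k = -\alpha(g_k+\epsilon)\,dt$ for active $k$), whereas you work in the It\^o/primitive form \eqref{eqn:mixed_OLLA_premitive_form} and invoke the defining least-squares conditions on $q$ and $Q$ directly; by \cref{prop: construction of OLLA} these are the same SDE, so the two derivations are interchangeable. The genuine difference is in the forward-invariance step: the paper disposes of $g_j(X_t)\le 0$ after the hitting time with a one-line assertion that the process is ``instantaneously reflected into the interior'' at $\partial\Sigma$, while you give a pathwise first-crossing contradiction (take $u=\sup\{t<s: g_j(X_t)=0\}$, note $j$ is active on $(u,s]$, and solve the ODE from $g_j(X_u)=0$ to force $g_j(X_s)<0$), which is strictly more rigorous and also covers the $j\notin I_{X_0}$ case by the same invariance of $\{g_j\le 0\}$. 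Your closing caveat about well-posedness of the switched dynamics (discontinuous coefficients across $\{g_j=0\}$) identifies a gap that the paper itself leaves unaddressed, so it is not a deficiency of your argument relative to the paper's; on balance your treatment of the inequality branch is the stronger of the two.
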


\subsection{Non-asymptotic Convergence Analysis of OLLA}
\label{main:sec:Non-asymptotic Convergence Analysis of OLLA}
In this subsection, we establish non-asymptotic convergence guarantees for OLLA in three scenarios -- equality only case, inequality only case, mixed case -- by recognizing that OLLA has a rapid landing property driven by landing rate $\alpha$ and natural mixing on the landed manifold induced by the LSI constant. For the detailed notations and proofs, we refer to the \cref{appendix:sec:Table of Key Notations and assumptions} and related Appendices.

\textbf{Equality-Only Scenario.} \quad When the constraints consist of only smooth equalities $h(x) =0$, the continuous-time OLLA dynamics \eqref{eqn:mixed_OLLA_closed_form} can be written as
\begin{equation}
    \label{eqn:equality_constrained_OLLA}
    dX_t = -\lbra{\Pi(X_t) \nabla f(X_t) +\alpha \nabla h(X_t)^T G^{-1}(X_t) h(X_t)}dt + \sqrt{2}\Pi(X_t) \circ dW_t
\end{equation}
The drift term naturally decomposes into a tangential term, which moves along the constraint surface, and a normal landing term, which forces each coordinate $h_i(X_t)$ to decay exponentially fast, as summarized in \cref{lem:Exponential decay of constraint functions}. 

Once $X_t \sim \rho_t$ lies within a tubular neighborhood of $\Sigma$, the nearest projection map $\pi$ onto $\Sigma$ becomes available, and the projected process $Y_t= \pi(X_t) \sim \tilde{\rho}_t$ can be defined (\cref{thm:recoverable tubular neighborhood}). Then, the regularity of $\Sigma$ naturally implies that $\norm{X_t - Y_t}_2 \lesssim \norm{h(X_t)}_2   = \calO(e^{-\alpha t})$ holds (\cref{lem:regularity lemma}), leading to the contraction of $W_2(\rho_t, \tilde{\rho}_t) = \calO (e^{-\alpha t})$ (\cref{lem:bound of W_2_eq_only}), where $\rho_t, \tilde{\rho}_t$ are the laws of $X_t, Y_t$, respectively. Furthermore, the combination of Lipschitzness of $\pi$ and $\norm{X_t-Y_t}_2 = \calO(e^{-\alpha t})$ enable us to write the projected process $Y_t$ as overdamped Langevin dynamics on $\Sigma$ with noisy drift vector and diffusion matrix whose norm is bounded by $\calO(e^{-\alpha t})$ (\cref{cor:SDE representation of projected process from equality-constrained OLLA}). Therefore, the effect of noisy terms can be dominated by the effect of the LSI constant, which leads to the exponentially fast convergence of $\KL^\Sigma(\tilde{\rho}_t || \rho_\Sigma)$ (\cref{lem:Upper bound of KL_Sigma}). A rigorous combination of these insights gives the following theorem.

\begin{theorem}[Convergence result for equality-constrained OLLA] \ 
    \label{thm:Convergence result for equality-constrained OLLA}
    Suppose assumptions \ref{asm:C1} to \ref{asm:C4} hold. Let $X_t$ be the stochastic process following the equality-constrained OLLA and let $\rho_t, \tilde{\rho}_t$ be the law of $X_t$ and its projection $Y_t = \pi(X_t)$ on $\Sigma$ for $t \geq t_{\text{cut}}$, $t_{\text{cut}} := \max \mbra{\frac{1}{\alpha} \ln \delta, \frac{1}{\alpha} \ln C_5}$, respectively.
    Then, for $\alpha > 2 \lambda_{\text{LSI}}$ for all $t \geq t_{\text{cut}}$, it holds that
    \begin{align*}
        W_2(\rho_t, \rho_\Sigma) \leq \frac{M_h}{\kappa}e^{-\alpha t} + \sqrt{\frac{2}{\lambda_{\text{LSI}}} \KL^\Sigma (\tilde{\rho}_t || \rho_\Sigma)} 
    \end{align*}
    where 
    \begin{equation*}
        \KL^\Sigma (\tilde{\rho}_t || \rho_\Sigma) \leq \exp \sbra{-2 \lambda_{\text{LSI}} (t-t_{\text{cut}}) - \frac{2\lambda_{\text{LSI}}C_5}{\alpha}(e^{-\alpha t} -e^{-\alpha t_{\text{cut}}}) }[\KL^\Sigma (\tilde{\rho}_{t_{\text{cut}}} || \rho_\Sigma) + C_7]
    \end{equation*}
    for some constants $C_5=\calO \sbra{1+ \frac{C_{L_A}M_h}{\kappa} +\sbra{\frac{C_{L_A}M_h}{\kappa}}^2},  C_7 := \frac{C_6 e^{-\alpha t_{\text{cut}}}}{\alpha - 2\lambda_{\text{LSI}}} > 0$ with $C_{L_A}$ being the Lipschitz constant of $\nabla \pi(x) \Pi(x) $ on $\hat{U}_\delta(\Sigma)$.
\end{theorem}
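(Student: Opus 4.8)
The plan is to bound $W_2(\rho_t,\rho_\Sigma)$ by inserting the projected law $\tilde{\rho}_t$ and applying the triangle inequality,
\begin{equation*}
    W_2(\rho_t, \rho_\Sigma) \leq W_2(\rho_t, \tilde{\rho}_t) + W_2(\tilde{\rho}_t, \rho_\Sigma),
\end{equation*}
and then to control the two pieces by entirely different mechanisms: the first by the rapid exponential landing of the constraint residual, the second by the intrinsic mixing on $\Sigma$ governed by $\lambda_{\text{LSI}}$. These two reductions produce precisely the two summands in the claimed bound.

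For the first term I would use the pair $(X_t, Y_t)$ with $Y_t = \pi(X_t)$ as an explicit coupling of $\rho_t$ and $\tilde{\rho}_t$, so that $W_2^2(\rho_t, \tilde{\rho}_t) \leq \mean{\norm{X_t - \pi(X_t)}_2^2}$. The regularity of $\Sigma$ (\cref{lem:regularity lemma}) bounds the distance to the manifold by the constraint residual, $\norm{X_t - \pi(X_t)}_2 \leq \kappa^{-1}\norm{h(X_t)}_2$, while \cref{lem:Exponential decay of constraint functions} gives $\norm{h(X_t)}_2 = e^{-\alpha t}\norm{h(X_0)}_2 \leq M_h e^{-\alpha t}$ almost surely. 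Combining these yields $W_2(\rho_t, \tilde{\rho}_t) \leq (M_h/\kappa) e^{-\alpha t}$. For the second term I would invoke the standard fact that a log-Sobolev inequality implies the Talagrand $T_2$ transport inequality with the same constant, so that $W_2(\tilde{\rho}_t, \rho_\Sigma) \leq \sqrt{(2/\lambda_{\text{LSI}})\,\KL^\Sigma(\tilde{\rho}_t\|\rho_\Sigma)}$.

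It then remains to establish the stated bound on $\KL^\Sigma(\tilde{\rho}_t\|\rho_\Sigma)$, which is the crux. Here I would start from the SDE representation of the projected process (\cref{cor:SDE representation of projected process from equality-constrained OLLA}): $Y_t$ solves an overdamped Langevin SDE on $\Sigma$ targeting $\rho_\Sigma$, but with an extra drift perturbation and a diffusion perturbation whose magnitudes are $\calO(e^{-\alpha t})$, a consequence of the Lipschitzness of $\nabla\pi\,\Pi$ (constant $C_{L_A}$) together with $\norm{X_t - Y_t}_2 = \calO(e^{-\alpha t})$. Differentiating the relative entropy along the induced Fokker--Planck flow (de Bruijn identity) produces the dissipation $-I^\Sigma(\tilde{\rho}_t\|\rho_\Sigma)$ plus cross terms coming from the perturbations. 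The key estimate is to control these cross terms by Cauchy--Schwarz and Young's inequality, absorbing part of the Fisher information and leaving a multiplicative error proportional to $e^{-\alpha t}$ and an additive error of the same order; this is where $C_5$ and $C_6$ enter (\cref{lem:Upper bound of KL_Sigma}). Applying the log-Sobolev inequality $I^\Sigma \geq 2\lambda_{\text{LSI}}\KL^\Sigma$ then yields the differential inequality
\begin{equation*}
    \frac{d}{dt}\KL^\Sigma(\tilde{\rho}_t\|\rho_\Sigma) \leq -2\lambda_{\text{LSI}}\sbra{1 - C_5 e^{-\alpha t}}\KL^\Sigma(\tilde{\rho}_t\|\rho_\Sigma) + C_6 e^{-\alpha t}.
\end{equation*}
I expect this step---cleanly matching every perturbation term against the available Fisher-information dissipation without losing the $e^{-\alpha t}$ rate---to be the main technical obstacle, since it requires the tubular-neighborhood geometry and the precise projected SDE to interact correctly.

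Finally I would integrate this inequality by Gr\"onwall's lemma starting at $t_{\text{cut}}$. The integrating factor is $\exp\sbra{2\lambda_{\text{LSI}}(t - t_{\text{cut}}) + \tfrac{2\lambda_{\text{LSI}}C_5}{\alpha}(e^{-\alpha t} - e^{-\alpha t_{\text{cut}}})}$, whose reciprocal is exactly the exponential prefactor claimed; since $e^{-\alpha s}-e^{-\alpha t_{\text{cut}}}\le 0$ for $s\ge t_{\text{cut}}$, the multiplicative correction is harmless, and the additive forcing $C_6 e^{-\alpha s}$ integrates against $e^{2\lambda_{\text{LSI}}(s-t_{\text{cut}})}$ to the constant $C_7 = C_6 e^{-\alpha t_{\text{cut}}}/(\alpha - 2\lambda_{\text{LSI}})$. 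The hypothesis $\alpha > 2\lambda_{\text{LSI}}$ is precisely what makes this integral finite and $C_7$ positive. Substituting the two $W_2$ bounds back into the triangle inequality completes the proof.
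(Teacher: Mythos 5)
Your proposal is correct and follows essentially the same route as the paper's proof: the triangle-inequality split with the coupling $(X_t,\pi(X_t))$ and \cref{lem:regularity lemma} for the landing term, the Talagrand inequality (\cref{lem:LSI implies Talagrand inequality}) for the on-manifold term, and the entropy-dissipation argument with the perturbed projected SDE (\cref{cor:SDE representation of projected process from equality-constrained OLLA}, \cref{lem:Upper bound of KL_Sigma}) leading to the same differential inequality and Gr\"onwall integration. The only cosmetic omission is that the paper first notes $W_2(\tilde{\rho}_t,\rho_\Sigma)\leq W_2^\Sigma(\tilde{\rho}_t,\rho_\Sigma)$ (Euclidean versus geodesic cost, via Hopf--Rinow completeness) before invoking Talagrand, a step your argument implicitly uses but does not state.
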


\textbf{Inequality-Only Scenario.} \quad With smooth inequalities $g(x) \leq 0$, we introduced a small boundary repulsion parameter $\epsilon >0$ so that each initially violated constraint $g_j >0$ is driven to the boundary within a finite landing time $t_{\text{cut}} = \frac{1}{\alpha} \ln((g_j(X_0)+\epsilon)/\epsilon)$. 

From the Fokker-Planck perspective, the normal probability flux at the boundary satisfies $\inner{n, J_t} = -\alpha \epsilon \rho_t$ where $J_t$ is the probability current density of $\rho_t$ and $n$ is the outward unit normal vector of $\Sigma$. The boundary repulsion enforces the outward probability flow to become zero after the landing time $t_{\text{cut}}$, and the normal probability flux vanishes to zero after time $t_{\text{cut}}$ (\cref{lem:Boundary behaviors of rho_t of inequality-constrained OLLA}). This enables us to ignore the boundary integral appearing on the time derivative of $\KL(\rho_t ||\rho_\Sigma)$ guaranteeing exponential decay driven by the effect of the LSI constant. Therefore, the following theorem comes by directly analyzing the time derivative of $\KL(\rho_t ||\rho_\Sigma)$ after time $t_{\text{cut}}$.

\begin{theorem}[Convergence result for inequality-constrained OLLA] \ 
    \label{thm:Convergence result for inequality-constrained OLLA}
    Suppose assumptions \ref{asm:C1} to \ref{asm:C4} hold. Let $X_t$ be the stochastic process following the inequality-constrained OLLA 
    and let $\rho_t$ be the law of $X_t$. Then, for $t\geq t_{\text{cut}}, t_{\text{cut}}: = \frac{1}{\alpha} \ln \sbra{\frac{M_g+\epsilon}{\epsilon}}$,
    \begin{align*}
        W_2(\rho_t, \rho_\Sigma) \leq  \sqrt{\frac{2}{\lambda_{\text{LSI}}} \KL^\Sigma (\rho_t || \rho_\Sigma)} 
    \end{align*}
    where 
    \begin{equation*}
        \KL^\Sigma(\rho_t ||\rho_\Sigma) \leq e^{-2 \lambda_{\text{LSI}} (t-t_{\text{cut}})}
        \KL^\Sigma(\rho_{t_{\text{cut}}} || \rho_\Sigma).
    \end{equation*}
\end{theorem}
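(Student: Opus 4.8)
The plan is to treat the statement as two separate claims: the Talagrand-type transport bound (first display) and the exponential decay of the relative entropy (second display). The first is a direct consequence of the log-Sobolev inequality on $(\Sigma, \rho_\Sigma)$ with constant $\lambda_{\text{LSI}}$. I would invoke the Otto--Villani theorem, which upgrades an LSI into the quadratic transport ($T_2$) inequality $W_2(\rho, \rho_\Sigma)^2 \leq \frac{2}{\lambda_{\text{LSI}}}\KL^\Sigma(\rho || \rho_\Sigma)$ for every probability density $\rho$ absolutely continuous with respect to $\rho_\Sigma$; applying it to $\rho = \rho_t$ and taking square roots gives the first inequality. Because the inequality-only feasible set is full-dimensional, $\rho_t$ genuinely lives on $\Sigma$ and no projection step is needed, so this part is clean.

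The second claim is the core of the proof and rests on an entropy-dissipation (de Bruijn) identity. After the landing time $t_{\text{cut}}$ all mass is feasible, and in the interior of $\Sigma$ (where no inequality is active, so $\Pi = I$ and $J$ is empty) the OLLA SDE reduces to the ordinary overdamped Langevin equation $dX_t = -\nabla f(X_t)\,dt + \sqrt{2}\,dW_t$, whose stationary law is precisely $\rho_\Sigma \propto e^{-f}$. I would write the associated Fokker--Planck equation in divergence form, $\partial_t \rho_t = \nabla \cdot \big(\rho_t \nabla \ln(\rho_t/\rho_\Sigma)\big)$, identify the probability current $J_t = -\rho_t \nabla \ln(\rho_t/\rho_\Sigma)$, differentiate $\KL^\Sigma(\rho_t || \rho_\Sigma)$ under the integral (the mass-conservation term drops since $\int_\Sigma \rho_t = 1$), and integrate by parts via the divergence theorem on the manifold-with-boundary $\Sigma$. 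This yields a bulk term equal to $-I^\Sigma(\rho_t || \rho_\Sigma)$ together with a boundary contribution $-\int_{\partial\Sigma} \inner{n, J_t}\ln(\rho_t/\rho_\Sigma)\,d\sigma$, with $n$ the outward unit normal.

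The decisive step is to eliminate the boundary term. Here I would invoke the boundary-flux lemma (\cref{lem:Boundary behaviors of rho_t of inequality-constrained OLLA}): the repulsion parameter forces $\inner{n, J_t} = -\alpha \epsilon \rho_t$ during landing and, crucially, $\inner{n, J_t} = 0$ for all $t \geq t_{\text{cut}}$, i.e.\ a no-flux (reflecting) boundary condition once landing has completed. This annihilates the boundary integral and leaves the clean dissipation identity $\frac{d}{dt}\KL^\Sigma(\rho_t || \rho_\Sigma) = -I^\Sigma(\rho_t || \rho_\Sigma)$. Feeding in the log-Sobolev inequality $I^\Sigma \geq 2\lambda_{\text{LSI}}\KL^\Sigma$ produces the differential inequality $\frac{d}{dt}\KL^\Sigma(\rho_t || \rho_\Sigma) \leq -2\lambda_{\text{LSI}}\KL^\Sigma(\rho_t || \rho_\Sigma)$, and Grönwall's lemma integrated from $t_{\text{cut}}$ to $t$ delivers the stated exponential bound.

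I expect the main obstacle to be the rigorous justification of the dissipation identity near $\partial\Sigma$: the divergence theorem is applied on a domain with boundary, the log-ratio $\ln(\rho_t/\rho_\Sigma)$ may blow up or $\rho_t$ may degenerate where constraints are active, and one must ensure that the active-set switching of $\Pi$ as trajectories touch and leave the boundary introduces no extra distributional source terms in the Fokker--Planck equation. I would control these issues using the regularity of $\Sigma$ (captured by $\kappa$) together with a limiting argument over interior subdomains $\Sigma_\eta := \{x \in \Sigma : \dist(x, \partial\Sigma) \geq \eta\}$, establishing the identity on each $\Sigma_\eta$ and passing $\eta \to 0$, where the no-flux condition guarantees the boundary integrand vanishes in the limit.
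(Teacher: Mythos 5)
Your proposal is correct and follows essentially the same route as the paper: the Talagrand/Otto--Villani inequality (the paper's \cref{lem:LSI implies Talagrand inequality}) yields the $W_2$ bound, and the entropy-dissipation computation with the boundary term annihilated by \cref{lem:Boundary behaviors of rho_t of inequality-constrained OLLA}, followed by LSI and Gr\"onwall, yields the exponential decay of $\KL^\Sigma$. The only addition is your proposed interior-exhaustion argument ($\Sigma_\eta$, $\eta \to 0$) for justifying the integration by parts, which the paper omits by applying the divergence theorem directly.
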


\begin{wrapfigure}{r}{0.43\textwidth}
    \centering
    \includegraphics[width=1.0\linewidth, trim= 0 50 0 50, clip]{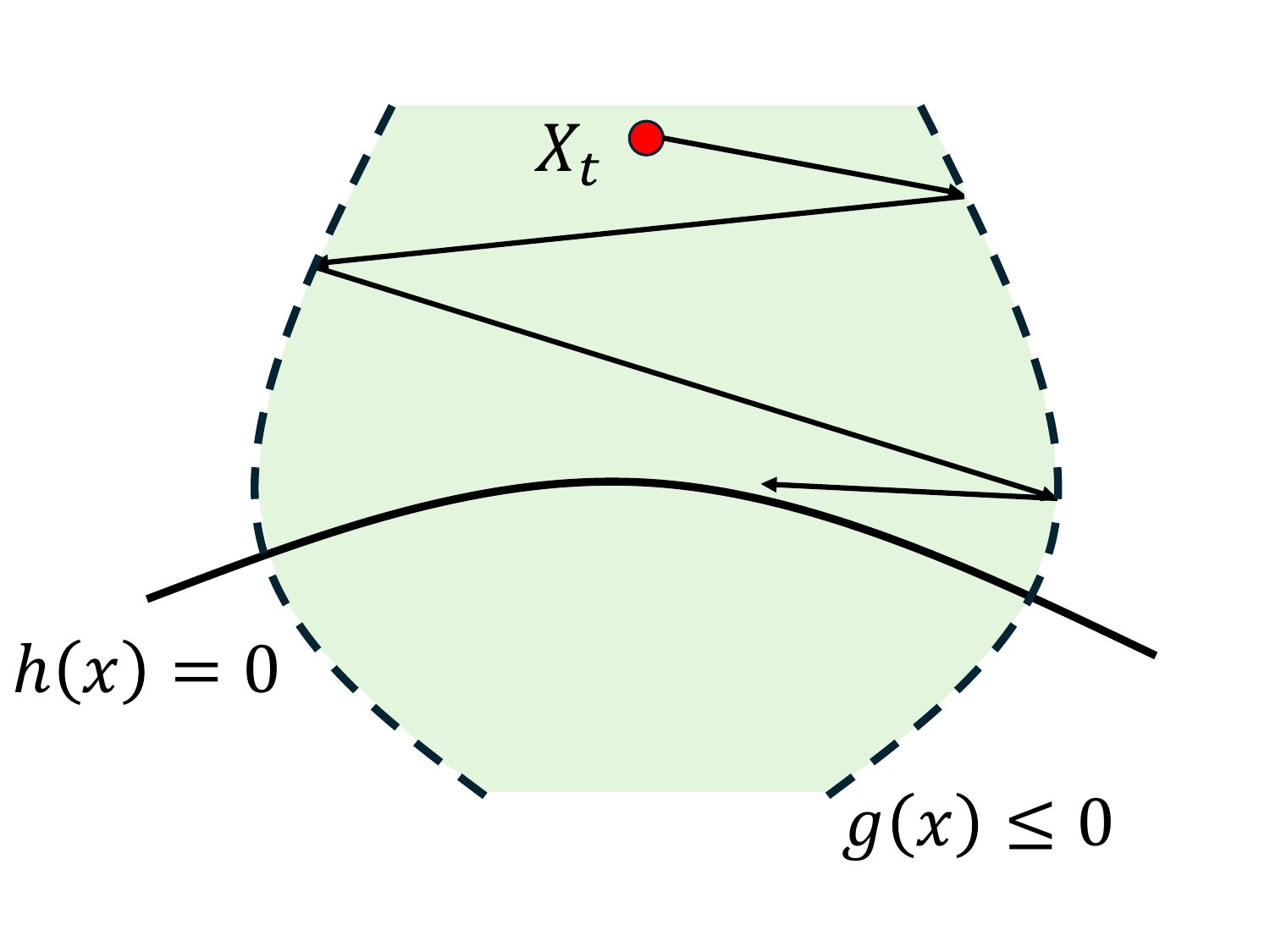}
    \vspace{-12pt}
    \caption{OLLA trajectory in mixed case}
    \label{fig:OLLA trajectory in mixed case}
    \vspace{-20pt}
\end{wrapfigure}

\textbf{Mixed Scenario.} \quad In the mixed setting, OLLA's dynamics are sensitive to the boundary repulsion from $g$. It is noteworthy that $\Sigma$ remains unchanged for different inequality functions $g$, as long as the boundary of the feasible set, where $g(x) =0$ intersects $h(x)=0$, is identical. Nevertheless, the choice of function $g$ affects the landing dynamics, which in turn can alter the convergence rate of OLLA.

To quantify this, we define the projected manifold $\Sigma_p := \pi(\mbra{ x \in \bbR^d \mid h(x) = p, g(x) \leq 0})$ and assume the norm of boundary velocity $v_p^b$ of $\partial \Sigma_p$ is regulated by $V \norm{p}_2^\beta$ for some $V, \beta>0$ (Assumption \ref{asm:M2}). Additionally, we assume $\Sigma_p$ lies inside $\text{int}(\Sigma)$ for $0 < \norm{p}_{2} < \delta$ to avoid stopping behavior of $Y_t$ on $\partial \Sigma$ (Assumption \ref{asm:M1}). Under these assumptions, the trajectory of $X_t$ can be illustrated as in \cref{fig:OLLA trajectory in mixed case} and the proof ideas of equality-only and inequality-only can be seamlessly combined. The following theorem arises from a rigorous integration of previous high-level ideas. We also provide a theorem with a relaxed version of Assumption \ref{asm:M1} in the appendix; see  \cref{remark:Comments on the assumptions} (when $X_0 \sim \delta(x_0)$), \cref{remark:Relaxed assumption of M1}, and \cref{cor:Convergence result for mixed-constrained OLLA mild assumption} for the details.

\begin{theorem}[Convergence result for mixed-constrained OLLA] \ 
    \label{thm:Convergence result for mixed-constrained OLLA}
     Suppose assumptions \ref{asm:C1} to \ref{asm:C4} and \ref{asm:M1} to \ref{asm:M3} hold. Define $X_t$ to be the stochastic process following OLLA dynamics (\ref{eqn:mixed_OLLA_closed_form}) and $\tilde{\rho}_t$ be the law of $Y_t := \pi(X_t)$ after $t \geq t_{\text{cut}}, t_{\text{cut}}:= \max \mbra{\frac{1}{\alpha} \ln \sbra{\frac{M_g +\epsilon}{\epsilon}}, \frac{1}{\alpha} \ln \sbra{\frac{M_h}{\delta}}, \frac{1}{\alpha} \ln (\tilde{C_5})}$.
    Then, for $\alpha >  2\lambda_{\text{LSI}}$ and $\beta \geq 1$, the following non-asymptotic convergence rate of $W_2(\rho_t, \rho_\Sigma)$ can be obtained as follows
    \begin{align*}
        W_2(\rho_t, \rho_\Sigma) \leq \frac{M_h}{\kappa}e^{-\alpha t} + \sqrt{\frac{2}{\lambda_{\text{LSI}}} \KL^\Sigma (\tilde{\rho}_t || \rho_\Sigma)} 
    \end{align*}
    where 
    \begin{equation*}
        \KL^\Sigma (\tilde{\rho}_t ||\rho_\Sigma) \leq \exp \sbra{-2\lambda_{\text{LSI}} (t-t_{\text{cut}}) - \frac{2\lambda_{\text{LSI}} \tilde{C}_5}{\alpha}(e^{-\alpha t} - e^{-\alpha t_{\text{cut}}})} [\KL^\Sigma (\tilde{\rho}_{t_{\text{cut}}} || \rho_\Sigma) + \tilde{C}_7 + \tilde{C}_8] 
    \end{equation*}
    for some constants $G_4, G_5, G_6, \tilde{C}_6 > 0$, $\tilde{C_7}:= (\tilde{C}_6 + \alpha G_4 G_5 M_h) \frac{e^{-\alpha t_{\text{cut}}}}{\alpha - 2\lambda_{\text{LSI}}}$,
    \begin{equation*}
        \tilde{C}_5=\calO \sbra{1+ \frac{\tilde{C}_{L_A}M_h}{\kappa} +\sbra{\frac{\tilde{C}_{L_A}M_h}{\kappa}}^2}, \quad \tilde{C_8}:= (G_6 V M_h^\beta) \frac{e^{-\alpha \beta t_{\text{cut}}}}{\alpha \beta - 2\lambda_{\text{LSI}}},
\end{equation*}
and with $\tilde{C}_{L_A}$ being the Lipschitz constant of $\nabla \pi(x) \Pi(x) $ on $\hat{U}_\delta(\Sigma)$.
\end{theorem}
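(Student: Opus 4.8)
The plan is to combine the tubular-neighborhood/projection machinery of the equality-only analysis (\cref{thm:Convergence result for equality-constrained OLLA}) with the boundary-flux control of the inequality-only analysis (\cref{thm:Convergence result for inequality-constrained OLLA}), accounting for the extra effect of the moving feasible boundary $\partial\Sigma_p$ through Assumption \ref{asm:M2}. First I would restrict attention to $t \geq t_{\text{cut}}$, where the three terms composing $t_{\text{cut}}$ simultaneously guarantee that (i) every initially-violated inequality has already landed on $\{g_j=0\}$ (the $\frac{1}{\alpha}\ln((M_g+\epsilon)/\epsilon)$ term, via \cref{lem:Exponential decay of constraint functions}), (ii) $X_t$ has entered the recoverable tubular neighborhood $\hat U_\delta$ so that the nearest-point projection $\pi$, and hence $Y_t=\pi(X_t)$, is well-defined (the $\frac{1}{\alpha}\ln(M_h/\delta)$ term, via \cref{thm:recoverable tubular neighborhood with boundary}), and (iii) a technical smallness threshold $\tilde C_5$ is met.

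Next I would establish the Wasserstein split. By the boundary version of the regularity lemma the displacement is controlled by the equality residual, $\norm{X_t-Y_t}_2 \leq \tfrac{1}{\kappa}\norm{h(X_t)}_2 = \tfrac{M_h}{\kappa}e^{-\alpha t}$, so coupling $\rho_t$ and $\tilde\rho_t$ through $\pi$ yields $W_2(\rho_t,\tilde\rho_t)\leq \tfrac{M_h}{\kappa}e^{-\alpha t}$. Combining this with Talagrand's inequality (a consequence of the log-Sobolev inequality on $(\Sigma,\rho_\Sigma)$), namely $W_2(\tilde\rho_t,\rho_\Sigma)\leq \sqrt{\tfrac{2}{\lambda_{\text{LSI}}}\,\KL^\Sigma(\tilde\rho_t\|\rho_\Sigma)}$, and applying the triangle inequality delivers the stated outer bound. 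It then remains to bound $\KL^\Sigma(\tilde\rho_t\|\rho_\Sigma)$.

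For the KL bound I would apply It\^o's lemma to $Y_t=\pi(X_t)$ to represent the projected process as overdamped Langevin dynamics targeting $\rho_\Sigma$ on $\Sigma$, perturbed by drift and diffusion of magnitude $\calO(e^{-\alpha t})$ (as in \cref{cor:SDE representation of projected process from equality-constrained OLLA}, with the Lipschitz constant $\tilde C_{L_A}$ of $\nabla\pi\,\Pi$ entering $\tilde C_5$). Differentiating $\KL^\Sigma(\tilde\rho_t\|\rho_\Sigma)$ along the associated Fokker--Planck flow produces the dissipation $-I^\Sigma(\tilde\rho_t\|\rho_\Sigma)$, a multiplicative $\calO(e^{-\alpha t})$ perturbation of this dissipation rate (the origin of the $\tilde C_5$ factor in the exponent), an additive $\calO(e^{-\alpha t})$ forcing (the origin of $\tilde C_7$), and---because $\Sigma$ now carries a boundary---a boundary integral over $\partial\Sigma$. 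Here Assumption \ref{asm:M2} is invoked: during landing the effective support is $\Sigma_p$ with $p=h(X_t)$, so $\partial\Sigma_p$ moves with velocity bounded by $V\norm{p}_2^\beta = V\,\calO(e^{-\alpha\beta t})$, while Assumption \ref{asm:M1} guarantees $Y_t$ never stalls on $\partial\Sigma$. This bounds the boundary contribution by $G_6 V M_h^\beta e^{-\alpha\beta t}$, the source of $\tilde C_8$. Applying the log-Sobolev inequality $I^\Sigma \geq 2\lambda_{\text{LSI}}\KL^\Sigma$ yields a differential inequality of the form $\tfrac{d}{dt}\KL^\Sigma \leq -2\lambda_{\text{LSI}}(1-\tilde C_5 e^{-\alpha t})\KL^\Sigma + c_1 e^{-\alpha t} + c_2 e^{-\alpha\beta t}$, whose Gr\"onwall integration---using $\alpha>2\lambda_{\text{LSI}}$ and $\beta\geq 1$ to keep $\alpha\beta-2\lambda_{\text{LSI}}>0$ and both forcing terms integrable against the decaying exponential---reproduces exactly the claimed bound with constants $\tilde C_5,\tilde C_7,\tilde C_8$.

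The main obstacle I anticipate is the rigorous handling of the moving boundary $\partial\Sigma_p$ in the KL dissipation computation. In the equality-only case the projected process lives on a fixed manifold and the integration by parts produces no boundary term; here, during landing $Y_t$ effectively evolves on the family $\Sigma_p$ shrinking toward $\Sigma$, whose boundary sweeps through space, so an extra boundary flux appears that must be shown to decay like $e^{-\alpha\beta t}$ rather than corrupting the exponential rate. Controlling this flux requires carefully tying the \emph{geometric} boundary-velocity bound of Assumption \ref{asm:M2} to the \emph{analytic} time-derivative of the KL functional, and verifying (via Assumption \ref{asm:M1}) that no reflecting or stopping behavior of $Y_t$ on $\partial\Sigma$ invalidates the smooth SDE representation. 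This interface between the inequality-type boundary analysis and the equality-type projection analysis is the technically delicate heart of the argument.
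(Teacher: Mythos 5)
Your plan follows the paper's proof essentially step for step: the same $W_2$ decomposition (projection displacement bound $\tfrac{M_h}{\kappa}e^{-\alpha t}$ via \cref{lem:regularity lemma with boundary}, plus Talagrand from LSI and the triangle inequality), the same projected-SDE representation with $\calO(e^{-\alpha t})$ drift/diffusion perturbations (\cref{cor:SDE representation of projected process from mixed constrained OLLA}), the same differential inequality for $\KL^\Sigma$, and the same Gr\"onwall integration under $\alpha > 2\lambda_{\text{LSI}}$, $\beta \geq 1$. Your reading of the three terms in $t_{\text{cut}}$ and of the roles of \ref{asm:M1} and \ref{asm:M2} is also correct.

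There is, however, one concrete gap in your boundary bookkeeping. Differentiating $\KL^\Sigma(\tilde{\rho}_t \,||\, \rho_\Sigma)$ over the moving domain $\Sigma_t = \pi(\mbra{x \mid h(x) = h(X_0)e^{-\alpha t},\, g(x) \leq 0})$ produces \emph{two} distinct boundary contributions, not one: (a) the Leibniz transport term $\int_{\partial\Sigma_t} \tilde{\rho}_t \lbra{\ln(\tilde{\rho}_t/\rho_\Sigma) - 1} \inner{v_t^b, n_t}\, d\sigma_{\partial\Sigma_t}$, which is the one you identify and which \ref{asm:M2} bounds by $G_6 V M_h^\beta e^{-\alpha\beta t}$ (the source of $\tilde{C}_8$); and (b) the flux term created when the Fokker--Planck divergence is integrated by parts on a manifold \emph{with boundary}, namely $\int_{\partial\Sigma_t} \inner{\,\cdot\,, n_t} \ln(\tilde{\rho}_t/\rho_\Sigma)\, d\sigma_{\partial\Sigma_t}$. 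Term (b) is not controlled by the boundary velocity at all. In the paper's \cref{lem:Upper bound of KL_Sigma_mixed}, its diffusion part is killed by a tangency argument --- the vectors $f_k + \delta_k = \nabla\pi(\zeta)\Pi(\zeta)e_k$ are tangent to $\partial\Sigma_t$, so $\inner{f_k + \delta_k, n_t} = 0$ --- and its drift part is bounded by observing that $\nabla_\Sigma \ln\rho_\Sigma + b_N = \nabla\pi(\zeta)\, b(\zeta)$, whose normal component is carried entirely by the landing drift, giving $\abs{\inner{\nabla_\Sigma \ln\rho_\Sigma + b_N, n_t}} \leq \alpha G_5 \norm{J(\zeta)}_2 \leq \alpha G_5 M_h e^{-\alpha t}$. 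This is precisely where the summand $\alpha G_4 G_5 M_h$ inside $\tilde{C}_7$ comes from --- a term your sketch attributes purely to the interior forcing --- and its decay rate is $e^{-\alpha t}$, not the $e^{-\alpha\beta t}$ your closing paragraph assigns to "the" boundary flux. Without the tangency argument and the landing-drift normal bound, the differential inequality $\tfrac{d}{dt}\KL^\Sigma \leq -2\lambda_{\text{LSI}}(1 - \tilde{C}_5 e^{-\alpha t})\KL^\Sigma + c_1 e^{-\alpha t} + c_2 e^{-\alpha\beta t}$ you write down is not justified, since term (b) could a priori be $\calO(1)$ and destroy the exponential rate.
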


\vspace{-5pt}
\subsection{Euler-Maruyama Discretization \& Hutchinson Trace Estimation}

    To implement OLLA in practice, we discretize the continuous-time SDE by the Euler-Maruyama (EM) update. At each iteration, we compute three components for the drift vector: the projected gradient drift, the landing drift, and the mean curvature correction $\calH$. In particular, $\calH$ \eqref{eqn:mean_curvature_term} requires forming the full Hessian of each constraint and computing traces of the form $\trace{\Pi \nabla^2 h_i}$ and $\trace{\Pi \nabla^2 g_j}$, an $\calO(d \cdot \text{grad-cost})$ operation that quickly becomes infeasible in high dimensions. We therefore employ the Hutchinson trace estimator \citep{hutchinson1989stochastic}, which gives the following approximation:
    \begin{equation}
        \label{eqn:hutchinson_estimation}
        \trace{\Pi \nabla^2 h_i} \approx \frac{1}{N} \sum_{k=1}^N  (\Pi v_k)^T (\nabla^2 h_i v_k), \quad \trace{\Pi \nabla^2 g_j} \approx \frac{1}{N} \sum_{k=1}^N  (\Pi v_k)^T (\nabla^2 g_j v_k)
    \end{equation}
    for each $i \in [m], j \in I_x$ where $v_k \sim \calN(0, I_d)$ are independent standard normal samples. Each Hessian-Vector Product (HVP) $\nabla^2 h_i v_k$ (or $\nabla^2 g_j v_k$) can be computed at a cost similar to one gradient evaluation, so $N$ probes incur only $\calO (N \cdot \text{grad-cost})$ computational cost rather than $\calO(d \cdot \text{grad-cost})$, saving significant computational cost in high-dimension circumstances. In our experiments, $N=5$ suffices to achieve low variance estimates that match the performance of the full Hessian computation.
    
    By combining these numerical schemes, we arrive at the full algorithm of OLLA in \cref{alg:em_olla}.

\section{Experiments}
\vspace{-5pt}
\label{main:sec:Experiments}
  To demonstrate the sampling accuracy and efficiency, we compare OLLA and its Hutchinson-accelerated variant (OLLA-H) against three standard constrained samplers: CLangevin \citep{lelievre2012langevin}, CHMC \citep{lelievre2012langevin},  and CGHMC \citep{lelievre2019hybrid}. While the three baselines were originally designed for equality constraints, we introduce one slack variable per inequality constraint via $g_j(x) + \frac{1}{2}\delta_j^2 =0$ for each $j \in [l]$ so that $g_j(x) \leq 0$ is enforced. CGHMC, meanwhile, does not rely on slack variables, but instead uses a Metropolis-Hasting correction step not only to reject samples based on the energy of the proposed samples, but also based on the inequality constraint violation. Therefore, CGHMC allows more accurate and unbiased sampling from the constrained distribution than CHMC and CLangevin in the long run, and we use it as a ground truth for measuring the accuracy of our methods.

\begin{figure}
    \centering
    \includegraphics[width=1\linewidth]{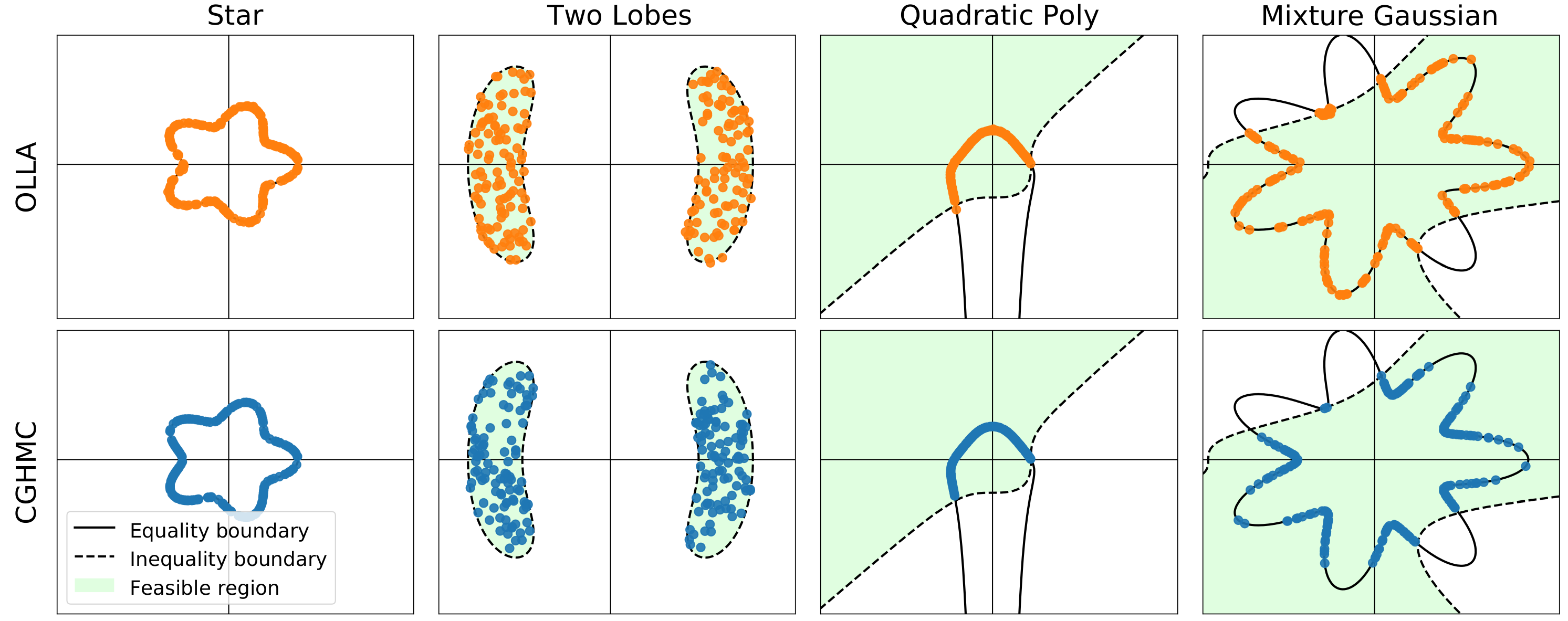}
    \caption{Scatter plots of 200 samples from OLLA (top row) and CGHMC (bottom row) on four 2D synthetic examples. Solid lines show equality constraints, dashed lines show inequality boundaries, and green shaded areas mark feasible region by inequality constraints. OLLA closely matches the CGHMC samples in each scenario.}
    \label{fig:scatter plots of 2D synthetic examples}
    \vspace{-15pt}
\end{figure}
\vspace{-5pt}
\subsection{Synthetic 2D Examples}
\vspace{-5pt}

\label{exp:Synthetic 2D Examples}
We first evaluate sampling on two-dimensional manifolds:
(1) star-shaped equality manifold, (2) two-lobe inequality manifold both with uniform density, (3) manifold defined by quadratic-polynomial equality and inequality constraints with a standard Gaussian target, (4) Gaussian mixture with nine components restricted to a seven lobes manifold by a nonlinear inequality (both for mixed scenario).

For each problem, we ran 200 independent chains in parallel with 5,000 steps, and collected the last 200 samples. We then computed the $W_2$ distance and energy distance between the empirical distribution of the samples and the target distribution, as well as the constraint violation defined by $\bbE[\abs{h(x)}], \bbE[\max{g(x)^+}]$, respectively. The results are shown in \cref{fig:scatter plots of 2D synthetic examples} and \cref{fig:Convergence diagnostics on the 9 Gaussian mixture on 7 Lobe manifold}. Further details of example setups and additional results are included in \cref{app:Experiment setup for synthetic 2D examples}.
\begin{figure} 
    \centering
    \includegraphics[width=1\linewidth]{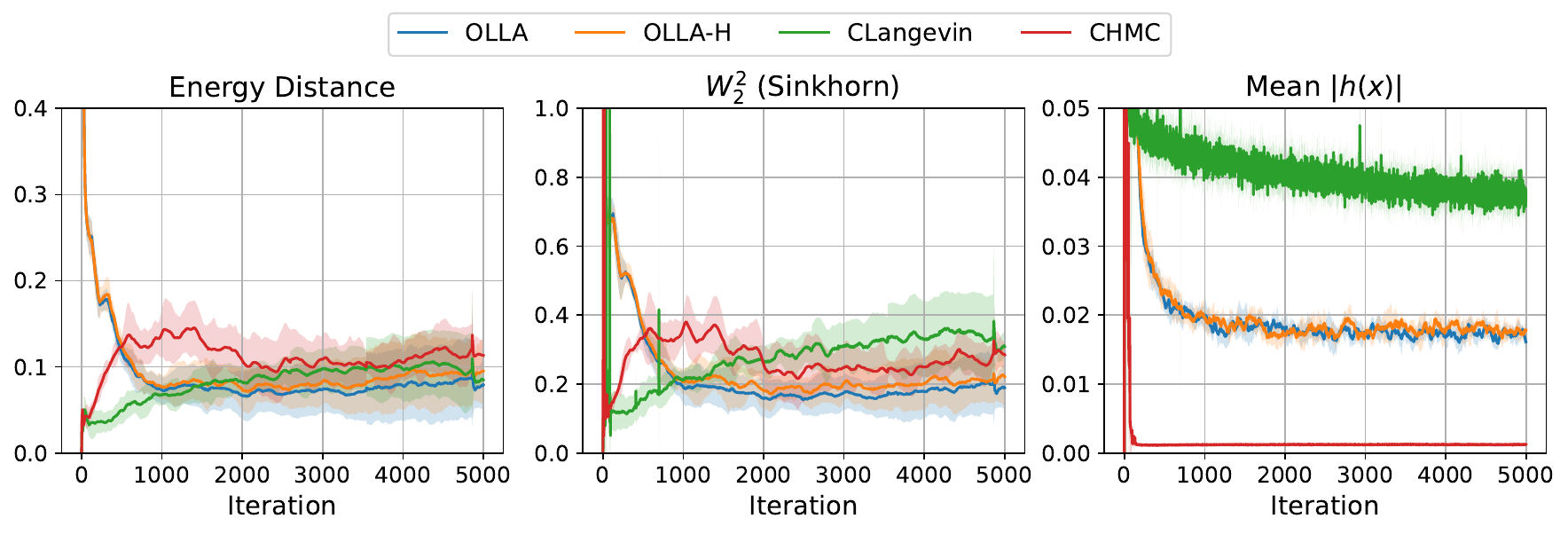}
    \caption{Convergence diagnostics on the Gaussian mixture of 9 components on the 7 lobes manifold with $\alpha = 100, \epsilon = 1$. From left to right: (1) energy distance to CGHMC samples, (2) squared $W_2^2$ distance to CGHMC samples, and (3) mean constraint violation $\bbE[\abs{h}]$. Solid lines and shaded bands show the mean $\pm$1 SD over five independent runs. Both OLLA and OLLA-H rapidly decrease $\bbE[\abs{h}]$ down to small values and maintain it there, which achieving the lowest energy and $W_2^2$ errors.}
    \label{fig:Convergence diagnostics on the 9 Gaussian mixture on 7 Lobe manifold}
    \vspace{-12pt}
\end{figure}

\begin{wrapfigure}{r}{0.38\textwidth}
    \vspace{-15pt}
    \centering
    \captionof{table}{Effect of $\alpha$ on $W_2^2, \bbE[\abs{h}]$}
    \label{tab:Effect of alpha}
    \begin{tabular}{ccc}
        \toprule
        $\alpha$ & $W_2^2$ & $\mathbb{E}[|h(x)|]$ \\
        \midrule
        1   & $0.363\text{\scriptsize$\pm$0.064}$ & $0.682\text{\scriptsize$\pm$0.017}$ \\
        10  & $0.200\text{\scriptsize$\pm$0.035}$ & $0.130\text{\scriptsize$\pm$0.001}$ \\
        100 & $0.159\text{\scriptsize$\pm$0.032}$ & $0.017\text{\scriptsize$\pm$0.001}$ \\
        200 & $0.121\text{\scriptsize$\pm$0.019}$ & $0.008\text{\scriptsize$\pm$0.001}$ \\
        \bottomrule
    \end{tabular}
    \vspace{5pt}
    \captionof{table}{Effect of $\epsilon$ on $W_2^2, \bbE[\max g^+]$}
    \label{tab:Effect of epsilon}
    \begin{tabular}{@{} ccc @{}}
        \toprule
        $\epsilon$ & $W_2^2$ & $\mathbb{E}[\max g^+(x)]$ \\
        \midrule
        0.1 & $0.151\text{\scriptsize$\pm$0.026}$ & $0.082\text{\scriptsize$\pm$0.017}$ \\
        1   & $0.108\text{\scriptsize$\pm$0.011}$ & $0.067\text{\scriptsize$\pm$0.027}$ \\
        5   & $0.123\text{\scriptsize$\pm$0.018}$ & $0.040\text{\scriptsize$\pm$0.015}$ \\
        10  & $0.112\text{\scriptsize$\pm$0.034}$ & $0.019\text{\scriptsize$\pm$0.006}$ \\
        \bottomrule
    \end{tabular}
    \vspace{-15pt}
\end{wrapfigure}

\textbf{Sampling Accuracy \& Constraint Violation of OLLA.} \quad As shown in \cref{fig:Convergence diagnostics on the 9 Gaussian mixture on 7 Lobe manifold}, OLLA and OLLA-H match the performance of CHMC and CLangevin in both Wasserstein and energy-distance metrics, demonstrating that our landing-based approach attains sampling accuracy on par with established methods. Also, constraint violations for OLLA and OLLA-H remained at low levels without computationally expensive projection steps.

\textbf{Effect of Hyperparameters $\alpha$ and $\epsilon$.} \quad
We further examine the influence of the landing rate $\alpha$ and boundary repulsion $\epsilon$ on sampling accuracy and constraint satisfaction. 
As shown in \cref{tab:Effect of alpha,tab:Effect of epsilon}, increasing $\alpha$ accelerates convergence, yielding smaller $W_2^2$ values and reduced equality constraint violations, consistent with our theoretical prediction that larger $\alpha$ enhances the landing and contraction rates toward $\Sigma$. However, excessively large $\alpha$ leads to numerical instability, causing $W_2^2$ to rise and the sampler to collapse (\cref{tab:effect-alpha_app}). 
A similar trend is observed for $\epsilon$. Stronger repulsion lowers inequality violations, but beyond a certain range, $W_2^2$ remains nearly unchanged, aligning with the continuous-time theory that $\epsilon$ primarily affects the finite landing time rather than the asymptotic convergence rate. Overly large $\epsilon$ values can again destabilize the dynamics and lead to numerical breakdown (\cref{tab:effect-epsilon_app}).

\vspace{-5pt}
\subsection{Scaling of OLLA under High-Dimensionality and Large Number of Constraints}
\vspace{-5pt}

\label{exp:Scaling of OLLA under High-dimensionality and large number of constraints}



We assess the robustness and scalability of OLLA and OLLA-H using a synthetic \textit{stress-test} problem that enables explicit control over the ambient dimension $d$ and the numbers of equality and inequality constraints $(m,l)$. 
Samples are drawn from a uniform distribution on a constrained manifold $\Sigma$ defined by linear and quadratic constraints. Each algorithm runs for 1,000 iterations with burn-in and thinning, and we vary one of $d$, $m$, or $l$ while fixing the others to disentangle their individual effects.

We evaluate two key metrics: (1) CPU time per effective sample (CPU/ESS), and (2) the accuracy of representative test function estimates such as $P(x_1>0)$ and $K(x) = \sin(x_1)e^{x_2} + \log(|x_3|+1)\tanh(x_4) + \prod_{i=5}^9\cos(x_i)$. Detailed experimental setups are provided in \cref{app:Experiment setup for high-dimensional and large number of constraint}.

\begin{figure}
    \centering
    \includegraphics[width=1\linewidth]{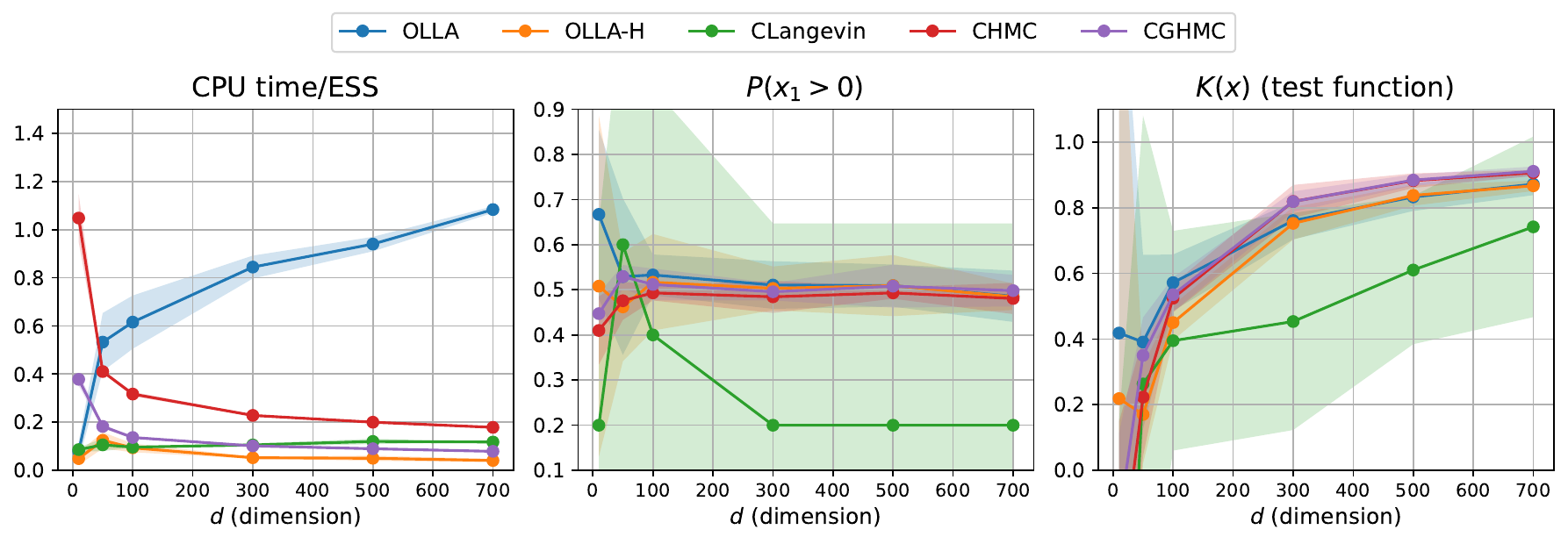}
    \caption{Sampling performance and accuracy as the dimension $d$ increases (with $m=l=5)$. From left to right: (1) CPU time per ESS versus $d$, (2) Estimated probability $P(x_1 >0)$ versus $d$, (3) Estimated value of $K(x)$ versus $d$. Shaded bands shows $\pm$ 1SD over five runs.}
    \label{fig:Sampling performance and accuracy as the dimension d increases}
    \vspace{-16pt}
\end{figure}

\textbf{Scaling under Dimension.} \quad \cref{fig:Sampling performance and accuracy as the dimension d increases} illustrates the sampling performance of algorithms as $d$ increases from $10$ to $700$ (with $m=l=5$). On the left, CPU time/ESS of OLLA-H remains essentially flat around $0.05$s/sample while OLLA grows linearly (reaching $\approx 1.1$s/sample at $d = 700$), and CHMC and CLangevin stay at $0.2-0.3$s/sample and $0.1$s/sample respectively. In the center, both OLLA-H, CHMC, and CGHMC maintain $P(x_1 > 0 ) \approx 0.5$, whereas CLangevin collapses to $\approx 0.2$ in high dimensions, indicating severe bias. On the right, the estimate of nonlinear test function $K(x)$ shows that OLLA, OLLA-H, CHMC, and CGHMC all produce virtually identical estimates even as $d$ grows, while CLangevin lags behind. Overall, the results indicate that OLLA-H scales, maintaining reliable performance as the dimension $d$ increases.

\textbf{Scaling under the Number of Constraints.} \quad With $d=100$, we separately increased the number of equalities $m$ (with $l=5$) and the number of inequalities $l$ (with $m=5$). In the presence of a large number of equalities, OLLA and OLLA-H may lose their edge over equality constrained specialized baselines. In these situations, the maximum $\alpha$ that is stable in the discrete algorithm becomes limited to prevent significant discretization error. However, we observe that even, at near limit value of $\alpha$, $\bbE[\abs{h}]$ may remain relatively high unless the step size $\Delta t$ is further reduced. We add an additional study for this on \cref{app:Experiment setup for high-dimensional and large number of constraint}. By contrast, when adding more inequalities, OLLA-H stays relatively accurate and shows very low CPU time/ESS (\cref{fig:Sampling performance and accuracy as the inequalities l increases}).
\begin{figure}
    \centering
    \vspace{-12pt}
    \includegraphics[width=1\linewidth]{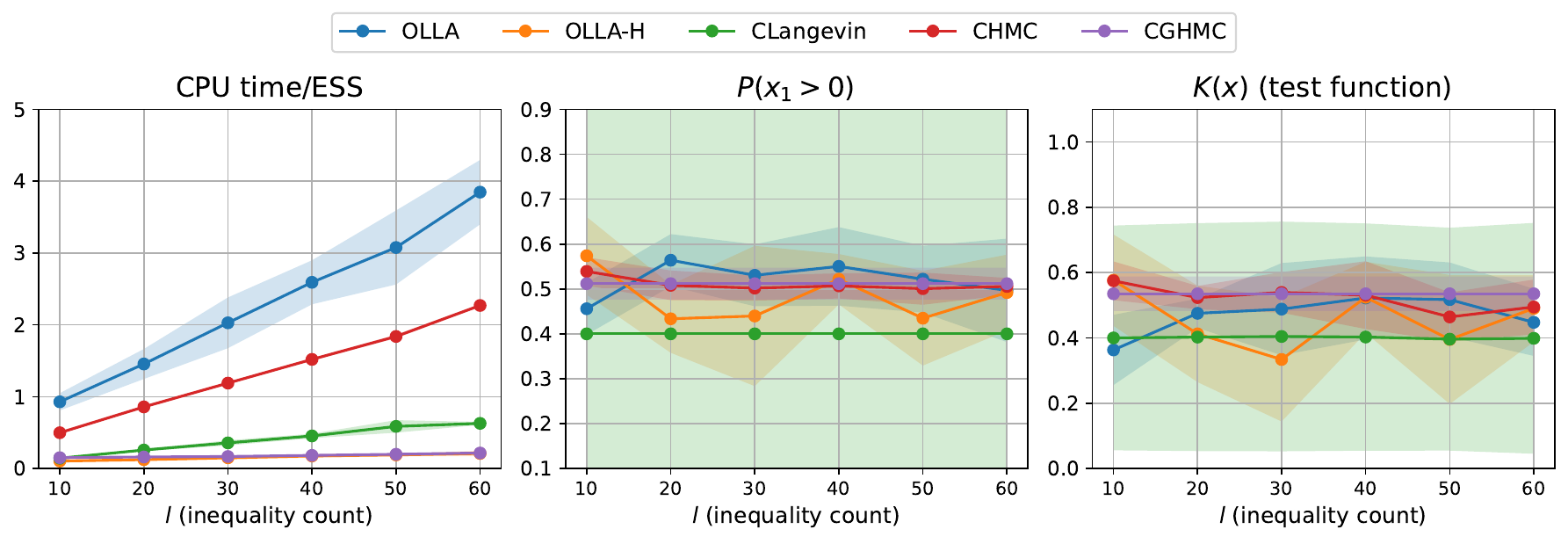}
    \caption{Sampling performance and accuracy as the number of inequality constraints $l$ increases (with $d = 100, m = 5)$. From left to right: (1) CPU time per ESS versus $l$, (2) Estimated probability $P(x_1 >0)$ versus $l$, (3) Estimated value of $K(x)$ versus $l$. Shaded bands shows $\pm$ 1SD over five runs. Note that OLLA and CGHMC results on CPU time/ESS overlap almost perfectly, suggesting their comparable performance on this metric.}
    \label{fig:Sampling performance and accuracy as the inequalities l increases}
    \vspace{-20pt}
\end{figure}

\vspace{-5pt}
\subsection{Molecular System with Realistic Potential}
\vspace{-5pt}
\label{exp:Molecular system with realistic potential}
We further evaluate samplers on a molecular system with realistic potentials and geometric constraints. The system incorporates equality constraints $h$ (fixed bond lengths and angles) and inequality constraints $g$ (steric hindrance), alongside torsion and Weeks-Chandler-Anderson potential terms. Experiments were conducted for increasing dimensions $d=3N_{\text{atom}}$ by varying the number of atoms. Detailed experimental details are provided in \cref{app:Experiment setup for Molecular system and Bayesian logistic regression task} and \cref{tab:polymer_results} summarizes the results. 

\begin{table}[h]
\vspace{-10pt}
\centering
\renewcommand{\cellalign}{cc} 
\setlength{\tabcolsep}{4pt} 
\caption{Estimates of radius of gyration squared ($R_g^2$) and total CPU time for sampling (in brackets) on the molecular system with realistic potential (complex $\Sigma$, small $|I_x|$, large $N_{\text{newton}}$). The average constraint violation for OLLA-H was below $0.007$ (equality), while projection-based samplers maintained violations below $0.0001$. Inequality violations were observed to be $0$ for all algorithms. The $(d,m,l)$ configurations are: $(15, 7, 6)$, $(30, 17, 36)$, $(45, 27, 91)$, $(60, 37, 171)$, $(90, 57, 406)$.}
\label{tab:polymer_results}
\vspace{5pt}
\begin{tabular}{lccccc}
\toprule
method $/$ dim $(d)$ & $15$ & $30$ & $45$ & $60$ & $90$ \\
\midrule
OLLA-H $(N=0)$ & 
\makecell{$1.392$ \scriptsize{$\pm$ $0.026$} \\ {\footnotesize ($\mathbf{78}$s)}} & 
\makecell{$5.414$ \scriptsize{$\pm$ $0.047$} \\ {\footnotesize ($\mathbf{151}$s)}} & 
\makecell{$12.240$ \scriptsize{$\pm$ $0.102$} \\ {\footnotesize ($\mathbf{254}$s)}} & 
\makecell{$21.820$ \scriptsize{$\pm$ $0.098$} \\ {\footnotesize ($\mathbf{334}$s)}} & 
\makecell{$49.080$ \scriptsize{$\pm$ $0.223$} \\ {\footnotesize ($\mathbf{704}$s)}} \\
OLLA-H $(N=5)$ & 
\makecell{$1.370$ \scriptsize{$\pm$ $0.032$} \\ {\footnotesize ($182$s)}} & 
\makecell{$5.424$ \scriptsize{$\pm$ $0.045$} \\ {\footnotesize ($400$s)}} & 
\makecell{$12.200$ \scriptsize{$\pm$ $0.155$} \\ {\footnotesize ($656$s)}} & 
\makecell{$21.780$ \scriptsize{$\pm$ $0.098$} \\ {\footnotesize ($916$s)}} & 
\makecell{$48.940$ \scriptsize{$\pm$ $0.273$} \\ {\footnotesize ($1732$s)}} \\
CLangevin    & 
\makecell{$1.396$ \scriptsize{$\pm$ $0.012$} \\ {\footnotesize ($421$s)}} & 
\makecell{$5.526$ \scriptsize{$\pm$ $0.015$} \\ {\footnotesize ($3620$s)}} & 
\makecell{$12.400$ \scriptsize{$\pm$ $0.000$} \\ {\footnotesize ($10940$s)}} & 
\makecell{$22.140$ \scriptsize{$\pm$ $0.049$} \\ {\footnotesize ($22600$s)}} & 
\makecell{$49.960$ \scriptsize{$\pm$ $0.049$} \\ {\footnotesize ($52220$s)}} \\
CHMC         & 
\makecell{$1.410$ \scriptsize{$\pm$ $0.000$} \\ {\footnotesize ($147$s)}} & 
\makecell{$5.580$ \scriptsize{$\pm$ $0.000$} \\ {\footnotesize ($468$s)}} & 
\makecell{$12.500$ \scriptsize{$\pm$ $0.000$} \\ {\footnotesize ($1012$s)}} & 
\makecell{$22.200$ \scriptsize{$\pm$ $0.000$} \\ {\footnotesize ($1712$s)}} & 
\makecell{$49.960$ \scriptsize{$\pm$ $0.049$} \\ {\footnotesize ($3660$s)}} \\
CGHMC        & 
\makecell{$1.410$ \scriptsize{$\pm$ $0.000$} \\ {\footnotesize ($135$s)}} & 
\makecell{$5.580$ \scriptsize{$\pm$ $0.000$} \\ {\footnotesize ($282$s)}} & 
\makecell{$12.500$ \scriptsize{$\pm$ $0.000$} \\ {\footnotesize ($467$s)}} & 
\makecell{$22.200$ \scriptsize{$\pm$ $0.000$} \\ {\footnotesize ($652$s)}} & 
\makecell{$49.940$ \scriptsize{$\pm$ $0.049$} \\ {\footnotesize ($1116$s)}} \\
\bottomrule
\end{tabular}
\vspace{-5pt}
\end{table}

As dimension $d$ grows, the feasible set $\Sigma$ becomes increasingly complex, while the number of active inequality constraints $|I_x|$ remains small. However, the numerous equality constraints make each projection step demanding for projection-based methods (CHMC, CLangevin, CGHMC), resulting in substantial Newton iteration $N_{\text{newton}}$ and increased computational costs. Particularly, CLangevin is significantly affected from this, whereas CGHMC is faster but still slower than OLLA-H $(N=0)$.

In contrast, increasing the Hutchinson probe $N$ from $N=0$ to $N=5$ improves the mean equality constraint violation across all dimensions $d$, reducing it from a range of $\approx 0.0055$ down to $\approx 0.00035$. Despite this, both OLLA-H configurations $(N=0, N=5)$ yield comparable test function estimation accuracy. This suggests that the constraint violation at $N=0$ was already sufficiently low, rendering the improvement's impact on the final sampling accuracy negligible. Consequently, the $N=0$ variant---where HVP evaluations are completely skipped---achieves significantly lower computational cost. This behavior aligns with findings in \citet{zhang2022sampling}, suggesting that omitting the It\^o-Stratonovich (mean curvature) correction term has negligible practical impact on sampling accuracy, while markedly improving efficiency.

\vspace{-5pt}
\subsection{Bayesian Logistic Regression with Fairness and Monotonicity Constraints}
\vspace{-5pt}

\label{exp:Bayesian NN}
We evaluate the samplers on a high-dimensional Bayesian logistic regression task using a two-layer neural network trained on the German Credit dataset \cite{statlog_(german_credit_data)_144}. 
The setup enforces fairness through equality constraints $h$ ensuring parity in true positive rate and false positive rate between demographic groups, along with monotonicity constraints $g$ on selected input data. Further details of experimental setup are provided in \cref{app:Experiment setup for Molecular system and Bayesian logistic regression task}.

This problem poses significant challenges for projection-based samplers. 
Step sizes effective in unconstrained scenarios led to projection failures for CLangevin and CHMC, and to acceptance rates below $5\%$ for CGHMC. 
To maintain stability, their step sizes were substantially reduced. 
In contrast, the projection-free OLLA-H remained stable across all settings and, for fairness, was also evaluated with the same reduced step size as CLangevin.

As summarized in \cref{tab:blr_fairness}, OLLA-H $(N=0)$ consistently attains the lowest test negative log-likelihood (NLL) while being orders of magnitude faster than projection-based baselines. Although the projection-based methods achieve tighter feasibility, OLLA-H maintains small violations without noticeable degradation in accuracy. These results highlight OLLA-H’s robustness and computational advantage in high-dimensional constrained Bayesian logistic regression task.

\begin{table}[h]
\vspace{-10pt}
\centering
\renewcommand{\cellalign}{tc}
\renewcommand{\arraystretch}{1.15}
\setlength{\tabcolsep}{5pt} 
\caption{
Test NLL and total CPU time for sampling (in brackets) on the Bayesian logistic regression with fairness and monotonicity constraints (high-dimensional $\Sigma$). 
The average constraint violations for OLLA-H were below $0.005$ (equality) and $0.15$ (inequality),
while projection-based samplers (CLangevin, CHMC, CGHMC) maintained feasibility below $0.0008$ (equality) and no inequality violation.
}
\label{tab:blr_fairness}
\vspace{5pt}
\begin{tabular}{lccccc} 
\toprule
method $ /$ dim $(d)$ & $1986$& $4994$ & $9986$ & $49986$ & $100002$ \\ 
\midrule
OLLA-H $(N=0)$ &
\makecell{$0.514$ \scriptsize{$\pm$ $0.013$} \\ {\footnotesize ($\mathbf{63}$s)}} &
\makecell{$0.521$ \scriptsize{$\pm$ $0.008$} \\ {\footnotesize ($\mathbf{70}$s)}} &
\makecell{$0.524$ \scriptsize{$\pm$ $0.014$} \\ {\footnotesize ($\mathbf{70}$s)}} &
\makecell{$0.523$ \scriptsize{$\pm$ $0.011$} \\ {\footnotesize ($\mathbf{81}$s)}} &
\makecell{$0.520$ \scriptsize{$\pm$ $0.015$} \\ {\footnotesize ($\mathbf{82}$s)}} \\

OLLA-H $(N=5)$ &
\makecell{$0.520$ \scriptsize{$\pm$ $0.013$} \\ {\footnotesize ($159$s)}} &
\makecell{$0.524$ \scriptsize{$\pm$ $0.008$} \\ {\footnotesize ($180$s)}} &
\makecell{$0.505$ \scriptsize{$\pm$ $0.004$} \\ {\footnotesize ($205$s)}} &
\makecell{$0.517$ \scriptsize{$\pm$ $0.011$} \\ {\footnotesize ($189$s)}} &
\makecell{$0.516$ \scriptsize{$\pm$ $0.004$} \\ {\footnotesize ($197$s)}} \\

CLangevin &
\makecell{$0.573$ \scriptsize{$\pm$ $0.004$} \\ {\footnotesize ($1162$s)}} &
\makecell{$0.568$ \scriptsize{$\pm$ $0.013$} \\ {\footnotesize ($1176$s)}} &
\makecell{$0.564$ \scriptsize{$\pm$ $0.022$} \\ {\footnotesize ($1194$s)}} &
\makecell{$0.580$ \scriptsize{$\pm$ $0.005$} \\ {\footnotesize ($1428$s)}} &
\makecell{$0.570$ \scriptsize{$\pm$ $0.011$} \\ {\footnotesize ($1370$s)}} \\

CHMC &
\makecell{$0.599$ \scriptsize{$\pm$ $0.015$} \\ {\footnotesize ($526$s)}} &
\makecell{$0.595$ \scriptsize{$\pm$ $0.020$} \\ {\footnotesize ($532$s)}} &
\makecell{$0.599$ \scriptsize{$\pm$ $0.017$} \\ {\footnotesize ($561$s)}} &
\makecell{$0.606$ \scriptsize{$\pm$ $0.004$} \\ {\footnotesize ($586$s)}} &
\makecell{$0.605$ \scriptsize{$\pm$ $0.004$} \\ {\footnotesize ($611$s)}} \\

CGHMC &
\makecell{$0.600$ \scriptsize{$\pm$ $0.007$} \\ {\footnotesize ($76$s)}} &
\makecell{$0.600$ \scriptsize{$\pm$ $0.009$} \\ {\footnotesize ($77$s)}} &
\makecell{$0.606$ \scriptsize{$\pm$ $0.003$} \\ {\footnotesize ($82$s)}} &
\makecell{$0.598$ \scriptsize{$\pm$ $0.020$} \\ {\footnotesize ($83$s)}} &
\makecell{$0.601$ \scriptsize{$\pm$ $0.007$} \\ {\footnotesize ($88$s)}} \\
\bottomrule
\vspace{-25pt}
\end{tabular}
\end{table}

\section{Conclusion \& Future works}
\vspace{-5pt}
\label{main:sec:Conclusion and Limitations}
We have presented Overdamped Langevin with Landing (OLLA), a projection-free SDE sampler that enforces nonlinear equality and inequality constraints by deterministically “landing” trajectories onto the feasible set while retaining full tangential noise, and proved that its continuous dynamics converge exponentially fast in 2-Wasserstein distance under appropriate regularity.  Building on this, we proposed OLLA-H, an EM discretization that uses a Hutchinson trace estimator for approximating the Itô–Stratonovich correction at only $\mathcal O(N\cdot\text{grad‐cost})$ per step, and showed in both 2D and high-dimensional tests that it matches the accuracy of established constrained samplers while drastically reducing runtime. Future work will include non-asymptotic convergence guarantees for the discrete algorithm—closing the gap between SDE theory and implementation—and developing OLLA variants that remain stable even with many equality constraints in very high dimensions, further extending its scope to large-scale constrained probabilistic inference.

\clearpage
\clearpage
\label{main:sec:ack_impact}
\section*{Acknowledgements}
KJ and MT are grateful for partial supports by NSF Grants DMS-1847802, DMS-2513699, DOE Grants NA0004261, SC0026274, and Richard Duke Fellowship. MM is supported by the German Research Foundation. The authors thank Andre Wibisono for insightful discussion.
\bibliography{references}
\bibliographystyle{unsrtnat}
\newpage
\section*{NeurIPS Paper Checklist}

\begin{enumerate}

\item {\bf Claims}
    \item[] Question: Do the main claims made in the abstract and introduction accurately reflect the paper's contributions and scope?
    \item[] Answer: \answerYes{} 
    \item[] Justification: Summarized contributions in Section 1.
    \item[] Guidelines:
    \begin{itemize}
        \item The answer NA means that the abstract and introduction do not include the claims made in the paper.
        \item The abstract and/or introduction should clearly state the claims made, including the contributions made in the paper and important assumptions and limitations. A No or NA answer to this question will not be perceived well by the reviewers. 
        \item The claims made should match theoretical and experimental results, and reflect how much the results can be expected to generalize to other settings. 
        \item It is fine to include aspirational goals as motivation as long as it is clear that these goals are not attained by the paper. 
    \end{itemize}

\item {\bf Limitations}
    \item[] Question: Does the paper discuss the limitations of the work performed by the authors?
    \item[] Answer: \answerYes{} 
    \item[] Justification: Section 5.2 and Section 6 illustrates a limitation.
    \item[] Guidelines:
    \begin{itemize}
        \item The answer NA means that the paper has no limitation while the answer No means that the paper has limitations, but those are not discussed in the paper. 
        \item The authors are encouraged to create a separate "Limitations" section in their paper.
        \item The paper should point out any strong assumptions and how robust the results are to violations of these assumptions (e.g., independence assumptions, noiseless settings, model well-specification, asymptotic approximations only holding locally). The authors should reflect on how these assumptions might be violated in practice and what the implications would be.
        \item The authors should reflect on the scope of the claims made, e.g., if the approach was only tested on a few datasets or with a few runs. In general, empirical results often depend on implicit assumptions, which should be articulated.
        \item The authors should reflect on the factors that influence the performance of the approach. For example, a facial recognition algorithm may perform poorly when image resolution is low or images are taken in low lighting. Or a speech-to-text system might not be used reliably to provide closed captions for online lectures because it fails to handle technical jargon.
        \item The authors should discuss the computational efficiency of the proposed algorithms and how they scale with dataset size.
        \item If applicable, the authors should discuss possible limitations of their approach to address problems of privacy and fairness.
        \item While the authors might fear that complete honesty about limitations might be used by reviewers as grounds for rejection, a worse outcome might be that reviewers discover limitations that aren't acknowledged in the paper. The authors should use their best judgment and recognize that individual actions in favor of transparency play an important role in developing norms that preserve the integrity of the community. Reviewers will be specifically instructed to not penalize honesty concerning limitations.
    \end{itemize}

\item {\bf Theory assumptions and proofs}
    \item[] Question: For each theoretical result, does the paper provide the full set of assumptions and a complete (and correct) proof?
    \item[] Answer: \answerYes{} 
    \item[] Justification: Appendix A, C to G and Section 2, 4 address theoretical results.
    \item[] Guidelines:
    \begin{itemize}
        \item The answer NA means that the paper does not include theoretical results. 
        \item All the theorems, formulas, and proofs in the paper should be numbered and cross-referenced.
        \item All assumptions should be clearly stated or referenced in the statement of any theorems.
        \item The proofs can either appear in the main paper or the supplemental material, but if they appear in the supplemental material, the authors are encouraged to provide a short proof sketch to provide intuition. 
        \item Inversely, any informal proof provided in the core of the paper should be complemented by formal proofs provided in appendix or supplemental material.
        \item Theorems and Lemmas that the proof relies upon should be properly referenced. 
    \end{itemize}

    \item {\bf Experimental result reproducibility}
    \item[] Question: Does the paper fully disclose all the information needed to reproduce the main experimental results of the paper to the extent that it affects the main claims and/or conclusions of the paper (regardless of whether the code and data are provided or not)?
    \item[] Answer: \answerYes{}{} 
    \item[] Justification: Section 5 and Appendix H covers experiment setting to reproduce results. 
    \item[] Guidelines:
    \begin{itemize}
        \item The answer NA means that the paper does not include experiments.
        \item If the paper includes experiments, a No answer to this question will not be perceived well by the reviewers: Making the paper reproducible is important, regardless of whether the code and data are provided or not.
        \item If the contribution is a dataset and/or model, the authors should describe the steps taken to make their results reproducible or verifiable. 
        \item Depending on the contribution, reproducibility can be accomplished in various ways. For example, if the contribution is a novel architecture, describing the architecture fully might suffice, or if the contribution is a specific model and empirical evaluation, it may be necessary to either make it possible for others to replicate the model with the same dataset, or provide access to the model. In general. releasing code and data is often one good way to accomplish this, but reproducibility can also be provided via detailed instructions for how to replicate the results, access to a hosted model (e.g., in the case of a large language model), releasing of a model checkpoint, or other means that are appropriate to the research performed.
        \item While NeurIPS does not require releasing code, the conference does require all submissions to provide some reasonable avenue for reproducibility, which may depend on the nature of the contribution. For example
        \begin{enumerate}
            \item If the contribution is primarily a new algorithm, the paper should make it clear how to reproduce that algorithm.
            \item If the contribution is primarily a new model architecture, the paper should describe the architecture clearly and fully.
            \item If the contribution is a new model (e.g., a large language model), then there should either be a way to access this model for reproducing the results or a way to reproduce the model (e.g., with an open-source dataset or instructions for how to construct the dataset).
            \item We recognize that reproducibility may be tricky in some cases, in which case authors are welcome to describe the particular way they provide for reproducibility. In the case of closed-source models, it may be that access to the model is limited in some way (e.g., to registered users), but it should be possible for other researchers to have some path to reproducing or verifying the results.
        \end{enumerate}
    \end{itemize}

\item {\bf Open access to data and code}
    \item[] Question: Does the paper provide open access to the data and code, with sufficient instructions to faithfully reproduce the main experimental results, as described in supplemental material?
    \item[] Answer: \answerYes{} 
    \item[] Justification: Code is provided. Running .ipynb files can reproduce the results. 
    \item[] Guidelines:
    \begin{itemize}
        \item The answer NA means that paper does not include experiments requiring code.
        \item Please see the NeurIPS code and data submission guidelines (\url{https://nips.cc/public/guides/CodeSubmissionPolicy}) for more details.
        \item While we encourage the release of code and data, we understand that this might not be possible, so “No” is an acceptable answer. Papers cannot be rejected simply for not including code, unless this is central to the contribution (e.g., for a new open-source benchmark).
        \item The instructions should contain the exact command and environment needed to run to reproduce the results. See the NeurIPS code and data submission guidelines (\url{https://nips.cc/public/guides/CodeSubmissionPolicy}) for more details.
        \item The authors should provide instructions on data access and preparation, including how to access the raw data, preprocessed data, intermediate data, and generated data, etc.
        \item The authors should provide scripts to reproduce all experimental results for the new proposed method and baselines. If only a subset of experiments are reproducible, they should state which ones are omitted from the script and why.
        \item At submission time, to preserve anonymity, the authors should release anonymized versions (if applicable).
        \item Providing as much information as possible in supplemental material (appended to the paper) is recommended, but including URLs to data and code is permitted.
    \end{itemize}

\item {\bf Experimental setting/details}
    \item[] Question: Does the paper specify all the training and test details (e.g., data splits, hyperparameters, how they were chosen, type of optimizer, etc.) necessary to understand the results?
    \item[] Answer: \answerYes{} 
    \item[] Justification: Section 5 anb Appendix H illustrates the experimental setting and details.
    \item[] Guidelines:
    \begin{itemize}
        \item The answer NA means that the paper does not include experiments.
        \item The experimental setting should be presented in the core of the paper to a level of detail that is necessary to appreciate the results and make sense of them.
        \item The full details can be provided either with the code, in appendix, or as supplemental material.
    \end{itemize}

\item {\bf Experiment statistical significance}
    \item[] Question: Does the paper report error bars suitably and correctly defined or other appropriate information about the statistical significance of the experiments?
    \item[] Answer: \answerYes{} 
    \item[] Justification: All the plots and tables have $\pm$ 1 SD and mean values and trial numbers are specified.
    \item[] Guidelines:
    \begin{itemize}
        \item The answer NA means that the paper does not include experiments.
        \item The authors should answer "Yes" if the results are accompanied by error bars, confidence intervals, or statistical significance tests, at least for the experiments that support the main claims of the paper.
        \item The factors of variability that the error bars are capturing should be clearly stated (for example, train/test split, initialization, random drawing of some parameter, or overall run with given experimental conditions).
        \item The method for calculating the error bars should be explained (closed form formula, call to a library function, bootstrap, etc.)
        \item The assumptions made should be given (e.g., Normally distributed errors).
        \item It should be clear whether the error bar is the standard deviation or the standard error of the mean.
        \item It is OK to report 1-sigma error bars, but one should state it. The authors should preferably report a 2-sigma error bar than state that they have a 96\% CI, if the hypothesis of Normality of errors is not verified.
        \item For asymmetric distributions, the authors should be careful not to show in tables or figures symmetric error bars that would yield results that are out of range (e.g. negative error rates).
        \item If error bars are reported in tables or plots, The authors should explain in the text how they were calculated and reference the corresponding figures or tables in the text.
    \end{itemize}

\item {\bf Experiments compute resources}
    \item[] Question: For each experiment, does the paper provide sufficient information on the computer resources (type of compute workers, memory, time of execution) needed to reproduce the experiments?
    \item[] Answer: \answerYes{} 
    \item[] Justification: Appendix H illustrates the computing resources for experiments.
    \item[] Guidelines:
    \begin{itemize}
        \item The answer NA means that the paper does not include experiments.
        \item The paper should indicate the type of compute workers CPU or GPU, internal cluster, or cloud provider, including relevant memory and storage.
        \item The paper should provide the amount of compute required for each of the individual experimental runs as well as estimate the total compute. 
        \item The paper should disclose whether the full research project required more compute than the experiments reported in the paper (e.g., preliminary or failed experiments that didn't make it into the paper). 
    \end{itemize}
    
\item {\bf Code of ethics}
    \item[] Question: Does the research conducted in the paper conform, in every respect, with the NeurIPS Code of Ethics \url{https://neurips.cc/public/EthicsGuidelines}?
    \item[] Answer: \answerYes{} 
    \item[] Justification: The research conducted in the paper conform, in every respect, with the NeurIPS Code of Ethics \url{https://neurips.cc/public/EthicsGuidelines}
    \item[] Guidelines:
    \begin{itemize}
        \item The answer NA means that the authors have not reviewed the NeurIPS Code of Ethics.
        \item If the authors answer No, they should explain the special circumstances that require a deviation from the Code of Ethics.
        \item The authors should make sure to preserve anonymity (e.g., if there is a special consideration due to laws or regulations in their jurisdiction).
    \end{itemize}

\item {\bf Broader impacts}
    \item[] Question: Does the paper discuss both potential positive societal impacts and negative societal impacts of the work performed?
    \item[] Answer: \answerNA{} 
    \item[] Justification: The paper covers theoretical analysis and introduces an algorithm, which does not relate to societal impacts.
    \item[] Guidelines:
    \begin{itemize}
        \item The answer NA means that there is no societal impact of the work performed.
        \item If the authors answer NA or No, they should explain why their work has no societal impact or why the paper does not address societal impact.
        \item Examples of negative societal impacts include potential malicious or unintended uses (e.g., disinformation, generating fake profiles, surveillance), fairness considerations (e.g., deployment of technologies that could make decisions that unfairly impact specific groups), privacy considerations, and security considerations.
        \item The conference expects that many papers will be foundational research and not tied to particular applications, let alone deployments. However, if there is a direct path to any negative applications, the authors should point it out. For example, it is legitimate to point out that an improvement in the quality of generative models could be used to generate deepfakes for disinformation. On the other hand, it is not needed to point out that a generic algorithm for optimizing neural networks could enable people to train models that generate Deepfakes faster.
        \item The authors should consider possible harms that could arise when the technology is being used as intended and functioning correctly, harms that could arise when the technology is being used as intended but gives incorrect results, and harms following from (intentional or unintentional) misuse of the technology.
        \item If there are negative societal impacts, the authors could also discuss possible mitigation strategies (e.g., gated release of models, providing defenses in addition to attacks, mechanisms for monitoring misuse, mechanisms to monitor how a system learns from feedback over time, improving the efficiency and accessibility of ML).
    \end{itemize}
    
\item {\bf Safeguards}
    \item[] Question: Does the paper describe safeguards that have been put in place for responsible release of data or models that have a high risk for misuse (e.g., pretrained language models, image generators, or scraped datasets)?
    \item[] Answer: \answerNA{} 
    \item[] Justification: The paper does not pose such risks.
    \item[] Guidelines:
    \begin{itemize}
        \item The answer NA means that the paper poses no such risks.
        \item Released models that have a high risk for misuse or dual-use should be released with necessary safeguards to allow for controlled use of the model, for example by requiring that users adhere to usage guidelines or restrictions to access the model or implementing safety filters. 
        \item Datasets that have been scraped from the Internet could pose safety risks. The authors should describe how they avoided releasing unsafe images.
        \item We recognize that providing effective safeguards is challenging, and many papers do not require this, but we encourage authors to take this into account and make a best faith effort.
    \end{itemize}

\item {\bf Licenses for existing assets}
    \item[] Question: Are the creators or original owners of assets (e.g., code, data, models), used in the paper, properly credited and are the license and terms of use explicitly mentioned and properly respected?
    \item[] Answer: \answerNA{} 
    \item[] Justification: The paper does not use existing assets from another parties.
    \item[] Guidelines:
    \begin{itemize}
        \item The answer NA means that the paper does not use existing assets.
        \item The authors should cite the original paper that produced the code package or dataset.
        \item The authors should state which version of the asset is used and, if possible, include a URL.
        \item The name of the license (e.g., CC-BY 4.0) should be included for each asset.
        \item For scraped data from a particular source (e.g., website), the copyright and terms of service of that source should be provided.
        \item If assets are released, the license, copyright information, and terms of use in the package should be provided. For popular datasets, \url{paperswithcode.com/datasets} has curated licenses for some datasets. Their licensing guide can help determine the license of a dataset.
        \item For existing datasets that are re-packaged, both the original license and the license of the derived asset (if it has changed) should be provided.
        \item If this information is not available online, the authors are encouraged to reach out to the asset's creators.
    \end{itemize}

\item {\bf New assets}
    \item[] Question: Are new assets introduced in the paper well documented and is the documentation provided alongside the assets?
    \item[] Answer: \answerNA{} 
    \item[] Justification: The paper does not introduce new assets.
    \item[] Guidelines:
    \begin{itemize}
        \item The answer NA means that the paper does not release new assets.
        \item Researchers should communicate the details of the dataset/code/model as part of their submissions via structured templates. This includes details about training, license, limitations, etc. 
        \item The paper should discuss whether and how consent was obtained from people whose asset is used.
        \item At submission time, remember to anonymize your assets (if applicable). You can either create an anonymized URL or include an anonymized zip file.
    \end{itemize}

\item {\bf Crowdsourcing and research with human subjects}
    \item[] Question: For crowdsourcing experiments and research with human subjects, does the paper include the full text of instructions given to participants and screenshots, if applicable, as well as details about compensation (if any)? 
    \item[] Answer: \answerNA{} 
    \item[] Justification: The paper does not involve crowdsourcing nor research with human subjects.
    \item[] Guidelines:
    \begin{itemize}
        \item The answer NA means that the paper does not involve crowdsourcing nor research with human subjects.
        \item Including this information in the supplemental material is fine, but if the main contribution of the paper involves human subjects, then as much detail as possible should be included in the main paper. 
        \item According to the NeurIPS Code of Ethics, workers involved in data collection, curation, or other labor should be paid at least the minimum wage in the country of the data collector. 
    \end{itemize}

\item {\bf Institutional review board (IRB) approvals or equivalent for research with human subjects}
    \item[] Question: Does the paper describe potential risks incurred by study participants, whether such risks were disclosed to the subjects, and whether Institutional Review Board (IRB) approvals (or an equivalent approval/review based on the requirements of your country or institution) were obtained?
    \item[] Answer: \answerNA{}{} 
    \item[] Justification: The paper does not involve crowdsourcing nor research with human subjects.
    \item[] Guidelines:
    \begin{itemize}
        \item The answer NA means that the paper does not involve crowdsourcing nor research with human subjects.
        \item Depending on the country in which research is conducted, IRB approval (or equivalent) may be required for any human subjects research. If you obtained IRB approval, you should clearly state this in the paper. 
        \item We recognize that the procedures for this may vary significantly between institutions and locations, and we expect authors to adhere to the NeurIPS Code of Ethics and the guidelines for their institution. 
        \item For initial submissions, do not include any information that would break anonymity (if applicable), such as the institution conducting the review.
    \end{itemize}

\item {\bf Declaration of LLM usage}
    \item[] Question: Does the paper describe the usage of LLMs if it is an important, original, or non-standard component of the core methods in this research? Note that if the LLM is used only for writing, editing, or formatting purposes and does not impact the core methodology, scientific rigorousness, or originality of the research, declaration is not required.
    \item[] Answer: \answerNA{} 
    \item[] Justification: LLM was not involved during the development of the research.
    \item[] Guidelines:
    \begin{itemize}
        \item The answer NA means that the core method development in this research does not involve LLMs as any important, original, or non-standard components.
        \item Please refer to our LLM policy (\url{https://neurips.cc/Conferences/2025/LLM}) for what should or should not be described.
    \end{itemize}

\end{enumerate}

\clearpage
\appendix
\clearpage
\tableofcontents
\clearpage

\clearpage

\section{Table of Key Notations, Assumptions, and Remarks}
\label{appendix:sec:Table of Key Notations and assumptions}
\renewcommand{\arraystretch}{1.3}

\begin{table}[ht]
\caption{Table of Key Notations}
\label{tab:notation}
\vspace{5pt}
\centering
\renewcommand{\arraystretch}{1.3}
\begin{tabular}{c l l}
\toprule
\textbf{Symbol} & \textbf{Definition} & \textbf{Descriptions} \\
\midrule

$h$
  & $h(x)=[h_1(x),\dots,h_m(x)]^T$
  & Equality constraints \\

$g$
  & $g(x)=[g_1(x),\dots,g_n(x)]^T$
  & Inequality constraints \\

$\Sigma$ 
  & $\{x\in \bbR^d \mid  h(x)=0, g(x)\le0\}$
  & Constraint manifold  \\

$I_{x}$
  & $\{i \in [l] \mid  g_i(x) \geq 0\} = \mbra{i_1,..,i_{\abs{I_x}}}$
  & Active index set of inequalities \\
$g_{I_{x}}$
  & $g_{I_x}(x) = [g_{i_1}(x), ... g_{i_{\abs{I_x}}}(x)]^T$ 
  & Active inequality constraints \\

$J(x)$
  & $\mbra{h(x)^T , g_{i_1}(x)+\epsilon, ..., g_{i_{\abs{I_x}}}(x)+\epsilon}^T$
  & Constraint‐correction vector \\

$\Pi(x)$
  & $I-\nabla J(x)^T G(x)^{-1} \nabla J(x)$
  & Orthogonal projector onto $T_x \Sigma$ \\

$T_x\Sigma$
  & $\{v \in \bbR^d  \mid \nabla h(x)v=0, \nabla g_{I_x}(x)v = 0\}$
  & Tangent space of $\Sigma$ at \(x\) \\

$\nabla_{\Sigma}f$
  & $\Pi(x) \nabla f(x)$
  & Intrinsic gradient on $\Sigma$ (\ref{appendix:sec:Proof of differential operator properties on Sigma}) \\

$\divergence_{\Sigma} X$
  & $\trace{\Pi(x) \nabla X(x)}$
  & Intrinsic divergence on $\Sigma$ (\ref{appendix:sec:Proof of differential operator properties on Sigma}) \\

$d\sigma_{\Sigma}$
  & Induced surface (Hausdorff) measure of $\Sigma$
  & Surface measure on $\Sigma$ \\

$G(x)$
  & $\nabla J(x) \nabla J(x)^T$
  & Gram matrix, full rank assumed\\

$U_\epsilon (\Sigma)$
  & Tubular neighborhood of $\Sigma$ with reach $\epsilon$ 
  & Usual tubular neighborhood\\

$\hat U_\delta (\Sigma)$
  & Recoverable tubular neighborhood with width $\delta$ 
  & See details in (\ref{thm:recoverable tubular neighborhood}, \ref{thm:recoverable tubular neighborhood with boundary})\\

$M_h$
  & $\sup_{x_0\in \text{supp}(\rho_0)} \norm{h(x_0)}_2$
  & Initial bound of $h(x)$ \\

$M_g$
  & $\sup_{x_0\in \text{supp}(\rho_0)} \norm{g(x_0)}_2$
  & Initial bound of $g(x)$  \\

$\pi(x)$
  & $\arg\min_{y\in\Sigma}\norm{x-y}_2$
  & Nearest point projection onto $\Sigma$ \\

$\epsilon$
  & Boundary repulsion rate 
  & Controls effect of repulsion. \\

$\alpha$
  & Landing rate 
  & Controls constraint decay\\

$\lambda_{\rm LSI}$
  & Log–Sobolev constant of $\rho_\Sigma$ on $\Sigma$
  & Enable $\KL^\Sigma \leq \frac{1}{2\lambda_{\text{LSI}}}I^\Sigma$ \\

$\rho_t$
& Density of the process $X_t$ at time $t$
& Law of $X_t$ (following OLLA) \\

$\tilde\rho_t$
& Density of the projected process $Y_t$ at time $t$
& Law of $Y_t := \pi(X_t)$\\

$\rho_\Sigma$
& Target (stationary) density on $\Sigma$
& Proportional to $\exp(-f)d\sigma_\Sigma$ \\

$\KL^\Sigma(\rho\|\pi)$
  & $\int_\Sigma \rho\ln\frac{\rho}{\pi} d\sigma_\Sigma$
  & KL‐divergence on $\Sigma$  \\

$I^\Sigma(\rho\|\pi)$
  & $\int_\Sigma \rho \norm{\nabla_\Sigma\ln\frac{\rho}{\pi}}_2^2 d\sigma_\Sigma$
  & Fisher information on $\Sigma$ \\

$\kappa$
  & Regularity constant of $\Sigma$
  & See details in (\ref{lem:regularity lemma}, \ref{lem:regularity lemma with boundary})\\

$\Sigma_p$
  & $\Sigma_p := \pi(\mbra{x \in \bbR^d \mid h(x) = p , g(x) \leq 0})$
  & Projected manifold for $\norm{p}_2 \leq \delta$\\

$v_p^b$
  & Boundary velocity of $\partial \Sigma_p$
  & See details in \cref{thm:Convergence result for mixed-constrained OLLA}\\
\bottomrule
\end{tabular}
\vspace{1mm}
\end{table}

\clearpage
\begin{assumption}[Assumptions for exponential convergence of OLLA] \ 
\label{asm:assumption for exponential convergence of OLLA}
The followings are assumptions used for proving the convergence results of OLLA:
\begin{enumerate}[label = (C\arabic*)]
    \item
    \label{asm:C1}
    \textbf{LICQ}. \quad The linear independence constraint qualification holds on $\Sigma$ if, for every $x\in \Sigma$, 
    \begin{equation*}
        \mbra{\nabla h_i(x)}_{i=1}^m \cup \mbra{\nabla g_j(x)}_{j \in I_x}
    \end{equation*}
    is a set of linearly independent vectors.

    \item
    \label{asm:C2}
    \textbf{Manifold regularity}.  \quad The Riemannian manifold $\Sigma$ is compact and connected. Also, the equality constraint function $h(x)$ is coercive, i.e., $\norm{h(x)}_2 \rightarrow \infty$ as $\norm{x}_2 \rightarrow \infty$. Assume $\nabla h(x) \neq 0,\  \forall x \in \Sigma$ and $\text{dim}(\Sigma) = d$ when there are only inequality constraints.

    \item
    \label{asm:C3}
    \textbf{Initial constraints bounds}. \quad  The initial distribution $\rho_0$ satisfies
    \begin{equation*}
        M_h := \sup_{x_0 \in \text{supp}(\rho_0)} \norm{h(x_0)}_2 < \infty, \quad M_g := \sup_{x_0 \in \text{supp}(\rho_0)} \norm{g(x_0)}_2 < \infty.
    \end{equation*}

    \item
    \label{asm:C4}
    \textbf{Log-Sobolev constant}. \quad  The stationary distribution $\rho_\Sigma(x)  \propto \exp(-f(x))d\sigma_\Sigma$ satisfies a Log-Sobolev Inequality (LSI) with constant $\lambda_{\text{LSI}}$ so that 
    \begin{equation*}
        \KL^\Sigma (\rho || \rho_\Sigma) \leq \frac{1}{2\lambda_{\text{LSI}}} I^\Sigma (\rho || \rho_\Sigma).
    \end{equation*}
    \textit{Note: Compactness of $\Sigma $ with non-negativity of Ricci curvature guarantees the LSI condition. See \citep{Gross1993LogarithmicSI, otto2000generalization, rothaus1981diffusion, rothaus1986hypercontractivity, wang1997estimation, wang2020fast} or \cref{remark:Comment on LSI} for detailed description.} 
\end{enumerate}

\begin{enumerate}[label = (M\arabic*)]
    \item 
    \label{asm:M1}
    \textbf{Regularity of $\Sigma_p$}. \quad  The projected manifold $\Sigma_p := \pi(\mbra{x \in \bbR^d \mid h(x) = p, g(x) \leq 0})$ lies inside the interior of $\Sigma$ for $0 < \norm{p}_2 <\delta$, where $\delta$ is the width of recoverable tubular neighborhood $\hat{U}_\delta(\Sigma)$.
    \item
    \label{asm:M2}
    \textbf{Regularity of $\partial \Sigma_p$}. \quad The boundary velocity $v_p^b$ of $\partial \Sigma_p$ appearing on Leibniz integral rule satisfies $\sup_{x \in \partial \Sigma_p} \norm{v_p^b}_2 \leq V \norm{p}_2^\beta$ for some $V>0, \beta >0$.  Also, assume $M_\Sigma := \sup_{\norm{p}_2 < \delta} \sigma_{\partial \Sigma_p} (\partial \Sigma_p) < \infty $.
    \item
    \label{asm:M3}
    \textbf{Bound on $\rho_t, \rho_\Sigma$}. \quad  There exists the constants $G_1, G_2, G_3 > 0$ such that $G_1 := \sup_{t \geq 0, x \in \Sigma} \tilde{\rho}_t <\infty$ and $0 < G_2 \leq \rho_\Sigma \leq G_3$ for every $x \in \Sigma$.
\end{enumerate}
\end{assumption}

\begin{remark}[Comments on the assumptions \ref{asm:M1} and \ref{asm:M3}]
    \label{remark:Comments on the assumptions}
    \leavevmode
    \begin{itemize}[leftmargin=*]
        \item Although the assumption \ref{asm:M1} is stated for all small perturbation $p \in \bbR^m, 0<\norm{p}_2 < \delta$, in our anaylsis on \cref{thm:Convergence result for mixed-constrained OLLA}, we only require:
    \begin{equation*}
        \Sigma_t := \pi(\mbra{x \in \bbR^d \mid h(x) = h(X_0)e^{-\alpha t}, g(x) \leq 0}) \subset \text{int}(\Sigma)
    \end{equation*}
    for $t \geq t_{\text{cut}}$, $t_{\text{cut}}:= \max \mbra{\frac{1}{\alpha} \ln \sbra{\frac{M_g +\epsilon}{\epsilon}}, \frac{1}{\alpha} \ln \sbra{\frac{M_h}{\delta}}, \frac{1}{\alpha} \ln (\tilde{C_5})}$. Hence, when $X_0 \sim \delta (x_0)$ for some $x_0 \in \bbR^d$, we can replace the global requirement ``for all $p$ with $0<\norm{p}_2 < \delta$'' by a weaker, one-dimensional condition:
    \begin{enumerate}[label = \textbf{(M\arabic*$'$)}, leftmargin=40pt]
        \item If $X_0 \sim \delta (x_0)$ for some $x_0 \in \bbR^d$, let $u = \frac{h(x_0)}{\norm{h(x_0)}_2}$ and assume 
        \begin{equation*}
            \Sigma_{su} := \pi(\mbra{x \in \bbR^d \mid h(x) = su, g(x) \leq 0}) \subset \text{int}(\Sigma), \quad \text{        for all $s \in (0,\delta)$}.
        \end{equation*}
        \label{asm:M1'}
    \end{enumerate}\vspace{-\baselineskip}
    \item For the uniform boundedness $\sup_{t\geq 0, x \in \Sigma} \tilde{\rho}_t <\infty$ in the assumption \ref{asm:M3}, we recall from the proof of \cref{lem:Upper bound of KL_Sigma_mixed} that $\tilde{\rho}_t$ satisfies the following Fokker-Planck equation:
    \begin{equation*}
        \partial_t \tilde{\rho}_t = -\divergence_\Sigma (\tilde{\rho}_t (\nabla_\Sigma \ln \rho_\Sigma + b_N)) +\sum_{k=1}^d \divergence_\Sigma (\divergence_\Sigma (\tilde{\rho}_t (f_k + \delta_k)) (f_k+\delta_k)).
    \end{equation*}
    Since $\norm{\delta_k}_2 = \calO(e^{-\alpha t})$, the second-order differential operator of the Fokker-Planck equation becomes uniformly elliptic for sufficiently large $t$. In the absense of boundary conditions (equality-only case), the result from \citet{saloff1992uniformly} would then guarantee $\sup_{t\geq 0, x \in \Sigma} \tilde{\rho}_t <\infty$. However, the non-standard boundary conditions imposed on $\Sigma $ in our setting preclude a direct application of those results. We therefore retain the uniform boundedness of $\tilde{\rho}_t$ as an explicit assumption.
    \end{itemize}
\end{remark}

\begin{remark}[Relaxed assumption of \textbf{(M1)}]
    \label{remark:Relaxed assumption of M1}
    We remark that the assumption \textbf{(M1)} can be strong in practice. To mitigate this issue, we propose the following assumption \textbf{(M1$''$)} which is a milder assumption than \textbf{ (M1)}. It assumes 
    \begin{enumerate}[label = (M\arabic*$''$), leftmargin=40pt]
        \item Let $t_1  :=\max\mbra{\frac{1}{\alpha} \ln \sbra{\frac{M_g +\epsilon}{\epsilon}}), \frac{1}{\alpha} \ln \sbra{\frac{M_h}{\delta}}}$ and define $p_t := \bbP(\pi(X_t) \in \partial \Sigma)$ for $t \geq t_1$. Suppose there exist $\gamma_{p}, C_{p} > 0, t_p \geq t_1$ such that $p_t, \partial_t p_t \leq C_p e^{-\gamma_p t} \leq 1/2$ for $t \geq t_p$.
        \label{asm:M1''}
    \end{enumerate}
    We note that this assumption is a necessary condition of the previous assumption (M1), and the landing property guarantees $\lim_{t \rightarrow \infty} p_t = 0$. Also, this assumption ensures that both $p_t, \partial_t p_t$ decays exponentially fast, which can be satisfied depending on how fast the sticking behavior of $\pi(X_t)$ at the bounadry is attenuated as $t$ increases.

\end{remark}

\begin{corollary_ap}[Convergence result for mixed-constrained OLLA with milder assumption] 
\label{cor:Convergence result for mixed-constrained OLLA mild assumption}
    Suppose assumptions \ref{asm:C1} to \ref{asm:C4} and \text{\ref{asm:M1''}} to \ref{asm:M3} hold. Define $X_t$ to be the stochastic process following OLLA dynamics (\ref{eqn:mixed_OLLA_closed_form}) and $\tilde{\rho}_t$ to be the law of $Y_t := \pi(X_t)$ after $t \geq t_{\text{cut}}, t_{\text{cut}} := \max \mbra{\frac{1}{\alpha} \ln \sbra{\frac{M_g +\epsilon}{\epsilon}}, \frac{1}{\alpha} \ln \sbra{\frac{M_h}{\delta}}, \frac{1}{\alpha} \ln (\tilde{C}_5), t_p}$. Then, the following non-asymptotic convergence rate of $W_2(\rho_t, \rho_\Sigma)$ holds for $\alpha >  2\lambda_{\text{LSI}}$, $\beta \geq 1$, and $t > t_{\text{cut}}$:
    \begin{align*}
        W_2(\rho_t, \rho_\Sigma) \leq \frac{M_h}{\kappa}e^{-\alpha t} + \sqrt{\frac{2}{\lambda_{\text{LSI}}} \KL^\Sigma (\tilde{\rho}_t^{\Sigma^{\circ}} || \rho_\Sigma) + C_p e^{-\gamma_p t} \cdot \text{diam}_\Sigma(\Sigma)^2)}
    \end{align*}
    where 
    \begin{align*}
        \mkern-5mu  \KL^\Sigma (\tilde{\rho}_t^{\Sigma^\circ} || \rho_\Sigma) \leq & \exp \sbra{ \mkern-5mu  -2\lambda_{\text{LSI}} (t-t_{\text{cut}}) - \frac{2\lambda_{\text{LSI}} \tilde{C}_5}{\alpha}(e^{-\alpha t} - e^{-\alpha t_{\text{cut}}}) \mkern-5mu  -  \mkern-5mu \frac{C_p}{\gamma_p}\sbra{e^{-\gamma_{p}t} - e^{-\gamma_{p}t_{cut}}}} \times \\
        & [\KL^\Sigma (\tilde{\rho}_{t_{\text{cut}}}^{\Sigma^{\circ}} || \rho_\Sigma) + \tilde{C}_7^{\Sigma^{\circ}} + \tilde{C}_8^{\Sigma^{\circ}}]
    \end{align*}
    for some constants $G_4^{\Sigma^{\circ}}, G_5, G_6^{\Sigma^{\circ}}, \tilde{C}_6 > 0$, $\tilde{C}_7^{\Sigma^{\circ}}:= (\tilde{C}_6 + \alpha G_4^{\Sigma^{\circ}} G_5 M_h) \frac{e^{-\alpha t_{\text{cut}}}}{\alpha - 2\lambda_{\text{LSI}}}$,
    \begin{equation*}
        \tilde{C}_5=\calO \sbra{1+ \frac{\tilde{C}_{L_A}M_h}{\kappa} +\sbra{\frac{\tilde{C}_{L_A}M_h}{\kappa}}^2}, \quad \tilde{C}_8^{\Sigma^{\circ}}:= (G_6^{\Sigma^{\circ}} V M_h^\beta) \frac{e^{-\alpha \beta t_{\text{cut}}}}{\alpha \beta - 2\lambda_{\text{LSI}}},
\end{equation*}
and with $\tilde{C}_{L_A}$ being the Lipschitz constant of $\nabla \pi(x) \Pi(x) $ on $\hat{U}_\delta(\Sigma)$, $\tilde{\rho_t}^{\Sigma^{\circ}}$ being the conditional laws of $Y_t$ given $Y_t \in \text{int}(\Sigma)$.
\end{corollary_ap}
\begin{proof}
    We slightly adjust the argument in the proof of \cref{thm:Convergence result for mixed-constrained OLLA} to show this statement, and the definitions of constants are shared with \cref{thm:Convergence result for mixed-constrained OLLA}. We first observe that the probability measure $\tilde{\rho}_t$ can be decomposed as follows:
    \begin{equation*}
        \tilde{\rho}_t  = (1-p_t) \tilde{\rho}_t^{\Sigma^\circ} + p_t \tilde{\rho}_t^{\partial \Sigma}, \quad t \geq t_{\text{cut}}
    \end{equation*}
    where $p_t:= \tilde{\rho}_t (\partial \Sigma)$ and 
    \begin{equation*}
        \tilde{\rho}_t^{\Sigma^\circ} := \frac{\tilde{\rho}_t \vert_{\Sigma^\circ}}{1-p_t}, \quad \tilde{\rho}_t^{\partial_\Sigma} := \frac{\tilde{\rho}_t \vert_{\partial_\Sigma}}{p_t},
    \end{equation*}
    are the conditional laws of $\tilde{\rho}_t$ restricted on $\Sigma$ and $\partial \Sigma$, respectively.
    
    Because $\tilde{\rho}_t$ is the convex combination of $\tilde{\rho}_t^{\Sigma^\circ}$ and $\tilde{\rho}_t^{\partial \Sigma}$, the convexity of the 2-Wasserstein distance implies
    \begin{equation*}
        W_2^\Sigma (\tilde{\rho}_t, \rho_\Sigma)^2 \leq (1-p_t)W_2^\Sigma(\tilde{\rho}_t^{\Sigma^\circ}, \rho_\Sigma)^2 + p_t W_2^\Sigma (\tilde{\rho}_t^{\partial \Sigma}, \rho_\Sigma)^2.
    \end{equation*}
    Here, we note that 
    \begin{equation*}
        W_2^\Sigma(\tilde{\rho}_t^{\Sigma^\circ}, \rho_\Sigma) \leq \sqrt{\frac{2}{\lambda_{\text{LSI}}} \KL^\Sigma (\tilde{\rho}_t^{\Sigma^\circ} || \rho_\Sigma)}
    \end{equation*}
    by \cref{lem:LSI implies Talagrand inequality} and the fact that $\tilde{\rho}_t^{\Sigma^\circ}$ has its support on $\text{int}(\Sigma)$. Also, we note that 
    \begin{equation*}
        \partial_t \tilde{\rho}_t^{\Sigma^\circ} = \frac{1}{1-p_t}\sbra{\partial_t \tilde{\rho}_t + (\partial_t p_t) \tilde{\rho}_t^{\Sigma^\circ}}
    \end{equation*}
    because $\tilde{\rho}_t = (1-p_t) \tilde{\rho}_t^{\Sigma^\circ}$ holds for $x \in \text{int}({\Sigma})$. Then, the same approach used in \cref{lem:Upper bound of KL_Sigma_mixed} gives
    \begin{align*}
        \partial_t \KL^\Sigma(\tilde{\rho}_t^{\Sigma^\circ} ||\rho_\Sigma) &= \partial_t \int_{\Sigma_t} \tilde{\rho}_t^{\Sigma^\circ} \ln \sbra{\frac{\tilde{\rho}_t^{\Sigma^\circ}}{\rho_\Sigma}} d\sigma_\Sigma \\
        &= \underbrace{\frac{1}{1-p_t}\int_{\Sigma_t} \partial_t \tilde{\rho}_t \ln \sbra{\frac{\tilde{\rho}_t^{\Sigma^\circ}}{\rho_\Sigma}} d\sigma_\Sigma}_{\text{Term (1)}} + \underbrace{\int_{\partial \Sigma_t} \tilde{\rho}_t^{\Sigma^\circ} \lbra{\ln \sbra{\frac{\tilde{\rho}_t^{\Sigma^\circ}}{\rho_\Sigma}} - 1} \inner{v_t^b, n_t} d\sigma_{\partial \Sigma_t}}_{\text{Term (2)}} \\
        &+   \underbrace{\frac{\partial_t p_t}{1-p_t}\KL^\Sigma \sbra{\tilde{\rho}_t^{\Sigma^\circ} || \rho_\Sigma}}_{\text{Term (3)}} + \underbrace{\partial_t \int_{\Sigma_t} \tilde{\rho}_t^{\Sigma^\circ} d\sigma_\Sigma}_{=0}
    \end{align*}
    where the last equality holds from the Leibniz integral rule with $v_t^b$ being the velocity vector of the boundary of $\Sigma_t := \pi(\mbra{ x \in \bbR^d \mid h(x) = h(X_0)e^{-\alpha t}, g(x) \leq 0 })$. Therefore, the expression of $\partial_t \tilde{\rho}_t$ implies that Term (1) becomes
    \begin{align*}
        \text{Term (1)} &= \underbrace{\int_{\Sigma_t} \lbra{\tilde{\rho}_t^{\Sigma^\circ} (\nabla_\Sigma \ln \rho_\Sigma + b_N) -\sum_{k=1}^d \divergence_\Sigma (\tilde{\rho}_t^{\Sigma^\circ} (f_k + \delta_k)) (f_k+\delta_k)}\nabla_\Sigma \ln \sbra{\frac{\tilde{\rho}_t^{\Sigma^\circ}}{\rho_\Sigma}} d\sigma_\Sigma}_{\text{Term (1-1)}} \\
        & \mkern-20mu - \underbrace{\int_{\partial \Sigma_t} \inner{\lbra{\tilde{\rho}_t^{\Sigma^\circ} (\nabla_\Sigma \ln \rho_\Sigma + b_N) -\sum_{k=1}^d \divergence_\Sigma (\tilde{\rho}_t^{\Sigma^\circ} (f_k + \delta_k)) (f_k+\delta_k)} \ln \sbra{\frac{\tilde{\rho}_t^{\Sigma^\circ}}{\rho_\Sigma}}, n_t} d\sigma_{\partial \Sigma_t}}_{\text{Term (1-2)}}
    \end{align*}
    using $\tilde{\rho}_t = (1-p_t)\tilde{\rho}_t^{\Sigma^\circ}$. Therefore, \cref{lem:Upper bound of KL_Sigma_mixed} implies Term (1) can be bounded by
    \begin{equation*}
        \abs{\text{Term (1)}} \leq -(1-\tilde{C}_5 e^{-\alpha t}) I^{\Sigma}(\tilde{\rho}_t^{\Sigma^\circ} || \rho_\Sigma) + \tilde{C}_6 e^{-\alpha t} + \alpha G_4^{\Sigma^{\circ}} G_5 M_h e^{-\alpha t}
    \end{equation*}
    with $G_4^{\Sigma^{\circ}} := G_3 M_\Sigma \max \mbra{\frac{1}{e}, \abs{\frac{G_1^{\Sigma^{\circ}}}{G_2} \ln \sbra{\frac{G_1^{\Sigma^{\circ}}}{G_2}}}}$ and $G_1^{\Sigma^{\circ}} := \sup_{t \geq t_{cut}, x \in \Sigma} \tilde{\rho}_t^{\Sigma^\circ} \leq 2 G_1$, and the Term (2) can be bounded by
    \begin{equation*}
        \abs{\text{Term (2)}} \leq G_6^{\Sigma^{\circ}} V M_h^\beta e^{-\alpha \beta t}.
    \end{equation*}
    with $G_6^{\Sigma^{\circ}} := G_4^{\Sigma^{\circ}} + G_1^{\Sigma^{\circ}} M_\Sigma$. Also, Term (3) is upper bounded by
    \begin{equation*}
        \text{Term (3)} \leq  C_p e^{-\gamma_p t} \KL^\Sigma (\tilde{\rho}_t^{\Sigma^\circ} || \rho_\Sigma)
    \end{equation*}
    Therefore, combining the bounds with the LSI condition gives the following inequality:
    \begin{align*}
        \mkern-5mu \partial_t \KL^\Sigma (\tilde{\rho}_t^{\Sigma^\circ} || \rho_\Sigma) \mkern-5mu  & \leq \mkern-5mu  -2 \lambda_{\text{LSI}} \sbra{1- \tilde{C}_5 e^{-\alpha t} \mkern-5mu - \frac{C_p}{2\lambda_{\text{LSI}}}e^{-\gamma_{p}t} } \KL^\Sigma (\tilde{\rho}_t^{\Sigma^\circ} || \rho_\Sigma) + \sbra{\tilde{C}_6  \mkern-5mu  + \alpha G_4^{\Sigma^{\circ}}G_5 M_h} e^{-\alpha t} \\
        & +G_6^{\Sigma^{\circ}} V M_h^\beta e^{-\alpha \beta t}.
    \end{align*}
    Hence, applying the Gr\"onwall-type inequality recovers the following inequality:
    \begin{align*}
        &\KL^\Sigma (\tilde{\rho}_t^{\Sigma^\circ} || \rho_\Sigma) \leq \exp \sbra{-2\lambda_{\text{LSI}} (t-t_{\text{cut}}) - \frac{2\lambda_{\text{LSI}} \tilde{C}_5}{\alpha}(e^{-\alpha t} - e^{-\alpha t_{\text{cut}}}) - \frac{C_p}{\gamma_p} \sbra{e^{-\gamma_{p}t} - e^{-\gamma_{p}t_{cut}}}} \times \\
        & [\KL^\Sigma (\tilde{\rho}_{t_{\text{cut}}}^{\Sigma^{\circ}} || \rho_\Sigma) +  \mkern-5mu \int_{t_{cut}}^t  \mkern-15mu  \exp  \mkern-5mu \sbra{2\lambda_{\text{LSI}} (s-t_{\text{cut}}) + \frac{2\lambda_{\text{LSI}} \tilde{C}_5}{\alpha}(e^{-\alpha s} - e^{-\alpha t_{\text{cut}}}) + \frac{C_p}{\gamma_p}\sbra{e^{-\gamma_{p}s} - e^{-\gamma_{p}t_{cut}}} \mkern-5mu } \times\\
        &\lbra{ \sbra{\tilde{C}_6  + \alpha G_4^{\Sigma^{\circ}}G_5 M_h} e^{-\alpha s} + G_6^{\Sigma^{\circ}} V M_h^\beta e^{-\alpha \beta s}} ds]
    \end{align*}
    which can again be summarized as follows:
    \vspace{-1cm}
    \begin{align*}
        \KL^\Sigma (\tilde{\rho}_t^{\Sigma^\circ} || \rho_\Sigma) \leq & \exp \sbra{-2\lambda_{\text{LSI}} (t-t_{\text{cut}}) - \frac{2\lambda_{\text{LSI}} \tilde{C}_5}{\alpha}(e^{-\alpha t} - e^{-\alpha t_{\text{cut}}}) - \frac{C_p}{\gamma_p}\sbra{e^{-\gamma_{p}t} - e^{-\gamma_{p}t_{cut}}}} \times \\
        & [\KL^\Sigma (\tilde{\rho}_{t_{\text{cut}}}^{\Sigma^{\circ}} || \rho_\Sigma) + \tilde{C}_7^{\Sigma^{\circ}} + \tilde{C}_8^{\Sigma^{\circ}}]
    \end{align*}
    where $\tilde{C}_7^{\Sigma^{\circ}}:= (\tilde{C}_6 + \alpha G_4^{\Sigma^{\circ}} G_5 M_h) \frac{e^{-\alpha t_{\text{cut}}}}{\alpha - 2\lambda_\text{LSI}}$ and $\tilde{C}_8^{\Sigma^{\circ}}:= (G_6^{\Sigma^{\circ}} V M_h^\beta) \frac{e^{-\alpha \beta t_{\text{cut}}}}{\alpha \beta - 2\lambda_\text{LSI}}$.
    
    Lastly, we observe that
    \begin{equation*}
        W_2^\Sigma (\tilde{\rho}_t^{\partial \Sigma}, \rho_\Sigma) := \sbra{ \inf\limits_{\pi \in \Pi(\tilde{\rho}_t^{\partial \Sigma}, \rho_\Sigma)}\int_{\Sigma \times \Sigma} d_\Sigma (p,q)^2 \pi(dp,dq)}^{\frac{1}{2}} \leq \text{diam}_\Sigma (\Sigma)
    \end{equation*}
    where $\text{diam}_\Sigma (\Sigma) := \sup \mbra{d_\Sigma (x,y) \mid  x,y \in \Sigma} < \infty$ due to the compactness of $\Sigma$. Therefore, combining the above upper bounds, we recover the following inequality:
    \begin{equation*}
        W_2^\Sigma(\tilde{\rho_t},\rho_\Sigma) \leq \sqrt{\frac{2}{\lambda_{\text{LSI}}} \KL^\Sigma (\tilde{\rho}_t^{\Sigma^\circ} || \rho_\Sigma) + C_p e^{-\gamma_{p}} \cdot \text{diam}_\Sigma(\Sigma)^2)}
    \end{equation*}
    Therefore, applying the same reasoning as in \cref{thm:Convergence result for mixed-constrained OLLA} completes the proof.
\end{proof}

\begin{remark}[Effect of number of Hutchinson probes $N$ on the decaying speed of constraint functions] 
     In this remark, under the single equality case, we demonstrate that exponential decay of $h$ via OLLA-H and analyze the effects of the number of Hutchinson probes $N$. When there is only a single equality constraint, we recall that the OLLA-H update rule is given as
     \begin{equation*}
         X_{k+1} = X_k + \sbra{b(X_k) + \epsilon_k(X_k)}\Delta t + \sqrt{2\Delta t} \Pi(X_k) \xi_k, \quad \xi_k \sim \calN(0,I_d) 
     \end{equation*}
    where $b(x) := -\Pi(x) \nabla f(x) - \alpha \nabla h(x) G(x)^{-1}h(x) - \nabla h(x) G^{-1}(x) \trace{\Pi(x) \nabla^2 h(x)}$
    and
    \begin{equation*}
        \hat{S}(x) = \frac{1}{N} \sum_{i=1}^N v_{k,i}^T \Pi(x) \nabla^2 h(x) v_{k,i}, \quad \epsilon_k(x) := -\nabla h(x) G(x)^{-1} \sbra{ \hat{S}(x)- \trace{S(x)}}
    \end{equation*}
    with $S(x) := \Pi(x) \nabla^2h(x)$ and $v_{k,i} \sim \calN(0,I_d)$. Also, we note that $\bbE \lbra{\epsilon_k(X_k) \mid X_k} = 0$ and
    \begin{equation*}
        \bbE \lbra{\epsilon_k(X_k)\epsilon_k(X_k)^T \mid X_k} \mkern-5mu  = \mkern-5mu  \frac{\nabla h(X_k) \nabla h(X_k)^T}{\norm{\nabla h(X_k)}^4} \cdot \text{Var} (\hat{S}(X_k) | X_k) = \frac{2\nabla h(X_k) \nabla h(X_k)^T}{N \norm{\nabla h(X_k)}^4} \norm{S(X_k)}_F^2.
    \end{equation*}
     
    \begin{proposition_ap}[Exponential decay of constraint function under OLLA-H]. Assume the single equality constraint scenario and the following conditions:
    \begin{itemize}[leftmargin=*]
        \item (Dissipativity of $b(x)$) \quad $\inner{x, b(x)} \leq -m \norm{x}^2 + c$ \ for some $m>0, c\geq 0$.
        \item (Linear growth of $b(x)$) \quad $\norm{b(x)}^2 \leq A + B \norm{x}^2$ \ for some $A, B \geq 0$.
        \item (Boundedness of tangential Hessian-gradient ratio) \quad $C_h = \sup_{x \in \bbR^d} \frac{\norm{\Pi(x) \nabla^2 h(x)}_F^2}{\norm{\nabla h(x)}^2} < \infty$.
        \item ($L_1$-smoothness of $h$) \quad For some $L_1 >0$,  $\norm{\nabla^2 h(x)} \leq L_1$ for any $x \in \bbR^d$.
        \item ($L_2$-Lipschitzness of $\nabla^2 h$) \quad For some $L_2 >0$, $\norm{\nabla^2 h(x) - \nabla^2 h(y)} \leq L_2 \norm{x-y}$ for any $x \in \bbR^d$.
        \item (Step size control) \quad $0 < \Delta t < \min \mbra{1, \frac{m}{B}, \frac{1}{2m}}$.
    \end{itemize}
    Then, the OLLA-H dynamics with $N$ Hutchinson probes have the following decaying property of $h$:
    \begin{equation*}
        \bbE \lbra{h(X_K)} \leq \sbra{1-\alpha \Delta t}^K \bbE \lbra{h(X_0)} + \calO \sbra{\frac{\Delta t}{\alpha} \sbra{d + \frac{1}{N}}}
    \end{equation*}
    for $K \geq 0$.
    \end{proposition_ap}
    \begin{proof}
        We first prove that $\sup_{k \geq 0} \bbE \norm{\Delta X_k}^2 <\infty$ where $\Delta X_k := X_{k+1} - X_k$. To show this, we define $\calF_k := \sigma(X_m, \xi_m, v_{m,j} \mid m \leq k, j \leq N)$ to be the $k$ th canonical filtration and observe that
        \begin{align*}
            \bbE \lbra{\norm{X_{k+1}}^2 \mid \calF_k} &= \norm{X_k}^2 + 2 \Delta t \inner{X_k, b(X_k)} + \Delta t^2 \norm{b(X_k)}^2 + \Delta t^2 \bbE \lbra{\norm{\epsilon_k(X_k)}^2 \mid X_k} \\
            &+ 2\Delta t \bbE \lbra{ \norm{\Pi(X_k) \xi_k}^2 \mid X_k}
        \end{align*}
        because $\bbE\lbra{\xi_k \mid X_k} = \bbE \lbra{\epsilon_k (X_k) \mid X_k}=0$ and $\Delta X_k = (b(X_k) + \epsilon_k(X_k))\Delta t + \sqrt{2\Delta t}\Pi(X_k) \xi_k$. Applying the disspativity and linear growth assumption, we have
        \begin{align*}
            \bbE \lbra{\norm{X_{k+1}}^2 \mid \calF_k} &\leq \sbra{1 - (2m - B \Delta t) \Delta t} \norm{X_k}^2 + \sbra{2c + 2(d-1) +(A+ \frac{2C_h}{N}) \Delta t}\Delta t
        \end{align*}
        because 
        \begin{equation*}
            \bbE \lbra{ \norm{\epsilon_k (X_k)}^2 \mid X_k} \leq \frac{2}{N} \frac{\norm{\Pi(X_k) \nabla^2 h(X_k)}_F^2}{\norm{\nabla h(X_k)}^2} \leq \frac{2C_h}{N}
        \end{equation*}
        holds from the tangential Hessian-gradient ratio assumption, and $\bbE \lbra{ \norm{\Pi(X_k) \xi_k}^2 \mid X_k} = d-1$. Because the step size control assumption guarantees $(1- (2m-B\Delta t)\Delta t) \in (0,1)$, we have the following inequality:
        \begin{equation*}
            \sup_{k\geq 0} \bbE[\norm{X_{k}}^2] \leq \frac{2c + 2(d-1) + (A+\frac{2C_h}{N})\Delta t}{2m-B \Delta t} :=  M_N< \infty
        \end{equation*}
        by taking expectations and iterating the recursion. Therefore, it holds that
        \begin{align*}
            \bbE \lbra{ \norm{\Delta X_k}^2 \mid \calF_k } =\bbE \lbra{ \norm{\Delta X_k}^2 \mid X_k }&= \Delta t^2 \norm{b(X_k)}^2 + \Delta t^2 \bbE \lbra{\norm{\epsilon_k(X_k)}^2 \mid X_k} + 2\Delta t (d-1) \\
            &\leq \Delta t^2 \sbra{A + B \norm{X_k}^2} + \frac{2C_h \Delta t^2 }{N} + 2\Delta t(d-1) 
        \end{align*}
        and 
        \begin{equation*}
            \sup_{k\geq0} \bbE \norm{\Delta X_k}^2 \leq \Delta t^2 \sbra{A + B M_N + \frac{2C_h}{N}} +2\Delta t (d-1) := M_1 \Delta t^2 + M_2 \Delta t< \infty
        \end{equation*}
        with $M_1 := A+BM_N + \frac{2C_h}{N}$ and $M_2 = 2(d-1)$.

    Similarly, let us define $B(X_k) := (b(X_k) + \epsilon_k(X_k))\Delta t$. Then, under the similar proof, it holds that
    \begin{align*}
        \bbE \lbra{\norm{B(X_k)}^2 \mid \calF_k} = \bbE \lbra{\norm{B(X_k)}^2 \mid X_k} &= \Delta t^2 \norm{b(X_k)}^2 + \Delta t^2 \bbE \lbra{\norm{\epsilon_k(X_k)}^2 \mid X_k} \\
        &\leq \Delta t^2 \sbra{A + B \norm{X_k}^2} + \frac{2C_h \Delta t^2 }{N}
    \end{align*}
    and 
    \begin{equation}
        \sup_{k \geq 0} \bbE \lbra{\norm{B(X_k)}^2} \leq \Delta t^2 \sbra{A+ BM_N + \frac{2C_h}{N}} = \Delta t^2 M_1.
    \end{equation}
    
    Next, we prove the decaying property of $h:\bbR^d \rightarrow \bbR$. From the 2nd order Taylor expansion on $h$, the following holds almost surely:
    \begin{align*}
        h(X_{k+1}) &= h(X_k) + \nabla h(X_k) \Delta X_k + \frac{1}{2}\Delta X_k^T \nabla^2 h( \tilde{X}_k) \Delta X_k \\
        &= (1 - \alpha \Delta t) h(X_k) - \trace{\Pi(X_k) \nabla^2 h(X_k)}\Delta t+ \sbra{\zeta_k^T \Pi(X_k) \nabla^2 h(X_k) \Pi(X_k) \zeta_k} \Delta t \\
        &+ \frac{1}{2}\Delta X_k^T \nabla^2 h( \tilde{X}_k) \Delta X_k - \sbra{\zeta_k^T \Pi(X_k) \nabla^2 h(X_k) \Pi(X_k) \zeta_k} \Delta t
    \end{align*}
    where $\tilde{X}_k$ is some point $\in \bbR^d$ between $X_k$ and $X_{k+1}$ and $\Delta X_k := X_{k+1} - X_k$. Therefore, by applying thet previous observations, we get the following:
    \begin{align*}
        \bbE[h(X_{k+1}) | \calF_k] &\leq (1-\alpha \Delta t) h(X_k) + \frac{L}{2} \bbE \lbra{\norm{B (X_k)}^2 \mid X_k} + L \Delta t \bbE \lbra{\norm{\Delta X_k}^2 \mid X_k}
    \end{align*}
    which implies
    \begin{align*}
        \bbE[h(X_{k+1})] \leq (1-\alpha \Delta t) \bbE \lbra{ h(X_k)} +\frac{L\Delta t^2 (M_1 (1+ 2\Delta t) + 2M_2)}{2}.
    \end{align*}
    By applying the telescoping sum with expectation, we recover the following formula:
    \begin{equation*}
        \bbE \lbra{h(X_K)} \leq \sbra{1-\alpha \Delta t}^K \bbE \lbra{h(X_0)} + \frac{L\Delta t (M_1 (1+ 2\Delta t) + 2M_2)}{2 \alpha}.
    \end{equation*}
    Finally, we note that 
    \begin{equation*}
        M_N = \calO \sbra{d + \Delta t \sbra{1+ \frac{1}{N}}}, \quad M_1 = \calO \sbra{d + \Delta t \sbra{1+ \frac{1}{N}} + \frac{1}{N}}, \quad M_2 = \calO \sbra{d}
    \end{equation*}
    and, therefore,
    \begin{equation*}
        \bbE \lbra{h(X_K)} \leq \sbra{1-\alpha \Delta t}^K \bbE \lbra{h(X_0)} + \calO \sbra{\frac{\Delta t}{\alpha} \sbra{d + \frac{1}{N}}}.
    \end{equation*}
    \end{proof}
\end{remark}
As we can see in the above proposition, the effect of $N$ scales with $\calO(1/N)$ and vanishes as $N \rightarrow \infty$. Therefore, the usage of the Hutchinson estimator does not affect the convergence speed of constraint functions under the single-equality scenario with sufficiently regular $h$ and $f$.

\begin{remark}[Geometric control of $\lambda_{\text{LSI}}$] 
\label{remark:Comment on LSI}
    We state the following result from geometric analysis that provides a lower bound for $\lambda_{\text{LSI}}$:
    \begin{theorem_ap}[Informal, \cite{ledoux2006concentration, wang2020fast}] Let $\Sigma$ be a compact Riemannian manifold with diameter $D$ and non-negative Ricci curvature. Then, $\lambda_{\text{LSI}} \geq \frac{\lambda_1}{1+2D \sqrt{\lambda_1}}$, or $\lambda_{\text{LSI}} \geq \frac{\pi^2}{(1+2\pi) D^2}$ holds, where $\lambda_1$ is the first eigenvalue of the Laplace-Beltrami operator on $\Sigma$.
    \end{theorem_ap}
    This theorem implies if the constraints shrink the diameter $D$ (or increase $\lambda_1$) of $\Sigma$, the lower bound of $\lambda_{\text{LSI}}$ increases, so the on-manifold decay $\KL^\Sigma(\tilde{\rho}_t || \rho_\Sigma)$ accelerates (dominated by its exponential rate $-2\lambda_{\text{LSI}}$ appearing in \cref{thm:Convergence result for equality-constrained OLLA} or \cref{thm:Convergence result for mixed-constrained OLLA}).
\end{remark}

\clearpage
\section{Algorithms of OLLA, OLLA-H, and baselines}
\label{appendix:sec:Algorithm}
\begin{algorithm}
  \caption{Euler--Maruyama discretization of OLLA \& OLLA-H}
  \label{alg:em_olla}
  \begin{algorithmic}[1]
    \State \textbf{Input:} initial point $x_0\!\in\!\mathbb{R}^d$,
           step size $\Delta t$, number of steps $K$,
           landing rate $\alpha$, boundary repulsion rate $\epsilon$, potential $f$, constraints $\{h_i\}_{i=1}^m,\{g_j\}_{j=1}^{l}$,
           Hutchinson probe numbers $N$,
           \texttt{mode} $\in\{\text{OLLA},\text{OLLA-H}\}$

    \State \textbf{Output:} sample trajectory $\{x_k\}_{k=0}^{K}$
    \Statex
    \vspace{-0.3em}\noindent\hrulefill\vspace{-1.0em}
    \Statex
    \For{$k = 0,\dots,K-1$}                                  
      \State Evaluate constraints: $\{h_i(x_k)\}_{i=1}^{m},  \{g_j(x_k)\}_{j=1}^{l}$
      \State Compute: $\nabla f(x_k)$,  $J(x_k)$, $\nabla J(x_k)$
      \State Compute: $G(x_k) \gets \nabla J(x_k)\,\nabla J(x_k)^{\mathsf T}$  
      \State \textbf{if} $\operatorname{rank}(G)<m+I_x$: \textbf{then} $G^{-1} \gets G^{\dagger}$ \textbf{else} $G^{-1} \gets G^{-1}$ 
      \State \textbf{if} \texttt{mode} = OLLA:
        \Statex\hspace{\algorithmicindent} \quad Compute: $\texttt{Tr} \gets [\trace{\Pi(x_k) \nabla^2 h_1},..., \trace{\Pi(x_k) \nabla^2 g_{i_{\abs{I_x}}}}]^T$
      \State \textbf{else}
        \Statex\hspace{\algorithmicindent} \quad Compute: $\texttt{Tr} \gets \sum_{k=1}^N [v_k^T \Pi(x_k) \nabla^2 h_1 v_k,..., v_k^T \Pi(x_k) \nabla^2 g_{i_{\abs{I_x}}} v_k]^T$, \ $v_k \sim \calN(0,I_d)$
      \State Compute: $\calH(x_k) \gets \nabla J(x_k)^T G^{-1}(x_k) \texttt{Tr}$  
      \State Compute: $q(x_k) \gets - \Pi(x_k) \nabla f(x_k) -\alpha \nabla J(x_k)^T G^{-1}(x_k) J(x_k) + \calH(x_k)$
      \State Update: $x_{k+1}\gets x_k + q(x_k)\,\Delta t + \sqrt{2\Delta t} \Pi(x_k) \xi_k$ where $\xi_k\sim\mathcal N(0,I_d)$
    \EndFor
    \State \Return $\{x_k\}_{k=0}^{K}$
  \end{algorithmic}
\end{algorithm}

\begin{remark}[Pseudo-inverse of Gram matrix when LICQ fails] \ 
    A second issue of OLLA arises if the Gram matrix $G = \nabla J \nabla J^T$ becomes singular. This may happen when the LICQ condition momentarily fails near the neighborhood of $\Sigma$. In that case, we replace the inverse $G^{-1}$ with the Moore-Penrose pseudo-inverse $G^{\dagger}$. Because $G^\dagger$ still projects onto the row space of $\nabla J$ and annihilates its null space, it enforces the same exact orthogonality to constraint gradients, preserving the exponential landing behavior and numerical stability of OLLA. 
\end{remark}

\begin{algorithm}
  \caption{Constrained Langevin (CLangevin) with slack variables \citep{rousset2010free, lelievre2012langevin}} 
  \label{alg:clangevin}
  \begin{algorithmic}[1]
    \State \textbf{Input:} initial position $x_0 \in \Sigma$,
      step size $\Delta t$, number of steps $K$, potential $f$,
    constraints $\{h_i\}_{i=1}^{m}$, $\{g_j\}_{j=1}^{l}$,
      projection iterations $L$, tolerance $\tau$, regularization $\lambda$
    \State \textbf{Output:} sample trajectory $\{x_k\}_{k=0}^{K}$ 
    \Statex  \vspace{-0.3em}\noindent\hrulefill\vspace{-0.8em}\Statex
    \State Initialize slack variables: 
           $s_{0,j} \gets \sqrt{\max\{-2 g_j(x_0),0\}}$ for $j \in [l]$
    \State Set the extended state:  $s_0 \gets (s_{0,1},...,s_{0,l})\in \bbR^l$, $y_0 \gets (x_0,s_{0,1},...,s_{0,l})\in \bbR^{d+l}$

    \For{$k=0,\dots,K-1$}
      \State Draw $\xi_k\sim\mathcal N(0,I_{d+l})$
      \State Compute the augmented constraint vector:
      \begin{equation*}
            J(y_k)=\bigl[h_1(x_k),\dots,h_m(x_k), g_1(x_k)+\frac{1}{2}s_{k,1}^{2},\dots, g_l(x_k)+\frac{1}{2} s_{k,l}^{2}]^T
      \end{equation*}
      \State Constrained update: 
      \begin{align*}
           y_{k+1}\gets y_k - \nabla f_{\mathrm{ext}}(y_k) \Delta t  + \sqrt{2 \Delta t} \xi_k + \nabla J(y_k)^T \lambda_k
      \end{align*}
      \Statex \hspace{\algorithmicindent} where $f_{\mathrm{ext}}(x,s) = f(x)$ and $\lambda_k$ is chosen such that $\norm{J(y_{k+1})}_\infty \leq \tau$ by Newton's 
      \Statex \hspace{\algorithmicindent} method with regularization $\lambda$ and max iterations $L$
      \State Set : $x_k \gets y_k[1:d]$, $s_k \gets y_k[d+1:l]$
    \EndFor
    \State \textbf{Output:} $\{x_k\}_{k=0}^{K}$
  \end{algorithmic}
\end{algorithm}

\begin{algorithm}
  \caption{Constrained HMC (CHMC) with slack variables \citep{rousset2010free, lelievre2012langevin}} 
  \label{alg:chmc}
  \begin{algorithmic}[1]
    \State \textbf{Input:} 
      initial position $x_{0}\in \Sigma$,
      step size $\Delta t$, 
      number of steps $K$, 
      potential $f$,
      constraints $\{h_{i}\}_{i=1}^{m}$,$\{g_{j}\}_{j=1}^{l}$,
      projection iterations $L$, tolerance $\tau$, regularization $\lambda$, friction $\gamma$
    \State \textbf{Output:} sample trajectory $\{x_k\}_{k=0}^{K}$
    \Statex\hrulefill
    \State Initialize slack variable: 
           $s_{0,j} \gets \sqrt{\max\{-2 g_j(x_0),0\}}$ for $j \in [l]$
    \State Set the extended state:  $s_0 \gets (s_{0,1},...,s_{0,l}) \in \bbR^l$, $y_0 \gets (x_0,s_{0,1},...,s_{0,l})\in \bbR^{d+l}$
    \State Sample momentum $p_{0}\sim\calN(0,I_{d+l})$ such that $\nabla J(y_0)p_0 = 0$
    \For{$k=0,\dots,K-1$}
      \State Draw $\xi_{k}, \xi_{k+1/2} \sim \calN (0,I_{d+l})$
      \State Compute the augmented constraint vector:
      \begin{equation*}
            J(y_k)=\bigl[h_1(x_k),\dots,h_m(x_k), g_1(x_k)+\frac{1}{2}s_{k,1}^{2},\dots, g_l(x_k)+\frac{1}{2} s_{k,l}^{2}]^T
      \end{equation*}
      \State \textbf{Midpoint Euler step}:
      \begin{equation*}
          p_{k+1/4} = p_k - \frac{\Delta t}{4} \gamma (p_k + p_{k+1/4}) + \sqrt{\Delta t \gamma } \xi_k + \nabla J(y_k)^T \lambda_{k+1/4}
      \end{equation*}
      \Statex\hspace{\algorithmicindent} such that \ $\nabla J(y_{k}) p_{k+1/4}=0$
      \State \textbf{Verlet step - (1)}:
      \begin{align*}
          &p_{k+1/2} = p_{k+1/4} - \frac{\Delta t}{2} \nabla f_{\rm ext}(y_k) + \nabla J(y_k)^T \lambda_{k+1/2}\\
          &y_{k+1} = y_k +  p_{k+1/2} \Delta t
      \end{align*}
      \Statex\hspace{\algorithmicindent}such that \ $\norm{J(y_{k+1})}_\infty \leq \tau $ by Newton's method with regularization $\lambda$ and max iterations $L$
      \State \textbf{Verlet step - (2)}:
      \begin{equation*}
          p_{k+3/4} = p_{k+1/2} - \frac{\Delta t}{2}\nabla f_{\rm ext}(y_{k+1}) + \nabla J(y_{k+1}) \lambda_{k+3/4} 
      \end{equation*}
      \Statex\hspace{\algorithmicindent} such that $\nabla J(y_{k+1})^T p_{k+3/4} = 0$

      \State \textbf{Midpoint Euler step:}
      \begin{equation*}
          p_{k+1} =p_{k+3/4} - \frac{\Delta t}{4} \gamma (p_{k+3/4} + p_{k+1}) + \sqrt{\Delta t \gamma }\xi_{k+1} + \nabla J(y_{k+1}) \lambda_{k+1}
      \end{equation*}
      \Statex\hspace{\algorithmicindent}such that $\nabla J(y_{k+1})^T p_{k+1} = 0$
      \State Set : $x_k \gets y_k[1:d]$, $s_k \gets y_k[d+1:l]$
      
    \EndFor
    \State \textbf{Output:} $\{x_{k}\}_{k=0}^{K}$ 
  \end{algorithmic}
\end{algorithm}

\begin{algorithm}
  \caption{Constrained generalized HMC (CGHMC) with MH correction \citep{rousset2010free, lelievre2019hybrid}}
  \label{alg:cghmc}
  \begin{algorithmic}[1]
    \State \textbf{Input:} 
      initial position $x_{0}\in \Sigma$,
      step size $\Delta t$, 
      number of steps $K$, 
      potential $f$,
      equality constraints $\{h_{i}\}_{i=1}^{m}$, $\{g_{j}\}_{j=1}^{l}$,
      projection iterations $L$, tolerance $\tau$, 
      regularization $\lambda$, friction $\gamma$
    \State \textbf{Output:} sample trajectory $\{x_{k}\}_{k=0}^{K}$
    \Statex\hrulefill
    \State Sample momentum $p_{0}\sim\calN(0,I_{d})$ such that $\nabla J(x_0)p_0 = 0$ 
    \For{$k=0,\dots,K-1$}
      \State Draw $\xi_{k}, \xi_{k+1/2} \sim \calN (0,I_{d})$
      \State Compute the augmented constraint vector: $J(x_k)=\lbra{h_1(x_k),\dots,h_m(x_k)}$
      \State \textbf{Midpoint Euler step}:
      \begin{equation*}
          p_{k+1/4} = p_k - \frac{\Delta t}{4} \gamma (p_k + p_{k+1/4}) + \sqrt{\Delta t \gamma } \xi_k + \nabla J(x_k)^T \lambda_{k+1/4}
      \end{equation*}
      \Statex\hspace{\algorithmicindent} such that \ $\nabla J(x_{k}) p_{k+1/4}=0$
      \State Compute the Hamiltonian: $H(x_k, p_{k+1/4}) = f(x_k) + \frac{1}{2}\norm{p_{k+1/4}}_2^2$
      \State \textbf{Verlet step - (1)}:
      \begin{align*}
          &p_{k+1/2} = p_{k+1/4} - \frac{\Delta t}{2} \nabla f(x_k) + \nabla J(x_k)^T \lambda_{k+1/2}\\
          &\tilde{x}_{k+1} = x_k + p_{k+1/2} \Delta t
      \end{align*}
      \Statex\hspace{\algorithmicindent -1pt}such that \ $\norm{J(\tilde{x}_{k+1})}_\infty \leq \tau $ by Newton's method with regularization $\lambda$ and max iterations $L$
      \State \textbf{Verlet step - (2)}:
      \begin{equation*}
          \tilde{p}_{k+3/4} = p_{k+1/2} - \frac{\Delta t}{2}\nabla f(\tilde{x}_{k+1}) + \nabla J(\tilde{x}_{k+1}) \lambda_{k+3/4} 
      \end{equation*}
      \Statex\hspace{\algorithmicindent} such that $\nabla J(\tilde{x}_{k+1})^T \tilde{p}_{k+3/4} = 0$
      \State Compute the Hamiltonian: $H(\tilde{x}_{k+1}, \tilde{p}_{k+3/4}) = f(\tilde{x}_{k+1}) + \frac{1}{2}\norm{\tilde{p}_{k+3/4}}_2^2$
      \State \textbf{Metropolis-Hasting Correction:} With probability
      \begin{equation*}
         \min \mbra{\exp \sbra{- (H(\tilde{x}_{k+1}, \tilde{p}_{k+3/4})- H(x_k, p_{k+1/4}))} , 1}
      \end{equation*}
      \Statex\hspace{\algorithmicindent} set 
      \begin{equation*}
            (x_{k+1},p_{k+\frac34})=(\tilde{x}_{k+1}, \tilde{p}_{k+\frac34}), \quad \text{if $g(\tilde{x}_{k+1}) \leq 0$}
      \end{equation*} 
      \Statex\hspace{\algorithmicindent} Otherwise, reject and flip momentum  $(x_{k+1},p_{k+\frac34})=(x_{k},-\,p_{k+\frac14})$
      \State \textbf{Midpoint Euler step:}
      \begin{equation*}
          p_{k+1} =p_{k+3/4} - \frac{\Delta t}{4} \gamma (p_{k+3/4} + p_{k+1}) + \sqrt{\Delta t \gamma }\xi_{k+1} + \nabla J(x_{k+1}) \lambda_{k+1}
      \end{equation*}
      \Statex\hspace{\algorithmicindent}such that $\nabla J(x_{k+1})^T p_{k+1} = 0$
    \EndFor
    \State \textbf{Output:} $\{x_{k}\}_{k=0}^{K}$ 
  \end{algorithmic}
\end{algorithm}

\clearpage

\section{Proof of Basic Properties on $\Sigma$}
\label{appendix:sec:Proof of differential operator properties on Sigma}
\subsection{Intrinsic Gradient on $\Sigma$} Recall that our target manifold $\Sigma$ is defined by $\Sigma := \mbra{x \in \bbR^d \mid h(x) = 0 ,g(x) \leq 0 }$.
For a smooth function $f :\Sigma \rightarrow \bbR$, the Riemannian gradient $\nabla_\Sigma f$ is defined by the relation
\begin{equation*}
    \inner{\nabla_\Sigma f(x), v}  = df_x(v), \text{\quad for every } v \in T_x\Sigma
\end{equation*}
To check $\Pi(x) \nabla f(x) = \nabla_\Sigma f(x)$, observe that $\Pi(x) \nabla f(x) \in T_x \Sigma$ and $\Pi(x) \nabla f(x)$ reproduces $df_x$ on every tangent vector. This is because if $v \in T_x \Sigma$, $\inner{\Pi(x) \nabla f(x), v} = \inner{\nabla f(x), \Pi(x) v} = \inner{\nabla f(x), v} = df_x(v)$ due to the symmetric nature of $\Pi(x)$. Therefore, the intrinsic gradient on $\Sigma$ can be extrinsically defined by $\nabla_\Sigma f(x) = \Pi(x) \nabla f(x)$.

\subsection{Intrinsic Divergence on $\Sigma$}
Recall that if $X \in \calX(\Sigma)$ where $\calX(\Sigma)$ is a set of smooth vector fields on $\Sigma$, then the divergence of $X$ on $\Sigma$ is defined by $\divergence_\Sigma X(x) = \sum_{i=1}^{d-(m+\abs{I_x})} \inner{\nabla^\Sigma_{E_i}  X, E_i}$ where $\nabla^\Sigma$ is Levi-Civita connection on $\Sigma$ and $\mbra{E_1,...E_{d-(m+\abs{I_x})}}$ is an orthonormal frame of $T_x \Sigma$. To check its extrinsic formula, first observe that the induced Levi-Civita connection is given by $\nabla_Y^\Sigma X(x) := (\nabla_Y X)^\top = \Pi(x) \nabla_Y X(x)$ where $X, Y \in \calX(\Sigma)$, $\top$ indicates tangential component on $\Sigma$, and $\nabla$ is the Levi-Civita connection (or usual directional gradient) in $\bbR^d$.
Then, if $X$ and $\mbra{E_1, ... E_{d-(m+I_x)}}$ are extended to the ambient space $\bbR^d$, it holds that 
\begin{align*}
    \divergence_\Sigma X = \sum_{i=1}^{d-(m+\abs{I_x})} \inner{\Pi \nabla_{E_i} X, E_i} = \sum_{i=1}^{d-(m+\abs{I_x})} \inner{\Pi\nabla X E_i, E_i} = \trace{\Pi \nabla X}
\end{align*}
where the last equation is obtained by taking a basis $\mbra{E_1, ...E_{d-(m+\abs{I_x})}, \nu_1,..., \nu_{m+\abs{I_x}}}$ of $\bbR^d$ with $\mbra{\nu_1,...,\nu_{m+\abs{I_x}}}$ being the orthonormal basis of $(T_x\Sigma)^\perp$, and applying the definition of trace. 

\subsection{Recoverable Tubular Neighborhood of Boundaryless Riemannian Manifold}
Let $U_\epsilon (\Sigma) := \mbra{x \in \bbR^d \mid \text{dist}(x, \Sigma) < \epsilon}$ be a tubular neighborhood of $\Sigma$ with reach $\epsilon$, whose existence is guaranteed by the compactness of $\Sigma$ \citep{lee2018introduction}. In the proof of \cref{thm:Convergence result for equality-constrained OLLA}, we require a property that enables us to recover the unique $x\in U_\epsilon (\Sigma)$ such that $y = \pi(x)$ and $h(x) = p$, given the information of $y\in \Sigma$ and $p\in \bbR^m$. 

The following theorem indicates the existence of such a nice neighborhood of $\Sigma$ such that any $x$ in this neighborhood can be recovered from the information of $y, p$. In addition to this, we need a regularity lemma to connect the decrease of $\norm{h(x)}_2$ with the decrease of $\text{dist}(x, \Sigma)$, which is a crucial property to show the convergence of the $W_2$ distance.

\begin{lemma_ap}[Regularity lemma] \ 
\label{lem:regularity lemma}
Assume $h \in C^2$ and LICQ condition is satisfied on $\Sigma$. Then, there exist constants $\hat{\epsilon}, \kappa > 0 $ such that for all $x \in U_{\hat{\epsilon}} (\Sigma) \subset U_\epsilon (\Sigma)$
\begin{equation*}
    \norm{h(x)}_2 \geq \kappa \norm{x-\pi(x)}_2
\end{equation*}
where $\kappa = \frac{1}{2} \min_{y\in \Sigma} \sigma_{min} (\nabla h(y)) > 0$.
\end{lemma_ap}
\begin{proof}
    For each $y \in \Sigma$, $\nabla h(y)$ has full rank by the LICQ condition, so its smallest singular value $\sigma_y := \sigma_{\text{min}}(\nabla h(y)) > 0$. Since, any $x \in U_\epsilon(\Sigma)$ can be decomposed into $x = y + v$ for some $y \in \Sigma, v \in N_y(\Sigma):= (T_y\Sigma)^\perp$, Taylor's theorem gives
    \begin{equation*}
        h(x) = h(y+v) = \nabla h(y)v + R_y(v)
    \end{equation*}
    where the norm of the remainder term $R_y(v)$ is bounded above by $\norm{R_y(v)}_2 \leq \frac{1}{2}M \norm{v}_2^2$ for $M := \sup_{y \in U_{\hat{\epsilon}} (\Sigma)} \norm{\nabla^2 h(y)}_2 < \infty$. Hence,
    \begin{equation*}
        \norm{h(x)}_2 = \norm{h(y+v)}_2 \geq \norm{\nabla h(y)v}_2 - \norm{R_y(v)}_2 \geq \sigma_y \norm{v}_2 -\frac{1}{2} M \norm{v}_2^2.
    \end{equation*}
    Furthermore, because $h \in C^2$ and $y \mapsto \sigma_{min} (\nabla h(y))$ is a continuous function on $\Sigma$, the compactness of $\Sigma$ implies there exists $\sigma = \min_{y \in \Sigma} \sigma_{min} (\nabla h(y)) > 0$.  Also, choose $0 < \hat{\epsilon} \leq \epsilon$ such that $M\hat{\epsilon} \leq \sigma$. Then whenever $x \in U_{\hat{\epsilon}}(\Sigma)$ so that $\norm{v}_2 < \hat{\epsilon}$, we get 
    \begin{equation*}
        \norm{h(x)}_2 \geq \sigma \norm{v}_2 - \frac{1}{2}M \norm{v}_2^2 \geq (\sigma - \frac{1}{2} M\hat{\epsilon}) \norm{v}_2 \geq \frac{\sigma}{2} \norm{v}_2 = \frac{\sigma}{2} \norm{x-\pi(x)}_2. 
    \end{equation*}
\end{proof}

\cref{lem:regularity lemma} shows that an upper bound on $\norm{h(x)}_2$ yields an upper bound on $\norm{x-\pi(x)}_2$; however, this guarantee holds only once $x$ has entered $U_{\hat{\epsilon}}$. Therefore, to ensure that $x$ indeed enters $U_{\hat{\epsilon}}$ whenever $\norm{h(x)}_2$ is sufficiently small, we appeal to \cref{lem:entranec cutoff}.

\begin{lemma_ap}[Entrance cutoff of $U_{\hat{\epsilon}} (\Sigma)$ in terms of $\norm{h(x)}_2$] \    
\label{lem:entranec cutoff}
Assume $h$ is continuous and coercive, i.e. $\norm{h(x)}_2 \rightarrow \infty$ whenever $\norm{x}_2 \rightarrow \infty$. Then, there exists $\delta^* > 0$ such that $\dist(x, \Sigma) < \hat{\epsilon}$ if $\norm{h(x)}_2 < \delta^*$
where $\hat{\epsilon} > 0 $ is the constant defined in \cref{lem:regularity lemma}.  
\end{lemma_ap}

\begin{proof}
    First, we show that $h$ is a proper map. Let $C$ be a compact set in $\bbR^{m}$ so that it is closed and bounded. Then, $h^{-1}(C)$ is closed because $h$ is continuous. Also, suppose $h^{-1}(C)$ were unbounded so that there is a sequence $\mbra{x_k}_{k\in \bbN} \in h^{-1}(C)$ with $\norm{x_k}_2 \rightarrow \infty$. Then, by coercivity of $h$, $\norm{h(x_k)}_2 \rightarrow \infty$ while every $h(x_k)$ lies in $C$, which is bounded and leads to a contradiction. Therefore, $h^{-1}(C)$ is bounded and closed and therefore is compact by the Heine-Borel theorem. This proves $h$ is a proper map.  
    
    Now, define a set $S_{\hat{\epsilon}} := \mbra{x \in \bbR^d  \mid \dist(x, \Sigma) \geq {\hat{\epsilon}}}$ and assume $\inf_{x \in S_{\hat{\epsilon}}} \norm{h(x)}_2 = 0$. In this case, there must be a sequence $\mbra{x_k}_{k \in \bbN} \subset S_{\hat{\epsilon}}$ with $\norm{h(x_k)}_2 \rightarrow 0$. This implies $x_k \in h^{-1}(\bar{B}(0,1))$ for $\forall k \geq K$ for some $K \in \bbN$. Since $h^{-1}(\bar{B}(0,1))$ is a compact set, there is a subsequence $\mbra{x_{k_j}}$ of $\mbra{x_k}$ converging to some $x_*$ from the Bolzano-Weierstrass theorem. Subsequently, the continuity of $h$ implies $h(x_*) = \lim_{j \rightarrow \infty} h(x_{k_j}) = 0$ and $x_* \in \Sigma$. However, because $S_{\hat{\epsilon}}$ is closed and every $x_{k_j} \in S_{\hat{\epsilon}}$, it should satisfy $x_* \in S_{\hat{\epsilon}} \Rightarrow \dist(x_*, \Sigma) \geq {\hat{\epsilon}}$, which is a contradiction that $x_* \in \Sigma \Leftrightarrow \dist(x,\Sigma) = 0$. Therefore, $\delta^* := \inf_{x \in S_{\hat{\epsilon}}} \norm{h(x)}_2 >0$ holds and $\dist(x, \Sigma) < {\hat{\epsilon}}$ if $\norm{h(x)}_2 < \delta^*$.
\end{proof}

\begin{theorem_ap}[Recoverable tubular neighborhood] \ 
\label{thm:recoverable tubular neighborhood}
Let $\Sigma:= \mbra{x \in \bbR^d \mid h(x) =0}$ be a compact and boundaryless Riemannian manifold with LICQ condition. Then, there exists a\textbf{ recoverable tubular neighborhood} of $\Sigma$ with width $\delta$,  $\hat{U}_\delta (\Sigma):= \mbra{x\in \bbR^d \mid \norm{h(x)}_2 < \delta} \subset  U_{\hat{\epsilon}}(\Sigma)$ \, for some $\delta >0$ such that
\begin{enumerate}
    \item The nearest-point projection map $\pi: \hat{U}_\delta (\Sigma) \rightarrow \Sigma, \pi(x) = \argmin_{y \in \Sigma} \norm{x-y}_2 $ is well-defined. 
    \item The following recovery map $\zeta(y,p)$ is well-defined 
    \begin{equation}
        \zeta(y,p): \Sigma \times B(0,\delta) \rightarrow \hat{U}_\delta(\Sigma), \quad \zeta(y,p) = y + \nabla h(y)^T L(y,p)
    \end{equation}
    where $L:\Sigma \times B(0,\delta) \rightarrow \bbR^m$ is the $C^1$ function such that $h(\zeta(y,p)) = p$ and  $\pi(\zeta(y,p)) = y$ \, for\,  $\forall (y,p) \in \Sigma \times B(0,\delta)$.
\end{enumerate}
In this case, we refer to $\hat{U}_\delta(\Sigma)$ as the \textbf{recoverable tubular neighborhood} of $\Sigma$.
\end{theorem_ap}
\begin{proof}
    Define $F((y,p), L):\bbR^{(d+m) + m} \rightarrow \bbR^m$ by $F((y,p), L) = h(y + \nabla h(y)^T L ) - p$. Then, for each $y\in \Sigma$, $F((y,0),0) = h(y) - 0 = 0$. Also, $\nabla_L F((y,p), L) = \nabla h(y+\nabla h(y)^T L) \nabla h(y)^T $ implies $\nabla_L F((y,0), 0) = \nabla h(y) \nabla h(y)^T$, which is invertible due to full rank assumption of $\nabla h(y)$. 
    
    Therefore, by the implicit function theorem, there exists $\delta_y >0$ and an open set $U_y := \mbra{ (y', p') \in \Sigma \times \bbR^m \mid d_\Sigma(y, y') < \delta_y, \norm{p'}_2 < \delta_y} \subset \bbR^{d+m}$ such that there exists a unique $C^1$ function $L: U_y \rightarrow \bbR^m$ satisfying $L(y,0) = 0$ and $F((y,p), L(y,p)) = 0$ for $\forall (y,p) \in U_y$. 
    
    Now, observe that $\cup_{y \in \Sigma} U_y$ is the open cover of $\Sigma \times {\mbra{0}}$, which is a compact set. Therefore, using its finite subcovers, we can pick $\hat{\delta} > 0$ such that for $\forall y \in \Sigma$ and $\norm{p}_2 < \hat{\delta}$, $L(y,p)$ is well-defined $C^1$ function on $\Sigma \times B(0,\hat{\delta})$ and the recovery map $\zeta(y,p) : \Sigma \times B(0,\hat{\delta}) \rightarrow \bbR^d$, $\zeta(y,p) = y+ \nabla h(y)^T L(y,p)$ is also well-defined and $C^1$ on $\Sigma \times B(0,\hat{\delta})$ such that $h(\zeta(y,p)) = p$ holds (from $F((y,p), L(y,p))= 0$).
    
    Finally, setting $\delta = \min \mbra{\hat{\delta}, \delta^*}$ (where the constant $\delta^*$ comes from \cref{lem:entranec cutoff}) and $\hat{U}_\delta (\Sigma):= \mbra{x\in \bbR^d \mid \norm{h(x)}_2 < \delta} \subset U_{\hat{\epsilon}}(\Sigma)$ enables the well-defined nearest-point projection map $\pi$ on $\hat{U}_\delta(\Sigma)$. By applying $\pi$ to $\zeta(y,p)$, we recover the formula $\pi(\zeta(y,p)) = \pi(y + \nabla h(y)^T L(y,p)) = y, \ \forall (y,p) \in \Sigma \times B(0,\delta)$.
\end{proof}

\subsection{Recoverable Tubular Neighborhood of Riemannian Manifold with Boundary}

The following results generalize the proceeding lemmas and theorems by incorporating inequality constraints alongside the equality constraints.

\begin{lemma_ap}[Regularity lemma with boundary] \ 
\label{lem:regularity lemma with boundary}
Let $\Sigma = \mbra{ x \in \bbR^d \mid h(x) =0 , g(x) \leq 0}$. Assume $h,g \in C^2$ and LICQ condition is satisfied on $\Sigma$. Then, there exist constants $\hat{\epsilon}, \kappa > 0 $ such that for all $x \in U_{\hat{\epsilon}} (\Sigma) \subset U_\epsilon(\Sigma)$
\begin{equation*}
    \norm{h(x)}_2 + \norm{g_{I_{\pi(x)}} (x)}_2 \geq \kappa \norm{x-\pi(x)}_2
\end{equation*}
where $\kappa = \frac{1}{2} \min_{y\in \Sigma} \sigma_{\min} \sbra{\lbra{\nabla h(y)^T, \nabla g_{I_y}(y)^T}} > 0$.
\end{lemma_ap}
\begin{proof}
    For every subset $I \subset [l]$, consider $\Sigma_I := \mbra{y \in \Sigma \mid I_y =I}$. Then for $y \in \Sigma_I$, $\nabla J_I(y) :=[\nabla h(y)^T, \nabla g_{I}(y)^T]^T \in \bbR^{(m+\abs{I}) \times d}$ has full rank due to the LICQ condition, so its smallest singular value $\sigma^I_y := \sigma_{\text{min}}(\nabla J_I(y))> 0$. Because $y \mapsto \sigma^I_y$ is continuous on $\Sigma_I$, we also have $\sigma := \min\limits_{I \subset [l]} \inf\limits_{y\in \Sigma_I} \sigma^I_y > 0$ by \cref{lem: inf sigma_y_I >0}. Also, since any $x \in U_\epsilon(\Sigma)$ can be decomposed into $x = y + v$ for some $y \in \Sigma, v \in N_y(\Sigma)$, Taylor's theorem gives
    \begin{equation*}
        h(x) = h(y+v) = \nabla h(y)v + R_h(v), \quad g_{I_y}(x) = g_{I_y}(y+v) = \nabla g_{I_y}(y)v+ R_g(v)
    \end{equation*}
    where the norm of the remainder term $R_h(v), R_g(v)$ is bounded above by $\norm{R_h(v)}_2 \leq \frac{1}{2}M \norm{v}_2^2, \norm{R_g(v)}_2 \leq \frac{1}{2} M \norm{v}_2^2$ for $M := \max\limits_{z\in U_\epsilon (\Sigma)} \mbra{ \norm{\nabla^2 h(z)}_2, \norm {\nabla^2 g(z)}_2} <\infty$. Now, we set $w = \nabla J(y) v $, which satisfies $\norm{w}_2 \geq \sigma \norm{v}_2$ by the definition of $\sigma$. Then, we observe
    \begin{align*}
        \norm{h(x)}_2 + \norm{g_{I_y}(x)}_2 = \norm{w}_2 - \sbra{ \norm{w}_2 - \norm{h(x)}_2  - \norm{g_{I_y}(x)}_2} \overset{(\circ)}{\geq} \sigma \norm{v}_2 - M\norm{v}_2^2
    \end{align*}
    where $(\circ)$ comes from the following observation:
    \begin{align*}
        \norm{w}_2 - (\norm{h(x)}_2 + \norm{g_{I_y}(x)}_2) &\leq \norm{w}_2 - \norm{[h(x)^T, g_{I_y}(x)^T]^T}_2 \leq \norm{w - [h(x)^T, g_{I_y}(x)^T]^T}_2 \\
        &= \norm{[R_h(v)^T, R_g(v)^T]^T}_2 \leq \norm{R_h(v)}_2 + \norm{R_g(v)}_2 \\
        &\leq M\norm{v}_2^2.
    \end{align*}
    Now, choose small $0 < \hat{\epsilon} \leq \epsilon$ such that $M\hat{\epsilon} \leq \frac{\sigma}{2}$. Then whenever $x \in U_{\hat{\epsilon}} (\Sigma)$ so that $\norm{v}_2 < \hat{\epsilon}$, we get 
    \begin{equation*}
        \norm{h(x)}_2 + \norm{g_{I_{\pi(x)}}(x)}_2 \geq \frac{\sigma}{2} \norm{v}_2 = \frac{\sigma}{2}\norm{x-\pi(x)}_2.
    \end{equation*}
\end{proof}

\begin{lemma_ap} \ 
    \label{lem: inf sigma_y_I >0}
    For every subset $I \subset [l]$, consider $\Sigma_I := \mbra{y \in \Sigma \mid I_y =I}$. For $y \in \Sigma_I$, define  $\nabla J_I(y):=[\nabla h(y)^T, \nabla g_{I}(y)^T]^T \in \bbR^{(m+\abs{I}) \times d}$ and $\sigma^I_y := \sigma_{min}(\nabla J_I(y)) > 0$. Then, $\inf\limits_{y\in \Sigma_I} \sigma^I_y > 0$ holds.
\end{lemma_ap}
\begin{proof}
    Suppose $\inf\limits_{y \in \Sigma_I} \sigma_y^I = 0$, then there exists a sequence $y_k \in \Sigma_I $ with $\sigma_{min}(\nabla J_I (y_k)) \rightarrow 0$. Because $\Sigma$ is compact, the sequence has a subsequence of $y_k$ converging to $y_* \in\Sigma$ by Bolzano–Weierstrass theorem. In this case, there are two possibilities:
    \begin{enumerate}
        \item When $I_{y_*} = I$,$\quad $ $\sigma_{min}(J_I(y_*)) >0$ by LICQ, contradicting the assumption $\inf\limits_{y \in \Sigma_I} \sigma_y^I = 0$.
        \item When $I_{y_*} = I \cup K$ for some non-empty set $K \subset [l]$, the LICQ condition implies $\quad $ $\nabla J_{I \cup K} (y_*) = [\nabla h(y_*)^T, \nabla g_I(y_*)^T, \nabla g_K (y_*)^T]^T$ has a full rank, thereby, imposing $\nabla J_I(y_*)$ to have full rank. This again implies $\inf\limits_{y \in \Sigma_I} \sigma_y^I > 0$, a contradiction.
    \end{enumerate}
    Note that $I_{y_*}$ does not deactivate already activated inequality index $i \in I$. This is because $g_i(y_k) = 0$ for $\forall i \in I$ and continuity of $g_i$ implies $g_i(y_*) = 0$. Thus, the previous argument on two possibilities completes the proof.  
\end{proof}

\begin{lemma_ap}[Entrance cutoff of $U_{\hat{\epsilon}}(\Sigma)$ in terms of $\norm{h(x)}_2$ and $g(x)$] \ 
\label{lem:entranec cutoff_boundary}
Let $\Sigma := \mbra{x \in \bbR^d \mid h(x) =0 , g(x) \leq 0}$. Assume $h,g$ are continuous and $h$ is coercive, i.e. $\norm{h(x)}_2 \rightarrow$ whenever $\norm{x}_2 \rightarrow \infty$. Then, there exists $\delta^* > 0$ such that $\dist(x, \Sigma) < {\hat{\epsilon}}$ if $\norm{h(x)}_2 < \delta^*$ and $ g_i(x) < \delta^*$ for $i \in [l]$,
where ${\hat{\epsilon}} > 0$ is defined in \cref{lem:regularity lemma with boundary}.
\end{lemma_ap}
\begin{proof}
    Define $S_{\hat{\epsilon}} := \mbra{x \in \bbR^d  \mid \dist(x, \Sigma) \geq {\hat{\epsilon}}}$ and let $\psi(x) := \max \mbra{ \norm{h(x)}_2, g_1(x), ..., g_l(x)}$. Assume $\inf_{x \in S_{\hat{\epsilon}}} \psi(x)= 0$. In this case, there must be a sequence $\mbra{x_k}_{k \in \bbN} \subset S_{\hat{\epsilon}}$ with $\norm{h(x_k)}_2 \rightarrow 0$ and $g_i(x_k) \leq \psi(x_k) \rightarrow 0$ for $\forall i \in [l]$.
    
    This implies $x_k \in h^{-1}(\bar{B}(0,1)), \ \forall k \geq K$ and for some $K \in \bbN$. Since $h^{-1}(\bar{B}(0,1))$ is a compact set due to the properness of $h$ (\cref{lem:entranec cutoff}), there exists a subsequence $\mbra{x_{k_j}}$of $\mbra{x_k}$ converging to some $x_*$ from the Bolzano-Weierstrass theorem. Subsequently, the continuity of $h$ implies $h(x_*) = \lim_{j \rightarrow \infty} h(x_{k_j}) = 0, g_i(x) \leq 0, \forall i \in [l]$, therefore, $x_* \in \Sigma$.
    
    However, because $S_{\hat{\epsilon}}$ is closed and every $x_k \in S_{\hat{\epsilon}}$, it should satisfy $x_* \in S_{\hat{\epsilon}} \Rightarrow \dist(x_*, \Sigma) \geq {\hat{\epsilon}}$, which is a contradiction that $x_* \in \Sigma \Leftrightarrow \dist(x,\Sigma) = 0$. Therefore, $\delta^* := \inf_{x \in S_{\hat{\epsilon}}} \psi(x) >0$ holds and $\dist(x, \Sigma) < {\hat{\epsilon}}$ if $\psi(x) < \delta^*$.
\end{proof}

\begin{theorem_ap}[Recoverable tubular neighborhood with boundary] \ 
\label{thm:recoverable tubular neighborhood with boundary}
Let $\Sigma:= \mbra{x \in \bbR^d \mid h(x) =0, g(x) \leq 0}$ be a compact Riemannian manifold with boundary. Assume LICQ condition on $\Sigma$. Then, there exists a\textbf{ recoverable tubular neighborhood} of $\Sigma$ with width $\delta$, $\hat{U}_\delta (\Sigma):= \mbra{x\in \bbR^d \mid \norm{h(x)}_2 < \delta, g(x) < \delta} \subset  U_{\hat{\epsilon}}(\Sigma)$ \, for some $\delta >0$ such that
\begin{enumerate}
    \item The nearest-point projection map $\pi: \hat{U}_\delta (\Sigma) \rightarrow \Sigma, \pi(x) = \argmin_{y \in \Sigma} \norm{x-y}_2 $ is well-defined. 
    \item The following recovery map $\zeta(y,p,q_{I_y})$ is well-defined 
    \begin{equation*}
        \zeta(y,p,q_{I_y}): \Sigma \times B_m(0,\delta) \times B_{\abs{I_y}}(0,\delta) \rightarrow \hat{U}_\delta(\Sigma), \  \zeta(y,p, q_{I_y}) = y + \nabla J(y)^T L(y,p, q_{I_y})
    \end{equation*}
    where $I_y$ is the index set of active inequalities at $y \in \Sigma$, $J(y) := [h(y) ,g_{I_y}(y)]\in \bbR^{m + \abs{I_y}}$, and  $L:\Sigma \times B_m(0,\delta) \times B_{\abs{I_y}}(0, \delta) \rightarrow \bbR^{m + \abs{I_y}}$ is the function such that 
    \begin{equation*}
        h(\zeta(y,p,q_{I_y})) = p,  \quad 
        \begin{cases}
            \begin{aligned}[t]
                &g_i(\zeta(y,p,q_{I_y})) = q_i, &&i\in I_y\\
                &g_i(\zeta(y,p,q_{I_y})) <0,  && i \notin I_y,
            \end{aligned}
        \end{cases}   \quad \pi(\zeta(y,p, q_{I_y})) = y
    \end{equation*}
    $\forall (y,p,q_{I_y}) \in \Sigma \times B_m(0,\delta) \times B_{\abs{I_y}}(0,\delta)$. Furthermore, when $y \in \text{int} (\Sigma)$, $L = L(y,p)$ is a $C^1$ function on $\Sigma \times B_m(0,\delta)$.
\end{enumerate}
\end{theorem_ap}
\begin{proof}
    Let us first define $F((y,p,q_{I_y}),L): \bbR^{(d+m + \abs{I_y}) + (m+\abs{I_y})} \rightarrow \bbR^{m+\abs{I_y}}$, $F((y,p,q_{I_y}),L):= \sbra{h(y+\nabla J(y)^T L)-p, g_{I_y}(y+ \nabla J(y)^T L) - q_{I_y}} \in \bbR^{m + \abs{I_y}}$ and consider the stratum $\Sigma_{I} := \mbra{y \in \Sigma \mid I_y = I}$ for each subset $I$ of inequality indices.
    
    Then for  $y \in \Sigma_I$, we observe that $F((y,0,0),0) = 0$ and $\nabla_L F((y,0,0),0) = \nabla J(y) \nabla J(y)^T$ which is invertible by the LICQ condition. Therefore, the implicit function theorem ensures the existence of $\delta_y$ > 0, $U_y:= \mbra{ (y', p', q_{I}') \in \Sigma_I \times \bbR^m \times \bbR^{I} \mid   d_\Sigma(y, y') < \delta_y, \norm{p'}_2 < \delta_y, \norm{q'_I}_2 < \delta_y}$ such that there exists a unique $C^1$ map $L: U_y \rightarrow \bbR^{m + \abs{I}}$ satisfying $F((y,p, q_{I}), l(y, p, q_{I})) = 0, \ \forall (y,p,q_{I}) \in U_y$.

    Now, observe that $\cup_{y \in \Sigma_I} U_y$ is the open cover of $\Sigma_I \times \mbra{0}_m \times \mbra{0}_{I}$, which is a compact set. Therefore, using the finite subcovers of $\Sigma_I$, we can pick $\hat{\delta}_I > 0$ such that for $\forall y \in \Sigma_I$, $\norm{p}_2 < \hat{\delta}_I$, and $\norm{g_I}_2 < \hat{\delta}_I$, $L(y,p, q_I)$ is well-defined $C^1$ function on $\Sigma_I \times B_m(0,\hat{\delta}_I) \times B_{\abs{I}}(0, \hat{\delta}_I)$ and the recovery map $\zeta(y,p, q_I) : \Sigma \times B_m(0,\hat{\delta}) \times B_{\abs{I}
    }(0,\hat{\delta}) \rightarrow \bbR^d$, $\zeta(y,p, q_I) = y+ \nabla J(y)^T L(y,p,q_I)$ is also well-defined $C^1$ map on $\Sigma_I \times B_m(0,\hat{\delta}_I) \times B_{\abs{I}} (0, \hat{\delta}_I)$ such that $h(\zeta(y,p, q_I)) = p$ and $g_i(\zeta(y,p,q_I)) = q_i$ for $i \in I$, which comes from the property $F(y,p,q_I), l(y,p,q_I))= 0$. 
    
    Furthermore, from the continuity of $g_i, i \notin I$, there exists $\gamma_I> 0$ such that $g_i(z) \leq - \gamma_I$ for all $z \in B(y, \gamma_I)$ and the compactness of $\bar{U}_{\hat{\epsilon}} (\Sigma)$ gives $G$-Lipschitzness of $g_i$ on $U_{\hat{\epsilon}}(\Sigma)$ for all $i \in [l]$, for some $G >0$. Also, we recall that $L(y,p,q_I)$ is $C^1$ map on $\mbra{(y,p,q_I) \mid y \in \Sigma_I, \norm{p}_2 < \hat{\delta}_I, \norm{g_I}_2 < \hat{\delta}_I}$ with $L(y,0,0) = 0$. 
    
    Hence, the Taylor expansion of $L(y,p, q_I)$ with boundedness of $\norm{\nabla_p L(y,p,q_I)}_2, \norm{\nabla_{q_I} L(y,p,q_I)}_2$ on $\mbra{(y,p,q_I) \mid y \in \Sigma_I, \norm{p}_2 < \hat{\delta}_I, \norm{g_I}_2 < \hat{\delta}_I}$ and boundedness of $ \norm{\nabla J(y)}_2$ on $\Sigma$ gives
    \begin{equation*}
        \norm{L(y,p,q_I)} \leq C_I (\norm{p}_2 + \norm{q_I}_2) \quad \Rightarrow \quad \norm{\nabla J(y)^T L(y,p,q_I)}_2 \leq C_I' (\norm{p}_2 + \norm{q_I}_2)
    \end{equation*}
    for some $C_I, C_I' > 0$. Then, choosing $\tilde{\delta}_I := \min \mbra{\hat{\delta}_I, \frac{\gamma_I}{4C_I' G}}$ concludes that whenever $\norm{p}_2 < \tilde{\delta}_I, \norm{q_I}_2 < \tilde{\delta}_I$, the inequality
    \begin{equation*}
        g_i(\zeta(y,p,q_I)) = g_i(y+\nabla J(y)^T L(y,p,q_I)) \leq g_i(y) + G \norm{\nabla J(y)^T L(y,p,q_I)}_2 \leq -\frac{\gamma_I}{2} < 0 
    \end{equation*}
    holds for all $i\notin I$. Finally, setting $\delta = \min \mbra{\min_{I\subset [l]} \tilde{\delta}_I, \delta^*}$ (where the constant $\delta^*$ comes from \cref{lem:entranec cutoff_boundary}) and $\hat{U}_\delta (\Sigma):= \mbra{x\in \bbR^d \mid \norm{h(x)}_2 < \delta, \norm{g(x)}_2 <\delta} \subset U_{\hat{\epsilon}}(\Sigma)$ enable well-defined nearest-point projection map $\pi$ on $\hat{U}_\delta(\Sigma)$. By applying $\pi$ to $\zeta(y,p, g_{I_y})$, we recover the formula $\pi(\zeta(y,p, q_{I_y})) = \pi(y + \nabla J(y)^T l(y,p,q_{I_y})) = y, \ \forall (y,p, q_{I_y}) \in \Sigma \times B_m(0,\delta) \times B_{\abs{I_y}}(0,\delta)$.
\end{proof}

\section{Construction of SDE with Exponentially Fast Decaying Constraints }
\label{appendix:sec:Construction of SDE decaying constraints exponentially fast}

\begin{proposition*}[Construction of OLLA and its closed form SDE] \
    \label{prop:Construction of OLLA and its closed form SDE_app}
    Consider the following SDE:
    \begin{equation}
    \label{eqn:mixed_OLLA_premitive_form_app}
    dX_t = q(X_t)dt + Q(X_t) dW_t
    \end{equation}
    where
    \begin{align}
    Q&:= \argmin\limits_{\bar{Q}\in\bbR^{d\times d}} \norm{\sqrt{2}I-\bar{Q}}_F^2 \quad \text{s.t} \quad     \begin{cases}
        \bar{Q}  \nabla h_i = 0, \ \forall i \in [m],\\
          \bar{Q} \nabla g_j = 0, \ \forall j \in I_x.
    \end{cases} \notag \\
    q&:= \argmin\limits_{\bar{q}\in\bbR^{d}} \norm{\bar{q} + \nabla f}_2^2 \quad \text{s.t} \quad 
    \begin{cases}
        \nabla h_i^T \bar{q} + \frac{1}{2} \trace{\nabla^2h_i QQ^T} + \alpha h_i = 0, \qquad \ \ \  \forall i \in [m], \notag \\
        \nabla g_j^T \bar{q} + \frac{1}{2} \trace{\nabla^2g_j QQ^T} + \alpha (g_j + \epsilon) = 0, \ \forall j \in I_x
    \end{cases}
    \end{align}
    Then, there exists a closed form SDE of \eqref{eqn:mixed_OLLA_premitive_form_app} given by:
    \begin{align*}
        dX_t &= -[\Pi(X_t) \nabla f(X_t) +\alpha \nabla J(X_t)^T G^{-1}(X_t) J(X_t)]dt + \calH(X_t)dt + \sqrt{2}\Pi(X_t),&& \text{(Ito)} \\
        dX_t &= -[\Pi(X_t) \nabla f(X_t) +\alpha \nabla J(X_t)^T G^{-1}(X_t) J(X_t)]dt + \sqrt{2}\Pi(X_t) \circ dW_t,&& \text{(Strato.)}
    \end{align*}
    where $\circ$ denotes the Stratonovich integral and
    \begin{equation*}
        \calH := -\nabla J^T G^{-1} \lbra{\trace{\nabla^2 h_1 \Pi}, ..., \trace{\nabla^2 h_{m} \Pi} \trace{\nabla^2 g_{i_1} \Pi} ,..., \trace{\nabla^2 g_{i_{\abs{I_x}}} \Pi}}^T
    \end{equation*} is the related Ito-Stratonovich correction, or mean curvature of $\mbra{ x \in \bbR^d \mid h(x)=0 , g_{I_x}(x)= 0 }$.
\end{proposition*}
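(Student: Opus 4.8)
The plan is to solve the two least-squares problems in closed form and then reconcile the resulting Itô drift with its Stratonovich counterpart. The two optimizations decouple: $Q$ is determined first (and enters the $q$-constraints through $QQ^T$), after which $q$ is a standard equality-constrained least-squares problem.

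First I would handle the diffusion matrix. The constraints $\bar Q\nabla h_i = 0$ and $\bar Q\nabla g_j = 0$ say exactly that $\bar Q$ annihilates every vector in the normal space $\mathrm{span}\{\nabla h_i,\nabla g_j\}_{j\in I_x}$, which is the range of $\nabla J(x)^T$. Equivalently the feasible set is the linear subspace $\mathcal V := \{M\in\bbR^{d\times d}: M(I-\Pi)=0\} = \{M : M=M\Pi\}$, where $\Pi = I-\nabla J^T G^{-1}\nabla J$ is the orthogonal projector onto $T_x\Sigma$. Since $\min_{M\in\mathcal V}\norm{\sqrt2 I - M}_F^2$ is an orthogonal projection in the Frobenius inner product, I would split $\sqrt2 I = \sqrt2\Pi + \sqrt2(I-\Pi)$ and check that $\sqrt2\Pi\in\mathcal V$ while $\sqrt2(I-\Pi)\perp\mathcal V$: for $M=M\Pi$ one has $\trace{M^T(I-\Pi)} = \trace{M^T(I-\Pi)\Pi} = 0$ by the cyclic property and $(I-\Pi)\Pi=0$. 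This identifies the unique minimizer $Q=\sqrt2\,\Pi$, hence $QQ^T = 2\Pi$.

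Next I would substitute $QQ^T=2\Pi$ into the drift constraints so that $\tfrac12\trace{\nabla^2 h_i QQ^T} = \trace{\nabla^2 h_i\Pi}$, and collect all $m+\abs{I_x}$ equations into the single system $\nabla J\,\bar q = -T - \alpha J$, where $T := [\trace{\nabla^2 h_1\Pi},\dots,\trace{\nabla^2 g_{i_{\abs{I_x}}}\Pi}]^T$. By LICQ, $\nabla J$ has full row rank, so the system is consistent and the strictly convex objective has a unique constrained minimizer. Changing variables to $u=\bar q+\nabla f$ converts the problem to $\min\norm{u}_2^2$ subject to $\nabla J\,u = \nabla J\nabla f - T - \alpha J$, whose minimum-norm solution is $u = \nabla J^T G^{-1}(\nabla J\nabla f - T - \alpha J)$. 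Substituting back and regrouping via $\Pi = I-\nabla J^T G^{-1}\nabla J$ and $\calH = -\nabla J^T G^{-1}T$ yields $q = -\Pi\nabla f - \alpha\nabla J^T G^{-1}J + \calH$, which is exactly the Itô drift of the stated closed form.

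Finally, since the primitive SDE is an Itô equation, the derived $q$ already produces the Itô form, and the Stratonovich form follows by showing that the Itô–Stratonovich correction of the diffusion $\sqrt2\,\Pi$ equals $\calH$. In coordinate $k$ this correction is $\tfrac12\sum_{j,l}b_{lj}\,\partial_l b_{kj}$ with $b=\sqrt2\,\Pi$, i.e. $\sum_{j,l}\Pi_{lj}\partial_l\Pi_{kj}$, and I would identify it with the mean-curvature vector of $\Sigma_{I_x}$ either by invoking the manifold stochastic-calculus identity of \citet{rousset2010free} (Lemma 3.19), or by verifying $\sum_{j,l}\Pi_{lj}\partial_l\Pi_{kj} = -(\nabla J^T G^{-1}T)_k$ directly—differentiating $\Pi = I-\nabla J^T G^{-1}\nabla J$ and using the orthogonality relations $\Pi\,\nabla J^T = 0$ and $\nabla J\,\Pi = 0$ together with the derivative of $G^{-1}$. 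I expect this identification to be the main obstacle: the two least-squares computations are routine linear algebra, whereas matching the projection-diffusion correction to the mean-curvature term $\calH$ requires careful differentiation of $G^{-1}$ and $\nabla J$ and is where all the differential-geometric content resides.
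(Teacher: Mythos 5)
Your proposal is correct and follows essentially the same route as the paper: you solve the two constrained least-squares problems to obtain $Q=\sqrt{2}\,\Pi$ and the It\^o drift $-\Pi\nabla f-\alpha\nabla J^T G^{-1}J+\calH$, and then identify the It\^o--Stratonovich correction of the diffusion $\sqrt{2}\,\Pi$ with the mean-curvature term $\calH$ by appealing to the same tensor-calculus identity in \citet{rousset2010free} that the paper invokes. The only difference is cosmetic: you solve the subproblems via Frobenius-orthogonal projection and the minimum-norm formula for an underdetermined linear system, whereas the paper writes out Lagrangian stationarity conditions, both yielding identical expressions.
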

\begin{proof}
Define $J(x) := [h(x)^T, g_{I_x}^T +\epsilon \eye_{\abs{I_x}}]^T \in \bbR^{m + \abs{I_x}}$ and $\nabla J(x) := [\nabla h(x)^T, \nabla g_{I_x}(x)^T]^T \in \bbR^{(m+ \abs{I_x}) \times d}$. Then, the Lagrangian function associated with the optimization problem for $Q$ becomes  $\calL(\bar{Q}, \Lambda) := \norm{\sqrt{2}I_d - \bar{Q}}_F^2 + \trace{\Lambda^T \nabla J \bar{Q} }$ where $\Lambda \in \bbR^{(m+\abs{I_x}) \times d}$ is a Lagrangian multiplier. Then, the stationarity condition with respect to $\bar{Q}$ gives
\begin{equation*}
    0 = \frac{\partial L}{\partial \bar{Q}} = -2(\sqrt{2}I- \bar{Q}) + \Lambda^T \nabla J  \quad \Rightarrow \quad \bar{Q} = \sqrt{2}I - \frac{1}{2} \Lambda^T \nabla J.
\end{equation*}
Also, the constraint condition implies
\begin{equation*}
    0 = \bar{Q} \nabla J^T =  (\sqrt{2}I - \frac{1}{2}\Lambda^T \nabla J) \nabla J^T  \quad \Rightarrow \quad \Lambda^T = 2\sqrt{2} \nabla J^T (\nabla J \nabla J^T)^{-1},
\end{equation*}
where $\nabla J \nabla J^T$ is invertible due to the LICQ condition. Therefore, we get optimal $Q = \Pi :=  \sqrt{2}\sbra{I - \nabla J^T (\nabla J \nabla J^T)^{-1} \nabla J}$. 
For the $q$ part, we set 
\begin{equation*}
    b := \frac{1}{2}\lbra{\trace{\nabla^2 h_1 QQ^T} + \alpha h_1, ..., \trace{\nabla^2 g_{i_{\abs{I_x}}} QQ^T } + \alpha (g_{i_{\abs{I_x}}} +\epsilon)}^T
\end{equation*}
and the associated Lagrangian function $L(\bar{q}, \lambda) := \norm{\bar{q} + \nabla f(x)}^2 + \lambda^T (\nabla J \bar{q} + b)$ with $\lambda \in \bbR^{m + \abs{I_x}}$ being an Lagrangian multiplier. Then, the stationarity with respect to $\bar{q}$ gives
\begin{equation*}
    0 = \frac{\partial L}{\partial \bar{q}} = 2(\bar{q} + \nabla f) + \nabla J^T \lambda  \quad \Rightarrow \quad \bar{q} = -\nabla f - \nabla J^T \lambda.
\end{equation*}
Again, the constraint condition implies
\begin{equation*}
    0 = \nabla J \bar{q} + b =-\nabla J \nabla f - (\nabla J \nabla J^T ) \lambda + b \quad \Rightarrow \quad \lambda = (\nabla J \nabla J^T)^{-1} [b - \nabla J \nabla f]
\end{equation*}
using the invertibility of $\nabla J \nabla J^T$. By plugging this expression to $\bar{q}$, we recover the optimal $q: = -\Pi \nabla f - \nabla J^T (\nabla J \nabla J^T)^{-1} b$. Therefore, the Ito version of closed form SDE is given by  
\begin{equation*}
    dX_t = -[\Pi(X_t) \nabla f(X_t) + \nabla J(X_t)^T G(X_t)^{-1} b(X_t)]dt + \sqrt{2} \Pi(X_t)dW_t
\end{equation*}
where $G:= \nabla J \nabla J^T$ is the associated Gram matrix. Also, some tensor-calculus computation (Equation 3.46 in \cite{rousset2010free})  gives
\begin{equation*}
     -\nabla J^T G^{-1} \lbra{\trace{\nabla^2 h_1 \Pi}, ..., \trace{\nabla^2 h_{m} \Pi}, \trace{\nabla^2 g_{i_1} \Pi} ,..., \trace{\nabla^2 g_{i_{\abs{I_x}}} \Pi }}^T = \nabla \Pi(x) \Pi(x)
\end{equation*}
which is the Ito-Stratonovich correction term. Furthermore, applying the technique in \cite{rousset2010free} (Remark 3.17), we can recover that this expression is equal to the mean curvature term $\calH(x) $ of a manifold defined by $\Sigma_{I_x} := \mbra{x \in \bbR^d \mid h(x)=0 , g_{I_x}(x) = 0}$. Therefore, we get the following closed form expression of the SDE:
\begin{align*}
    dX_t &= -[\Pi(X_t) \nabla f(X_t) +\alpha \nabla J(X_t)^T G^{-1}(X_t) J(X_t)]dt + \calH(X_t)dt + \sqrt{2}\Pi(X_t),&& \text{(Ito)} \\
    dX_t &= -[\Pi(X_t) \nabla f(X_t) +\alpha \nabla J(X_t)^T G^{-1}(X_t) J(X_t)]dt + \sqrt{2}\Pi(X_t) \circ dW_t,&& \text{(Strato.)}
\end{align*}
where $\calH = -\nabla J^T G^{-1} \lbra{\trace{\nabla^2 h_1 \Pi}, ..., \trace{\nabla^2 h_{m} \Pi}, \trace{\nabla^2 g_{i_1} \Pi} ,..., \trace{\nabla^2 g_{i_{\abs{I_x}}} \Pi}}^T$ is the associated Ito-Stratonovich correction term (or mean curvature term of $\Sigma_{I_x}$). 
\end{proof}

\begin{lemma*}[Exponential decay of constraint functions] \ 
    \label{lem:expoential decay of constraint functions_app}
    The dynamics induced by \eqref{eqn:mixed_OLLA_premitive_form_app} satisfies the following properties almost surely for $\forall i \in [m], \forall j \in I_{X_0}$:
    \begin{equation}
        h_i(X_t) = h_i(X_0)e^{-\alpha t}, \quad t\geq0
    \end{equation}
    and
    \begin{equation*}
        \begin{cases}
            \begin{aligned}[t]
            g_j(X_t) &= -\epsilon + (g_j(X_0) + \epsilon) e^{-\alpha t}, \quad && t\leq \frac{1}{\alpha} \ln \sbra{ \frac{g_j(X_0)+\epsilon}{\epsilon}}\\
            g_j(X_t) &\leq 0,\quad && t\geq \frac{1}{\alpha} \ln \sbra{ \frac{g_j(X_0)+\epsilon}{\epsilon}}
            \end{aligned}
        \end{cases}
    \end{equation*}
    with $g_j(X_t) \leq 0, \forall t \geq 0$ for $j \notin I_{X_0}$, where $I_x := \mbra{k \in [l] \mid g_k(x) \geq 0}$ is the index set of active inequality constraints.
\end{lemma*}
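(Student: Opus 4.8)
The plan is to apply Itô's lemma to each constraint function along the solution of the primitive-form SDE \eqref{eqn:mixed_OLLA_premitive_form_app} and to exploit the two defining features of the construction: the diffusion matrix $Q=\sqrt2\,\Pi$ is symmetric and annihilates $\nabla h_i$ (for every $i$) and $\nabla g_j$ (for every active $j\in I_x$), while the drift $q$ is chosen precisely so that the Itô drift of each enforced constraint equals $-\alpha$ times that constraint. First I would record that, since $Q$ is symmetric with $Q\nabla h_i = Q\nabla g_j = 0$ on the active indices, one has $\nabla h_i^T Q = 0$ and $\nabla g_j^T Q = 0$, so the martingale part of the corresponding constraint differential vanishes identically.

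For the equality constraints, Itô's lemma gives
\[
 dh_i(X_t) = \Big(\nabla h_i^T q + \tfrac12 \trace{\nabla^2 h_i\, QQ^T}\Big)\,dt + \nabla h_i^T Q\, dW_t .
\]
The martingale term is zero, and the first defining equation for $q$ forces the drift to equal $-\alpha h_i(X_t)$, so $dh_i(X_t) = -\alpha h_i(X_t)\,dt$ holds pathwise. Since the equality gradients $\{\nabla h_i\}$ lie in the row space of $\nabla J$ in every state regardless of which inequalities are active, this identity is valid for all $t\ge 0$, and integrating the scalar linear ODE yields $h_i(X_t)=h_i(X_0)e^{-\alpha t}$ almost surely.

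Next, for an index $j\in I_{X_0}$ the same computation applied to $g_j+\epsilon$ while $j$ is active gives $d\big(g_j(X_t)+\epsilon\big) = -\alpha\big(g_j(X_t)+\epsilon\big)\,dt$, whose solution $g_j(X_t)+\epsilon = (g_j(X_0)+\epsilon)e^{-\alpha t}$ is strictly decreasing and reaches $g_j=0$ exactly at $t_j^\star = \tfrac1\alpha\ln\big((g_j(X_0)+\epsilon)/\epsilon\big)$, establishing the first branch. To obtain $g_j(X_t)\le 0$ for $t\ge t_j^\star$ (and likewise for all $t$ when $j\notin I_{X_0}$), I would argue that $\{g_j\le 0\}$ is forward-invariant: on the closed set $\{g_j\ge 0\}$ the index $j$ is active, since $I_x=\{i:g_i(x)\ge 0\}$ includes the boundary $g_j=0$; there the differential of $g_j$ carries no noise and has strictly inward drift $-\alpha(g_j+\epsilon)\le -\alpha\epsilon<0$, so $g_j$ can only decrease while it is nonnegative.

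The main obstacle is that the active set $I_x$—and hence the coefficients of the SDE—changes discontinuously as $X_t$ crosses a constraint boundary, so the barrier argument cannot be run globally in one shot. I would make it rigorous via stopping times, decomposing the trajectory into successive intervals on which the active set is constant, on each of which \eqref{eqn:mixed_OLLA_premitive_form_app} is a genuine Itô SDE and the identities above hold, and patching them together using the pathwise continuity of $t\mapsto g_j(X_t)$. Concretely, setting $\tau:=\inf\{t>t_j^\star : g_j(X_t)>0\}$, any time $s_n\downarrow\tau$ with $g_j(X_{s_n})>0$ would lie in an excursion into the active region $\{g_j>0\}$; letting $r_n:=\sup\{r<s_n : g_j(X_r)=0\}$, on $(r_n,s_n]$ the process $g_j$ is deterministic and strictly decreasing, forcing $g_j(X_{s_n})<g_j(X_{r_n})=0$, a contradiction. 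Hence $\tau=\infty$ almost surely and $g_j(X_t)\le 0$ thereafter; running the identical argument from $t=0$ disposes of the inactive indices $j\notin I_{X_0}$.
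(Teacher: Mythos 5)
Your proposal is correct, and its core is the same as the paper's: show that the stochastic differential of each constraint has no martingale part and has drift exactly $-\alpha$ times the (shifted) constraint, then integrate the resulting pathwise linear ODE. You differ in two places, both defensibly. First, you work in It\^o form directly on the primitive SDE \eqref{eqn:mixed_OLLA_premitive_form_app}, so the drift identity $\nabla h_i^T q + \tfrac12\trace{\nabla^2 h_i\,QQ^T} = -\alpha h_i$ is read off verbatim from the least-squares constraints defining $q$, and the noise dies because $Q$ is symmetric with $Q\nabla h_i = Q\nabla g_j = 0$. The paper instead applies the Stratonovich chain rule to the closed-form SDE and kills the drift's tangential part via $\nabla h_k^T\Pi = 0$, then recovers $-\alpha h_k$ through the Gram-matrix cancellation $\nabla h_k^T \nabla J^T G^{-1} J = h_k$; the two computations are equivalent, but yours uses the variational definition more directly. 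Second, and more substantively, your treatment of forward invariance of $\{g_j\le 0\}$ is a genuine strengthening: the paper disposes of this with a one-sentence claim that the process is ``instantaneously reflected'' at the boundary, whereas you give an actual excursion argument --- define $\tau=\inf\{t>t_j^\star: g_j(X_t)>0\}$, note that on any excursion into $\{g_j\ge 0\}$ the index $j$ is active (since $I_x$ includes $g_j=0$), so $g_j$ obeys the deterministic strictly decreasing ODE $\dot g_j = -\alpha(g_j+\epsilon)\le-\alpha\epsilon$, and derive the contradiction $g_j(X_{s_n})<0$. You also explicitly flag and handle the discontinuity of the coefficients as the active set switches, by patching over intervals of constant active set using path continuity of $g_j(X_t)$ --- an issue the paper's proof glosses over as well. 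In short: same skeleton, cleaner sourcing of the drift identity, and a more rigorous invariance step.
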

\begin{proof}
    Observe that, for each $k \in [m]$, the Stratonovich chain rule implies
    \begin{align*}
        dh_k(X_t) &= \nabla h_k(X_t)^T \lbra{(-\Pi(X_t)\nabla f(X_t) - \alpha \nabla J(X_t)^T G^{-1}(X_t) J(X_t))dt + \sqrt{2} \Pi(X_t) \circ dW_t} \\
        & \overset{(1)}{=} -\alpha \nabla h_k(X_t)^T \nabla J(X_t)^T G^{-1}(X_t)J(X_t) dt \\
        &= -\alpha \sum_{i,j=1}^{m+I_x} [G(X_t)]_{ki}[G^{-1}(X_t)]_{ij} h_j(X_t)dt = -\alpha h_k(X_t)dt,
    \end{align*}
    where (1) holds due to the fact $\nabla h_k(x)^T \Pi(x) = 0$. By integrating both side with respect to $t$, we recover $h(X_t) = h(X_0) e^{-\alpha t}, \, t\geq 0$ almost surely. Repeating the same calculation for $g_j$ for $k \in I_x$, we obtain
    \begin{equation*}
        dg_k(X_t) = -\alpha \sum_{i,j = 1}^{m+I_x} [G(X_t)]_{ki}[G^{-1}(X_t)]_{ij}(g_j(X_t) +\epsilon) dt = -\alpha (g_k(X_t) + \epsilon) dt,
    \end{equation*}
    which again recovers $g_k(X_t) = -\epsilon  + (g_k(X_0) + \epsilon)e^{-\alpha t}$. Furthermore, once $g_j(X_t) \leq 0$, it is instantaneously reflected into interior of $\Sigma$ whenever it hits the boundary $\partial\Sigma$. Therefore, $g_j(X_t) \leq 0$ holds for $\forall t\geq0, j \in I_{X_0}$.
\end{proof}
\section{Proof of Theoretical Results - Equality-constraint OLLA}
\label{appendix:sec:Proof of theoretical results - Equality-constraint OLLA}
Observe that when the constraints are only equality constraints, the equality-constraint OLLA (\ref{eqn:mixed_OLLA_closed_form}) is given by 
\begin{equation}
    \label{eqn:equality_constrained OLLA}
    dX_t = -[\Pi(X_t)\nabla f(X_t) + \alpha \nabla h(X_t) G^{-1}(X_t) h(X_t)]dt +\sqrt{2} \Pi(X_t) \circ dW_t.
\end{equation}

The high-level proof idea of \cref{thm:Convergence result for equality-constrained OLLA} is to decompose the convergence analysis into two parts: (1) Convergence of $W_2$ distance between $\rho_t$ and $\tilde{\rho}_t$, (2) Convergence of $\KL^\Sigma$ between $\tilde{\rho}_t$ and $\rho_{\Sigma}$ where $\rho_t, \tilde{\rho}_t$ are the law of $X_t, Y_t (:= \pi(X_t))$ respectively and $\rho_{\Sigma}$ is the law of the target 
distribution, which satisfies $d\rho_\Sigma \propto \exp(-f(x)) d\sigma_\Sigma$.
\subsection{Upper Bound of $W_2(\rho_t, \tilde{\rho}_t)$}
\begin{lemma_ap}[Upper bound of $W_2(\rho_t, \tilde{\rho}_t)$] \ 
    \label{lem:bound of W_2_eq_only}
    Let $\rho_t$ be the law of $X_t$ which follows equality-constrained OLLA (\ref{eqn:equality_constrained OLLA}) and define $t_0 := \frac{1}{\alpha} \ln \sbra{ \frac{1}{\delta}}$. For $t \geq t_0$, the law $\tilde{\rho}_t$ of $Y_t:= \pi(X_t)$ is well-defined and it holds that
    \begin{equation}
        W_2(\rho_t, \tilde{\rho}_t) \leq \frac{M_h}{\kappa} e^{-\alpha t}.
    \end{equation}
\end{lemma_ap}
\begin{proof}
    For $t \geq t_0$, observe that $\norm{X_t-Y_t}_2 = \norm{X_t -\pi(X_t)}_2 \leq \frac{1}{\kappa}\norm{h(X_t)}_2 \leq \frac{M_h}{\kappa} e^{-\alpha t}$  by \cref{lem:regularity lemma} and \cref{lem:expoential decay of constraint functions_app}. Then, by integrating both sides with respect to optimal coupling of $\rho_t$ and $\tilde{\rho}_t$, we get
    \begin{equation*}
        W_2(\rho_t, \tilde{\rho}_t) \leq \sbra{\mean{\norm{X_t-Y_t}}_2^2}^{\frac{1}{2}} \overset{(\circ)}{\leq} \mean{\norm{X_t- Y_t}_2}  \leq \frac{M_h}{\kappa} e^{-\alpha t},
    \end{equation*}
    where $(\circ)$ holds by Jensen's inequality.
\end{proof}
\clearpage
\subsection{Upper Bound of $\KL^\Sigma(\tilde{\rho}_t || \rho_\Sigma)$}
\begin{lemma_ap}[SDE representation of projected process] \
    \label{lem:SDE representation of projected process}
    Let $X_t$ be the stochastic process following the SDE:
    \begin{equation*}
        dX_t = b(X_t, t) dt + \sqrt{2}\Pi(X_t) \circ dW_t
    \end{equation*}
    where $X_t \in \hat{U}_\delta (\Sigma), \ \forall t\geq 0$. Then, the stochastic process $Y_t$ defined by $Y_t = \pi(X_t)$ follows the SDE below:
    \begin{align*}
        dY_t &= \Pi(Y_t)b(Y_t,t)dt + \sqrt{2}\Pi(Y_t)\circ dW_t + \lbra{\nabla \pi(X_t)b(X_t, t) - \nabla \pi(Y_t)b(Y_t, t)}dt \\ 
        &+ \sqrt{2}\lbra{\nabla \pi(X_t)\Pi(X_t) - \nabla \pi(Y_t) \Pi(Y_t)} \circ dW_t
    \end{align*}
\end{lemma_ap}
\begin{proof}
    From the the Stratonovich chain rule, we observe that 
    \begin{align*}
        dY_t = d\pi(X_t)&= \nabla \pi(X_t)b(X_t, t)dt + \sqrt{2}\nabla\pi(X_t) \Pi(X_t) \circ dW_t.
    \end{align*}
    This expression can be re-written as follows
    \begin{align*}
        dY_t&= \lbra{\nabla \pi(Y_t) b(Y_t,t) dt +\sqrt{2}\nabla \pi(Y_t) \Pi(Y_t) \circ dW_t} + \lbra{\nabla \pi(X_t)b(X_t, t) - \nabla \pi(Y_t)b(Y_t,t)}dt \\
        &+ \sqrt{2}\lbra{\nabla \pi(X_t)\Pi(X_t) - \nabla \pi(Y_t) \Pi(Y_t)} \circ dW_t\\
        &\overset{(\circ)}= \lbra{\Pi(Y_t) b(Y_t,t) dt +\sqrt{2}\Pi(Y_t) \circ dW_t} + \lbra{\nabla \pi(X_t)b(X_t, t) - \nabla \pi(Y_t)b(Y_t,t)}dt \\
        &+ \sqrt{2}\lbra{\nabla \pi(X_t)\Pi(X_t) - \nabla \pi(Y_t) \Pi(Y_t)} \circ dW_t
    \end{align*}
    where $(\circ)$ holds because $\nabla \pi(y) = \Pi(y)$, $\Pi(y)^2 = \Pi(y)$ (idempotent) for $\forall y \in \Sigma$ and $Y_t \in \Sigma$.
\end{proof}
\begin{corollary_ap}[SDE representation of projected process from equality-constrained OLLA] \ 
    \label{cor:SDE representation of projected process from equality-constrained OLLA}
    Let $X_t$ be the stochastic process following equality-constrained OLLA (\ref{eqn:equality_constrained OLLA}). Then, for $t \geq t_0 (:=\frac{1}{\alpha} \ln \sbra{\frac{1}{\delta}})$, the projected process $Y_t := \pi(X_t)$ follows the following SDE:
    \begin{align*}
        dY_t = \lbra{-\Pi(Y_t)\nabla f(Y_t)+b_N(Y_t,t)}dt +\sqrt{2}\Pi(Y_t)(I+A_N(Y_t,t)) \circ dW_t 
    \end{align*}
    where $\norm{b_N(Y_t,t)}_2 = C_{b_N} e^{-\alpha t}, \norm{A_N(Y_t,t)} = C_{A_N} e^{-\alpha t}$ for $t\geq 0$ almost surely for some constant $C_{b_N}, C_{A_N}:= \frac{C_{L_A}M_h}{\kappa} > 0$ with $C_{L_A}$ being the Lipschitz constant of $\nabla \pi(x) \Pi(x) $ on $\hat{U}_\delta(\Sigma)$
\end{corollary_ap}
\begin{proof}
    By applying \cref{lem:SDE representation of projected process} to the SDE (\ref{eqn:equality_constrained OLLA}), it holds that
    \begin{align*}
        dY_t &= \Pi(Y_t)b(Y_t,t)dt + \sqrt{2}\Pi(Y_t)\circ dW_t + \lbra{\nabla \pi(X_t)b(X_t, t) - \nabla \pi(Y_t)b(Y_t,t)}dt\\ 
        &+ \sqrt{2}\lbra{\nabla \pi(X_t)\Pi(X_t) - \nabla \pi(Y_t) \Pi(Y_t)} \circ dW_t
    \end{align*}
    for $b(x, t) = b(x) :=  -\lbra{\nabla \Pi(x) \nabla f(x) + \alpha \nabla h(x)^T G^{-1} (x) h(x)}$. By using \cref{lem:expoential decay of constraint functions_app}, \cref{thm:recoverable tubular neighborhood}, we can set $X_t = \zeta(Y_t, h(X_0)e^{-\alpha t})$ where $\zeta:\Sigma \times \bbR^m \rightarrow \hat{U}_\delta(\Sigma)$ is the recovery map. Now, since $X_t, Y_t \in \hat{U}_\delta(\Sigma)$ and the closure of $\hat{U}_\delta(\Sigma)$ is compact, $\nabla \pi (x) b(x)$ and $\nabla \pi(x) \Pi(x)$ is $C_{L_b}, C_{L_A}$-Lipschitz on $\hat{U}_\delta(\Sigma)$, respectively for some $C_{L_b}, C_{L_A} > 0 $. Therefore, it holds that
    \begin{equation*}
        \norm{b_N(Y_t,t)}_2 \leq C_{L_b}\norm{\zeta(Y_t, h(X_0)e^{-\alpha t})- Y_t}_2 \leq \frac{C_{L_b} \norm{h(X_0)}_2}{\kappa}e^{-\alpha t} \leq \frac{C_{L_b}M_h}{\kappa}e^{-\alpha t}
    \end{equation*}
    where $b_N(Y_t,t):= \nabla \pi(\zeta(Y_t, h(X_0) e^{-\alpha t}) b(\zeta(Y_t, h(X_0) e^{-\alpha t})) - \nabla \pi(Y_t) b(Y_t)$ and the second last inequality comes from \cref{lem:regularity lemma}. Similarly, we obtain the bound of $A_N(Y_t,t) := \nabla \pi(\zeta(Y_t, h(X_0)e^{-\alpha t}))\Pi(\zeta(Y_t, h(X_0)e^{-\alpha t})) - \nabla \pi(Y_t) \Pi(Y_t)$ as follows:
    \begin{equation*}
        \norm{A_N(Y_t,t)}_2 \leq C_{L_A}\norm{\zeta(Y_t, h(X_0)e^{-\alpha t}) - Y_t}_2 \leq \frac{C_{L_A} \norm{h(X_0)}_2}{\kappa} e^{-\alpha t} \leq \frac{C_{L_A}M_h}{\kappa}e^{-\alpha t}
    \end{equation*}
    Finally, we complete the proof by setting $C_{b_N} := \frac{C_{L_b}M_h}{\kappa}, C_{A_N} := \frac{C_{L_A}M_h}{\kappa}$ and observing that $\nabla \pi(x) = \Pi(\pi(x)) \nabla \pi(x)$ for $\forall x \in \hat{U}_\delta(\Sigma)$, which implies $A_N(Y_t,t) = \Pi(Y_t) A_N(Y_t,t)$.
\end{proof} 
The following theorem is a Fokker-Planck equation of a Stratonovich SDE defined on a Riemannian manifold. We will rely on this theorem to describe the time derivative of $\tilde{\rho}_t$.
\begin{theorem_ap}[Fokker-Planck equation on Riemannian manifold \citep{chirikjian2009stochastic, huang2022riemannian}] \
\label{thm:Fokker-Planck equation in Riemannian manifold}
Let $X_t \in \Sigma$ be a stochastic process following the SDE:
\begin{equation*}
    dX_t = V_0 dt + \sum_{k=1}^d V_k \circ dB_t^k,
\end{equation*}
where $V_0, V_k$ are smooth vector fields on $\Sigma$ for each $k \in [d]$ and $B_t^k$ are $k$th components of Brownian motion $B_t$. Then, the law $\rho_t$ of the stochastic process $X_t$ satisfies the following Fokker-Planck equation:
\begin{equation*}
    \partial_t \rho_t = -\divergence_\Sigma(\rho_t V_0) + \frac{1}{2}\sum_{k=1}^d \divergence_\Sigma(\divergence_\Sigma(\rho_t V_k)V_k).
\end{equation*}
\end{theorem_ap}
\begin{lemma_ap}[Upper bound of $\KL^\Sigma(\tilde{\rho}_t  || \rho_\Sigma)$] \ 
    \label{lem:Upper bound of KL_Sigma}
    Assume that $\rho_\Sigma$ satisfies the LSI condition with constant $\lambda_{\text{LSI}}$. Let $X_t$ be the stochastic process following equality-constrained OLLA (\ref{eqn:equality_constrained OLLA}) and $\tilde{\rho}_t$ be the law of $Y_t := \pi(X_t)$ after $t \geq t_{\text{cut}}, t_{\text{cut}}:= \max \mbra{\frac{1}{\alpha} \ln \delta, \frac{1}{\alpha} \ln (C_5)}$.
    Then, for $\alpha \neq 2\lambda_{\text{LSI}}$, the following non-asymptotic convergence rate of $\KL^\Sigma(\tilde{\rho}_t  || \rho_\Sigma)$ can be obtained as follows
    \begin{align*}
        \KL^\Sigma (\tilde{\rho}_t || \rho_\Sigma) & \leq \exp \sbra{-2 \lambda_{\text{LSI}} (t-t_{\text{cut}}) - \frac{2\lambda_{\text{LSI}}C_5}{\alpha}(e^{-\alpha t} -e^{-\alpha t_{\text{cut}}}) }[\KL^\Sigma (\tilde{\rho}_{t_{\text{cut}}} || \rho_\Sigma) \\
        &+ C_6 \int_{t_{\text{cut}}}^t \exp \sbra{ 2\lambda (s -{t_{\text{cut}}}) + \frac{2\lambda_{\text{LSI}}C_5}{\alpha} (e^{-\alpha s} - e^{-\alpha t_{\text{cut}}})}e^{-\alpha s} ds]
    \end{align*}
    In particular, if $\alpha > 2 \lambda_{\text{LSI}}$, it becomes 
    \begin{equation*}
        \KL^\Sigma (\tilde{\rho}_t || \rho_\Sigma) \leq \exp \sbra{-2 \lambda_{\text{LSI}} (t-t_{\text{cut}}) - \frac{2\lambda_{\text{LSI}}C_5}{\alpha}(e^{-\alpha t} -e^{-\alpha t_{\text{cut}}}) }[\KL^\Sigma (\tilde{\rho}_{t_{\text{cut}}} || \rho_\Sigma) + C_7]
    \end{equation*}
    for some constants $C_5=\calO(1+C_{A_N}+C_{A_N}^2), C_6, C_7 := \frac{C_6 e^{-\alpha t_{\text{cut}}}}{\alpha - 2\lambda_{\text{LSI}}}  > 0$.
\end{lemma_ap}
\begin{proof}
    By \cref{thm:Fokker-Planck equation in Riemannian manifold}, \cref{cor:SDE representation of projected process from equality-constrained OLLA}, and the choice of $\nabla f = -\nabla \ln \rho_\Sigma$, we know that the projected process $Y_t$ is given by
    \begin{equation*}
        dY_t = \lbra{-\Pi(Y_t)\nabla f(Y_t)+b_N(Y_t,t)}dt +\sqrt{2}\Pi(Y_t)(I+A_N(Y_t,t)) \circ dW_t 
    \end{equation*}
    and its associated Fokker-Planck equation can be written as follows:
    \begin{equation*}
        \partial_t \tilde{\rho}_t = -\divergence_\Sigma (\tilde{\rho}_t \sbra{\nabla_\Sigma \ln \rho_\Sigma + b_N}) + \sum_{k=1}^d \divergence_\Sigma \sbra{ \divergence_\Sigma (\tilde{\rho}_t (f_k + \delta_k)) (f_k +\delta_k)}
    \end{equation*}
    where $f_k = \Pi e_k, \delta_k = \Pi A_N e_k$, and $e_k$ is $k$th standard basis vector for $\bbR^d$. Now observe the following equations:
    \begin{align*}
        \partial_t \KL^\Sigma (\tilde{\rho}_t ||\rho_\Sigma) &= \int_\Sigma \partial_t \tilde{\rho}_t \cdot \ln \sbra{\frac{\tilde{\rho}_t}{\rho_\Sigma}} d\sigma_\Sigma + \partial_t \int \tilde{\rho}_t d\sigma_\Sigma =  \int_\Sigma \partial \tilde{\rho}_t \ln \sbra{\frac{\tilde{\rho}_t}{\rho_\Sigma}} d\sigma_\Sigma\\
        &= \underbrace{\int_\Sigma \lbra{ -\divergence_\Sigma (\tilde{\rho}_t \nabla_\Sigma \ln \rho_\Sigma) + \sum_{k=1}^d \divergence_\Sigma (\divergence_\Sigma(\tilde{\rho}_t f_k)f_k) } \ln \sbra{\frac{\tilde{\rho}_t}{\rho_\Sigma}} d\sigma_\Sigma}_{\text{Term (1)}} \\
        & + \underbrace{\int_\Sigma \lbra{-\divergence_\Sigma (\tilde{\rho}_t b_N)} \ln \sbra{\frac{\tilde{\rho}_t}{\rho_\Sigma}} d\sigma_\Sigma}_{\text{Term (2)}} \\
        &+ \int_\Sigma \underbrace{\sum_{k=1}^d \lbra{\divergence_\Sigma (\divergence_\Sigma(\tilde{\rho}_t \delta_k)f_k + \divergence_\Sigma(\tilde{\rho}_t f_k)\delta_k + \divergence_\Sigma(\tilde{\rho}_t \delta_k)\delta_k} \ln \sbra{\frac{\tilde{\rho}_t}{\rho_\Sigma}} d \sigma_\Sigma}_{\text{Term (3)}}.
    \end{align*}
    \textbf{Analysis of Term (1) - induced by the main SDE}.\quad From integration by parts, we obtain 
    \begin{align*}
        \text{Term (1)} &= \int_\Sigma \tilde{\rho}_t \inner{ \nabla_\Sigma \ln \rho_\Sigma, \nabla_\Sigma \ln \sbra{\frac{\tilde{\rho}_t}{\rho_\Sigma}}} d\sigma_\Sigma - \sum_{k=1}^d \int_\Sigma \divergence_\Sigma (\tilde{\rho}_t f_k) \inner{f_k, \nabla_\Sigma \ln \sbra{ \frac{\tilde{\rho}_t}{\rho_\Sigma}}} d \sigma_\Sigma \\
        &= \int_\Sigma \tilde{\rho}_t \inner{ \nabla_\Sigma \ln \rho_\Sigma, \nabla_\Sigma \ln \sbra{\frac{\tilde{\rho}_t}{\rho_\Sigma}}} d\sigma_\Sigma -\sum_{k=1}^d  \int_\Sigma \inner{ \nabla_\Sigma \tilde{\rho}_t, f_k} \inner{f_k, \nabla_\Sigma \ln \sbra{\frac{\tilde{\rho}_t}{\rho_\Sigma}}} d\sigma_\Sigma \\
        & -\sum_{k=1}^d \int_\Sigma \tilde{\rho}_t \divergence_\Sigma (f_k) \inner{f_k, \nabla_\Sigma \ln \sbra{\frac{\tilde{\rho}_t}{\rho_\Sigma}}} d\sigma_\Sigma\\
        &\overset{(\triangle)}{=} \int_\Sigma \tilde{\rho}_t \inner{ \nabla_\Sigma \ln \rho_\Sigma, \nabla_\Sigma \ln \sbra{\frac{\tilde{\rho}_t}{\rho_\Sigma}}} d\sigma_\Sigma -\int_\Sigma \tilde{\rho}_t \inner{\nabla_\Sigma \ln \tilde{\rho}_t, \nabla_\Sigma \ln \sbra{\frac{\tilde{\rho}_t}{\rho_\Sigma}}} d\sigma_\Sigma \\
        &= -\int_\Sigma \tilde{\rho}_t \norm{\nabla_\Sigma \ln \sbra{\frac{\tilde{\rho}_t}{\rho_\Sigma}}}_2^2 d\sigma_\Sigma = - I^\Sigma (\tilde{\rho}_t || \rho_\Sigma) 
    \end{align*}
    where $(\triangle)$ holds using \autoref{lem:sum of divf_k f_k = 0} (the third term = 0) and the fact that $\tilde{\rho}_t \nabla_\Sigma \ln \tilde{\rho}_t = \nabla_\Sigma \tilde{\rho}_t$.\\
    \\
    \noindent \textbf{Analysis of Term (2) - induced by the noise drift $b_N$}.\quad Again, using the integration by parts, we observe that
    \begin{align*}
        \abs{\text{Term (2)}} &= \abs{\int_\Sigma \divergence_\Sigma (\tilde{\rho}_t b_N) \ln \sbra{\frac{\tilde{\rho}_t}{\rho_\Sigma}} d\sigma_\Sigma} = \abs{\int_\Sigma \tilde{\rho}_t \inner{b_N, \nabla_\Sigma \ln \sbra{\frac{\tilde{\rho}_t}{\rho_\Sigma}}} d\sigma_\Sigma} \\
        & \leq \int_\Sigma \tilde{\rho}_t \norm{b_N}_2 \norm{\nabla_\Sigma \ln \sbra{\frac{\tilde{\rho}_t}{\rho_\Sigma}}}_2 d\sigma_\Sigma \leq C_{b_N} e^{-\alpha t} \int_\Sigma \tilde{\rho}_t \cdot 1 \cdot \norm{\nabla_\Sigma \ln \sbra{\frac{\tilde{\rho}_t}{\rho_\Sigma}}}
        _2 d\sigma_\Sigma \\
        & \overset{(\square)}{\leq} C_{b_N} e^{-\alpha  t} \mkern-5mu \int_\Sigma \tilde{\rho}_t \lbra{ \frac{C_{b_N}}{4} + \frac{1}{C_{b_N}} \norm{\nabla_\Sigma \ln \sbra{\frac{\tilde{\rho}_t}{\rho_\Sigma}}}_2^2} d\sigma_\Sigma  = e^{-\alpha t} I^\Sigma (\tilde{\rho}_t || \rho_\Sigma) + \frac{C_{b_N}^2}{4} e^{-\alpha t},
    \end{align*}
    where $(\square)$ inequality holds using the AM-GM inequality.\\
    \\
    \noindent \textbf{Analysis of Term (3) - induced by noise diffusion $A_N$}. \quad To analyze Term (3), we apply integration by parts and the chain rule of the divergence, i.e., $\divergence_\Sigma (\tilde{\rho}_t a_k) = \inner{\nabla_\Sigma \tilde{\rho}_t, a_k} + \tilde{\rho}_t \divergence_\Sigma (a_k)$ for a vector field $a_k$ on $\Sigma$:
    \begin{align*}
        \text{Term (3)} &= -\sum_{k=1}^d \int_\Sigma \tilde{\rho}_t  \inner{(\divergence_\Sigma \delta_k)\delta_k, \mkern-5mu \nabla_\Sigma \ln \sbra{ \frac{\tilde{\rho}_t}{\rho_\Sigma}}} d\sigma_\Sigma -\sum_{k=1}^d \int_\Sigma \tilde{\rho}_t  \inner{(\divergence_\Sigma \delta_k) f_k, \mkern-5mu \nabla_\Sigma \ln \sbra{ \frac{\tilde{\rho}_t}{\rho_\Sigma}}} d\sigma_\Sigma\\
        &-\sum_{k=1}^d \int_\Sigma \tilde{\rho}_t  \inner{(\divergence_\Sigma f_k) \delta_k, \nabla_\Sigma \ln \sbra{ \frac{\tilde{\rho}_t}{\rho_\Sigma}}} d\sigma_\Sigma -\sum_{k=1}^d \int_\Sigma \inner{\nabla_\Sigma \tilde{\rho}_t, \delta_k} \inner{f_k, \nabla_\Sigma \ln \sbra{\frac{\tilde{\rho}_t}{\rho_\Sigma}}} d\sigma_\Sigma \\
        &- \sum_{k=1}^d \int_\Sigma \inner{\nabla_\Sigma \tilde{\rho}_t, f_k} \inner{\delta_k, \nabla_\Sigma \ln \sbra{\frac{\tilde{\rho}_t}{\rho_\Sigma}}} d\sigma_\Sigma - \sum_{k=1}^d \int_\Sigma \inner{\nabla_\Sigma \tilde{\rho}_t, \delta_k} \inner{\delta_k, \nabla_\Sigma \ln \sbra{\frac{\tilde{\rho}_t}{\rho_\Sigma}}} d\sigma_\Sigma.
    \end{align*}
    Now, note that $\sum_{k=1}^d \delta_k f_k^T = \Pi A_N \Pi$, $\sum_{k=1}^d f_k \delta_k^T = \Pi A_N^T \Pi$, and $\sum_{k=1}^d \delta_k \delta_k^T = \Pi A_N A_N^T \Pi$. Then it holds that
    \begin{align*}
         &\abs{-\sum_{k=1}^d  \int_\Sigma \inner{ \nabla_\Sigma \tilde{\rho}_t, \delta_k} \inner{f_k, \nabla_\Sigma \ln \sbra{\frac{\tilde{\rho}_t}{\rho_\Sigma}}} d\sigma_\Sigma} = \abs{\int_\Sigma \tilde{\rho}_t (\nabla_\Sigma \ln \tilde{\rho}_t)^T \Pi A_N \Pi (\nabla_\Sigma \ln \sbra{\frac{\tilde{\rho}_t}{\rho_\Sigma}})} d\sigma_\Sigma \\
        &\leq C_{A_N} e^{-\alpha t} \int_\Sigma \tilde{\rho}_t \norm { \nabla_\Sigma \ln \tilde{\rho}_t}_2 \norm{\nabla_\Sigma \ln \sbra{\frac{\tilde{\rho}_t}{\rho_\Sigma}}}_2 d\sigma_\Sigma \\ &\overset{(\times)}{\leq} C_{A_N} e^{-\alpha t}  I^\Sigma (\tilde{\rho}_t || \rho_\Sigma) + C_{A_N}C_3 e^{-\alpha t} \int_\Sigma \tilde{\rho}_t \norm{\nabla_\Sigma \ln \sbra{\frac{\tilde{\rho}_t}{\rho_\Sigma}}}_2 d\sigma_\Sigma \\
        & \overset{(\circ)}{\leq} C_{A_N} e^{-\alpha t} I^{\Sigma} (\tilde{\rho}_t || \rho_\Sigma) +  C_{A_N} C_3 e^{-\alpha t} \int_\Sigma \tilde{\rho}_t \lbra{ \frac{C_3}{4} + \frac{1}{C_3} \norm{\nabla_\Sigma \ln \sbra{\frac{\tilde{\rho}_t}{\rho_\Sigma}}}^2_2} d\sigma_\Sigma \\
        & \leq 2C_{A_N} e^{-\alpha t} I^\Sigma (\tilde{\rho}_t || \rho_\Sigma) + \frac{C_{A_N}C_3^2}{4} e^{-\alpha t},
    \end{align*}

    where $C_3 := \max_{x \in \Sigma} \norm{\nabla_\Sigma \ln \rho_\Sigma}_2 < \infty$ by compactness of $\Sigma$, and  $(\times)$ holds using the triangle inequality; $\norm{\nabla_\Sigma \ln \tilde{\rho}_t}_2 \leq \norm{\nabla_\Sigma \ln \sbra{\frac{\tilde{\rho}_t}{\rho_\Sigma}}}_2 +\norm{\nabla_\Sigma \ln \rho_\Sigma}_2$. Also, $(\circ)$ comes from the AM-GM inequality. By following the same reasoning, the following inequalities are obtained:
    \begin{align*}
        \abs{-\sum_{k=1}^d \int_\Sigma \inner{ \nabla_\Sigma \tilde{\rho}_t, f_k} \inner{\delta_k, \nabla_\Sigma \ln \sbra{\frac{\tilde{\rho}_t}{\rho_\Sigma}}} d\sigma_\Sigma} &\leq 2C_{A_N} e^{-\alpha t} I^\Sigma (\tilde{\rho}_t || \rho_\Sigma) + \frac{C_{A_N}C_3^2}{4} e^{-\alpha t}  \\
        \abs{-\sum_{k=1}^d \int_\Sigma \inner{ \nabla_\Sigma \tilde{\rho}_t, \delta_k} \inner{\delta_k, \nabla_\Sigma \ln \sbra{\frac{\tilde{\rho}_t}{\rho_\Sigma}}} d\sigma_\Sigma} &\leq 2C_{A_N}^2 e^{-2 \alpha t} I^{\Sigma} (\tilde{\rho}_t || \rho_\Sigma) + \frac{C_{A_N}^2C_3^2}{4} e^{- 2\alpha  t} \\
        & \leq 2C_{A_N}^2 e^{-\alpha t}  I^{\Sigma} (\tilde{\rho}_t || \rho_\Sigma) + \frac{C_{A_N}^2C_3^2}{4} e^{-\alpha t}
    \end{align*}

    Next, we observe that the following terms decay exponentially fast :
    \begin{align*}
        &\abs{\sum_{k=1}^d \int_\Sigma \tilde{\rho}_t \inner{\divergence_\Sigma (f_k)\delta_k, \nabla_\Sigma \ln \sbra{\frac{\tilde{\rho}_t}{\rho_\Sigma}}} d\sigma_\Sigma}  \leq \int_\Sigma \tilde{\rho}_t \norm{\sum_{k=1}^d \divergence_\Sigma (f_k) \delta_k}_2 \norm{\nabla_\Sigma \ln \sbra{\frac{\tilde{\rho}_t}{\rho_\Sigma}}}_2 d\sigma_\Sigma \\
        & \overset{(\oplus)}{\leq} \int_\Sigma \tilde{\rho}_t \norm{\Pi A_N}_2 \norm{\divergence_\Sigma (\Pi)}_2 \norm{\nabla_\Sigma \ln \sbra{\frac{\tilde{\rho}_t}{\rho_\Sigma}}}_2 d\sigma_\Sigma \\
        &\leq C_{A_N} C_4 e^{-\alpha  t} \int_\Sigma \tilde{\rho}_t 1 \cdot \norm{\nabla_\Sigma \ln \sbra{\frac{\tilde{\rho}_t}{\rho_\Sigma}}}_2 d\sigma_\Sigma \\
        & \leq C_{A_N} C_4 e^{-\alpha  t} \int_\Sigma \tilde{\rho}_t \lbra{\frac{C_{A_N}C_4}{4} + \frac{1}{C_{A_N} C_4}\norm{\nabla_\Sigma \ln \sbra{\frac{\tilde{\rho}_t}{\rho_\Sigma}}}_2^2 } d\sigma_\Sigma \\
        &\leq e^{-\alpha t} I^\Sigma (\tilde{\rho}_t || \rho_\Sigma) + \frac{C_{A_N}^2 C_4^2}{4}e^{-\alpha t },
    \end{align*}
    where $C_4$ := $\max_{x \in \Sigma} \norm{\divergence_\Sigma (\Pi(x))}_2 < \infty$, $(\oplus)$ holds because $\sum_{k=1}^d \divergence_\Sigma(f_k) \delta_k = \divergence_\Sigma (\Pi)^T \Pi A_N$ once $\divergence_\Sigma(\Pi)$ is given by $(\divergence_\Sigma \Pi)_k := \divergence_\Sigma (f_k)$ for each $k\in [d]$.
    Similarly, we obtain the following bound by using \autoref{lem:exponential decay of div(Pi A_N)} :
    \begin{align*}
        &\abs{\sum_{k=1}^d \int_\Sigma \tilde{\rho}_t \inner{\divergence_\Sigma (\delta_k) f_k, \nabla_\Sigma \ln \sbra{\frac{\tilde{\rho}_t}{\rho_\Sigma}}} d\sigma_\Sigma}  \leq \int_\Sigma \tilde{\rho}_t \norm{\Pi}_2 \norm{\divergence(\Pi A_N)}_2 \norm{\nabla_\Sigma \ln \sbra{\frac{\tilde{\rho}_t}{\rho_\Sigma}}} d\sigma_\Sigma \\
        &\overset{\text{Lem} \ref{lem:exponential decay of div(Pi A_N)}}{\leq} C_{div} e^{-\alpha t} \int_\Sigma \tilde{\rho}_t \norm{\nabla_\Sigma \ln \sbra{\frac{\tilde{\rho}_t}{\rho_\Sigma}}}_2 d\sigma_\Sigma \leq e^{-\alpha t} I^\Sigma(\tilde{\rho}_t || \rho_\Sigma) + \frac{C_{div}^2}{4}e^{-\alpha t}
    \end{align*}
    and
    \begin{align*}
        &\abs{\sum_{k=1}^d \int_\Sigma \tilde{\rho}_t \inner{\divergence_\Sigma (\delta_k) \delta_k, \nabla_\Sigma \ln \sbra{\frac{\tilde{\rho}_t}{\rho_\Sigma}}} d\sigma_\Sigma}  \leq \int_\Sigma \tilde{\rho}_t \norm{\Pi A_N}_2 \norm{\divergence(\Pi A_N)}_2 \norm{\nabla_\Sigma \ln \sbra{\frac{\tilde{\rho}_t}{\rho_\Sigma}}} d\sigma_\Sigma \\
        &\overset{\text{Lem} \ref{lem:exponential decay of div(Pi A_N)}}{\leq} C_{div}C_{A_N} e^{-2\alpha t}\int_\Sigma \tilde{\rho}_t \norm{\nabla_\Sigma \ln \sbra{\frac{\tilde{\rho}_t}{\rho_\Sigma}}} d\sigma_\Sigma \leq e^{-\alpha t} I^\Sigma (\tilde{\rho}_t || \rho_\Sigma) + \frac{C_{div}^2 C_{A_N}^2}{4} e^{-\alpha t},
    \end{align*}
    where $\divergence_\Sigma(\Pi A_N)$ is similarly defined by $(\divergence_\Sigma \Pi A_N)_k := \divergence_\Sigma (\delta_k)$ for each $k\in [d]$.
    
    \textbf{Applying Gronwall-type inequality}. \quad By summing all the bounds, we arrive at 
    \begin{align*}
        \partial_t \KL^\Sigma (\tilde{\rho}_t || \rho_\Sigma )  \overset{\text{LSI}}{\leq} -2\lambda_{\text{LSI}} (1-C_5 e^{-\alpha t}) \KL^\Sigma (\tilde{\rho}_t || \rho_\Sigma) + C_6 e^{-\alpha t},
    \end{align*}
    with $C_5:= (4 + 4C_{A_N} + 2 C_{A_N}^2)$, $C_6 := \sbra{ \frac{C_{b_N}^2}{4} + \frac{C_{A_N}C_3^2}{2} + \frac{C_{A_N}^2 C_3^2}{4} + \frac{C_{A_N}^2 C_4^2}{4} + \frac{C_{div}^2}{4} + \frac{C_{div}^2 C_{A_N}^2}{4}}$.
    Therefore, the Gr\"onwall-type inequality gives for $t>t_{\text{cut}}, t_{\text{cut}}:= \max \mbra{\frac{1}{\alpha} \ln \delta, \frac{1}{\alpha} \ln (C_5)}$:
    \begin{align*}
        \KL^\Sigma (\tilde{\rho}_t || \rho_\Sigma) & \leq \exp \sbra{-2 \lambda_{\text{LSI}} (t-t_{\text{cut}}) - \frac{2\lambda_{\text{LSI}}C_5}{\alpha}(e^{-\alpha t} -e^{-\alpha t_{\text{cut}}}) }[\KL^\Sigma (\tilde{\rho}_{t_{\text{cut}}} || \rho_\Sigma) \\
        &+ C_6 \int_{t_{\text{cut}}}^t \exp \sbra{ 2\lambda_{\text{LSI}} (s -{t_{\text{cut}}}) + \frac{2\lambda_{\text{LSI}}C_5}{\alpha} (e^{-\alpha s} - e^{-\alpha t_{\text{cut}}})}e^{-\alpha s} ds].
    \end{align*}
     In particular, if $\alpha > 2\lambda_{\text{LSI}}$, it holds that 
    \begin{equation*}
        \KL^\Sigma (\tilde{\rho}_t || \rho_\Sigma) \leq \exp \sbra{-2 \lambda_{\text{LSI}} (t-t_{\text{cut}}) - \frac{2\lambda_{\text{LSI}}C_5}{\alpha}(e^{-\alpha t} -e^{-\alpha t_{\text{cut}}}) }[\KL^\Sigma (\tilde{\rho}_{t_{\text{cut}}} || \rho_\Sigma) + C_7]
    \end{equation*}
    where $C_7 := \frac{e^{-\alpha t_{\text{cut}}}}{\alpha - 2\lambda_{\text{LSI}}}$ from the fact that
    \begin{align*}
        \int_{t_{\text{cut}}}^\infty  \mkern-10mu \exp \sbra{ 2\lambda_{\text{LSI}} (s -{t_{\text{cut}}}) + \frac{2\lambda_{\text{LSI}}C_5}{\alpha} (e^{-\alpha s} - e^{-\alpha t_{\text{cut}}})}e^{-\alpha s} ds &\leq \int_{t_{\text{cut}}}^\infty \mkern-10mu \exp(2\lambda_{\text{LSI}} (s-t_{\text{cut}})) e^{-\alpha s} ds \\
        & = \frac{e^{-\alpha t_{\text{cut}}}}{\alpha - 2\lambda_{\text{LSI}}} < \infty.
    \end{align*}
\end{proof}

\begin{lemma_ap} \ 
\label{lem:sum of divf_k f_k = 0}
Let $\mbra{f_k}_{k=1}^d$ be a set of vectors defined by $f_k = \Pi(x)e_k$, where 
$\Pi(x)$ is the orthogonal projector onto $T_x\Sigma$ and $e_k$ is the $k$th standard basis vector of $\bbR^d$. Then, it holds that
\begin{equation*}
    \sum_{k=1}^d (\divergence_\Sigma f_k)f_k = 0.
\end{equation*}
\end{lemma_ap} 
\begin{proof}
    Recalling that $\Pi(x) = I -\nabla h(x)^T (\nabla h(x) \nabla h(x)^T)^{-1} \nabla h(x)$, we define $N(x) = \nabla h(x)^T (\nabla h(x) \nabla h(x)^T)^{-\frac{1}{2}} \in \bbR^{d\times m }$ so that $N(x)^T N(x) = I_m$ and $\Pi(x) = I - N(x)N(x)^T$. If we let the columns of $N(x)$ to be  $\mbra{n_1(x), ... , n_m(x)}$, then these produce an orthonormal basis of $N_x\Sigma$. This is because $\text{Im}(N(x)) = \text{Im}(\nabla h(x)^T
    )$ (from the invertibility $(\nabla h(x) \nabla h(x)^T)^{-\frac{1}{2}}$) implies $\mbra{n_1(x), ... n_m(x)}$ span $N_x\Sigma$ and $N(x)^T N(x) =I $ guarantees the orthonormality. 
    
    Next, we define a vector field $F(x)$ by $F(x) = \Pi(x) \divergence_\Sigma(\Pi(x))$ where $(\divergence_\Sigma \Pi(x))_k := \divergence_\Sigma (f_k(x))$ for each $k\in [d]$. With this definition, we have $\divergence_\Sigma \Pi = - \divergence_\Sigma (NN^T) = -\sum_{k=1}^m \divergence_\Sigma (n_k n_k^T)$. Now observe that for $l\in [d]$,
    \begin{align*}
        (\divergence_\Sigma (n_k n_k^T))_l &= \trace{\Pi \nabla ((n_k n_k^T)_l)} = \sum_{i,j=1}^d \Pi_{ij} \partial_j (n_kn_k^T)_{il} = \sum_{i,j}^d \lbra{ \Pi_{ij} \partial_j n_{ki} n_{kl} + \Pi_{ij}n_{ki}\partial_jn_{kl}}\\
        &= (\divergence_\Sigma n_k) n_{kl} + \sum_{j=1}^d \underbrace{(n_k \Pi)_j}_{=0}\partial_j n_{kl} = (\divergence_\Sigma n_k) n_{kl}
    \end{align*}
    where $n_{kl}$ is the $l$th component of $n_k$. From this fact, we have the following result: 
    \begin{equation*}
        \divergence_\Sigma \Pi = -\sum_{k=1}^d \divergence_\Sigma(n_kn_k^T) = -\sum_{k=1}^d (\divergence_\Sigma n_k) n_k \ \Rightarrow \  F = \Pi \divergence _\Sigma (\Pi) = 0.
    \end{equation*}
    Finally, the definition of $F$ gives $\sum_{k=1}^d \divergence_\Sigma (f_k) f_k = F$, which is zero by the above argument. 
\end{proof}

\begin{lemma_ap} \ 
    \label{lem:exponential decay of div(Pi A_N)}
    Let $\tilde{\rho}_t$ be the law of the projected process $Y_t$ of $X_t$, where $X_t$ follows equality-constrained OLLA. Define $\delta_k(t,x) := \Pi(x) A_N(x,t) e_k$ and denote $\divergence_\Sigma(\Pi A_N)$ as a vector in $\bbR^d$ such that $(\divergence_\Sigma \Pi A_N)_k := \divergence_\Sigma (\delta_k)$ for each $k\in [d]$. Then, it holds almost surely 
    \begin{equation*}
        \norm{\divergence_\Sigma (\Pi(Y_t)A_N (Y_t,t))}_2 \leq C_{div} e^{-\alpha t} 
    \end{equation*}
    for $t\geq t_0 (:= \frac{1}{\alpha}\ln \sbra{\frac{1}{\delta}})$ and some constant $C_{div}>0$.
\end{lemma_ap}
\begin{proof}
    First, for each $k \in [d]$, observe that $\nabla \delta_k = \nabla \Pi(y)A_N(y,t) e_k + \Pi(y) \nabla A_N(y,t) e_k$ and 
    \begin{equation}
        \label{eqn:div_Sigma delta_k(y)}
        \divergence_\Sigma(\delta_k(y)) = \underbrace{\trace{\Pi(y) \nabla \Pi(y) A_N(y,t) e_k}}_{\text{Term (1)}} + \underbrace{\trace{\Pi(y) \nabla A_N(y,t) e_k}}_{\text{Term (2)}}
    \end{equation}
    where, for a matrix-valued function $G(y)$, $\nabla G(y)$ is the third-order tensor defined by $(\nabla G_{ij}(y))_{ijk} = \frac{\partial G_{ij}(y)}{\partial y_k}$ for $i,j,k \in [d]$, and the gradient $\nabla$ is taken over $y$.
   
    For the Term (1), we know that when $y = Y_t, t \geq t_0$,
    \begin{align*}
        \abs{\text{Term (1)}} \leq \norm{\Pi (\nabla \Pi) (A_N e_k)}_F \leq \norm{(\nabla \Pi) (A_N e_k)}_F \leq K_1 \norm{A_Ne_k}_2 \leq K_1 C_{A_N}e^{-\alpha t} \quad a.s.
    \end{align*}
    where $K_1 := \sup_{y \in \Sigma, \norm{v}_2 =1} \norm{ \nabla \Pi(y)v}_F <\infty$.

    For the Term (2), recall that $A_N(y,t) := \nabla \pi(\zeta(y, h(X_0)e^{-\alpha t}))\Pi(\zeta(y, h(X_0)e^{-\alpha t})) - \nabla \pi(y) \Pi(y)$ conditionally on $X_0$, from \cref{cor:SDE representation of projected process from equality-constrained OLLA}. For the notational convenience, we define $\eta(y,t) := \zeta(y, h(X_0)e^{-\alpha t})$. Then, it follows that
    \begin{align*}
        \nabla A_N(y,t) &= \nabla^2 \pi(\eta(y,t))\nabla \eta(y,t) \Pi(\eta(y,t)) + \nabla \pi (\eta(y,t)) \nabla \Pi(\eta(y,t)) \nabla \eta(y,t) \\
        &- \nabla^2 \pi(y) \Pi(y) - \nabla \pi(y) \nabla \Pi(y) \\
        &= \nabla^2 \pi(\eta(y,t))(\nabla \eta(y,t) - I) \Pi(\eta(y,t)) + \nabla \pi(\eta(y,t))\nabla \Pi(\eta(y,t)) (\nabla \eta(y,t) - I)\\
        &+ \nabla^2 \pi(\eta(y,t)) \Pi(\eta(y,t)) - \nabla^2 \pi(y)\Pi(y) + \nabla \pi(\eta(y,t)) \nabla \Pi(\eta (y,t))- \nabla \pi(y) \nabla \Pi(y).
    \end{align*}
    At this moment, from the recovery map $\zeta$ in \cref{thm:recoverable tubular neighborhood}, the integral form of the remainder gives 
    \begin{equation*}
        \norm{\nabla \eta(y,t) - I}_2 = \norm{\nabla \sbra{\zeta(y,p) - \zeta(y,0)}}_2 = \norm{\nabla_y \zeta(y,p) - \nabla_y \zeta(y,0)}_2 \leq K_2 \norm{p}_2,
    \end{equation*}
    where $p:= h(X_0)e^{-\alpha t}$, $K_2 := \sup_{(y,p) \in \Sigma \times B(0,\delta)}\norm{\nabla_p \nabla_y \zeta(y,p)}_2 < \infty$.

    By combining these results with the previous expression of $\nabla A_N(y,t)$, we get
    \begin{equation*}
        \norm{\nabla A_N(Y_t,t)}_2 \leq D_4 e^{-\alpha t}
    \end{equation*}
    for some $D_4> 0 $, using the boundedness of $\norm{\nabla^2 \pi}_2, \norm{\nabla \pi}_2, \norm{\nabla \Pi}_2$ on $\hat{U}_\delta$, the Lipschitzness of $(\nabla^2\pi) \Pi$, $\nabla \pi \nabla \Pi$ on $\hat{U}_\delta$, and the contraction of $\norm{\eta(Y_t,t) - Y_t)}_2 \leq \frac{M_h}{\kappa} e^{-\alpha t}$. Finally, when $y= Y_t$, we get the following upper bound of Term (2) by applying the previous result:
    \begin{equation*}
        \abs{\text{Term (2)}} \leq \norm{\Pi \nabla A_N e_k}_F \overset{(\circ)}{\leq} \sqrt{d-m} \norm{\nabla A_N e_k}_2 = \sqrt{d-m} D_4 e^{-\alpha t} = D_5 e^{-\alpha t} \quad a.s.
    \end{equation*}
    where $(\circ)$ comes from the fact that $\norm{\Pi(y)}_F^2 = \trace{\Pi(y)} = \text{rank}(\Pi(y)) = d-m$ and $D_5 := \sqrt{d-m} D_4 $.
    Therefore, by combining results for Term (1) and Term (2), we obtain
    \begin{equation*}
        \abs{\divergence_\Sigma (\delta_k(Y_t))} \leq D_6 e^{-\alpha t} \quad \Rightarrow \quad \norm{\divergence_\Sigma (\Pi(Y_t) A_N(Y_t,t))}_2 = \sqrt{\sum_{k=1}^d (\divergence_\Sigma (\delta_k (Y_t)))^2} \leq D_7 e^{-\alpha t}
    \end{equation*}
    for $t\geq t_0$ and $D_6:=K_1 C_{A_N} + D_5 ,D_7:= \sqrt{d} D_6$. Because the final result holds without any dependency on $X_0$, the result holds almost surely without conditioning on $X_0$.
\end{proof}

\clearpage
\begin{lemma_ap}[LSI implies Talagrand inequality \citep{otto2000generalization, bobkov1999exponential, rousset2010free}]
    \label{lem:LSI implies Talagrand inequality} \ 
    For the probability measures  $\mu, \nu$ defined on a smooth complete Riemannian manifold $\Sigma$, define the $W_2^\Sigma$ distance between $\mu$ and $\nu$ in $\Sigma$ by
    \begin{equation*}
        W_2^\Sigma(\mu, \nu) := \sbra{ \inf\limits_{\pi \in \Pi(\mu, \nu)}\int_{\Sigma \times \Sigma} d_\Sigma (p,q)^2 \pi(dp,dq)}^{\frac{1}{2}}
    \end{equation*}
    where $\Pi(\mu ,\nu)$ denotes the set of coupling probability measures of $\mu, \nu$, and $d_\Sigma$ denotes the geodesic distance on $\Sigma$ so that for $p, q \in \Sigma$
    \begin{equation*}
        d_\Sigma(p,q) := \inf \mbra{ \sbra{\int_{0}^1 \norm{\dot{\gamma}(t)}_g^2 dt }^{\frac{1}{2}} \mid \gamma \in C^1 ([0,1],\Sigma), \gamma(0) = p, \gamma (1) = q} 
    \end{equation*}
    with $\norm{\cdot}_g$ being the induced metric on $\Sigma$. Then, the probability $\nu$ is said to satisfy the Talagrand inequality $(T)$ with constant $\lambda_{T}$ >0 if for all probability measures $\mu$ with $\mu \ll \nu$, it holds that
    \begin{equation*}
        W_2^\Sigma(\mu, \nu) \leq \sqrt{\frac{2}{\lambda_T} \KL^\Sigma(\mu || \nu)}.
    \end{equation*}  
    Particularly, if $\nu$ satisfies a Logarithmic Sobolev Inequality (LSI) with constant $\lambda_{\text{LSI}}$, then $\nu$ satisfies the Talagrand inequality with constant $\lambda_{\text{LSI}}$.
\end{lemma_ap}

\begin{theorem*}[Convergence result for equality-constrained OLLA] \
    Suppose assumptions \ref{asm:C1} to \ref{asm:C4} hold. Let $X_t$ be the stochastic process following the equality-constrained OLLA (\ref{eqn:equality_constrained OLLA}) and let $\rho_t, \tilde{\rho}_t$ be the law of $X_t$ and its projection $Y_t = \pi(X_t)$ on $\Sigma$ for $t \geq t_{\text{cut}}$, $t_{\text{cut}} := \max \mbra{\frac{1}{\alpha} \ln \delta, \frac{1}{\alpha} \ln C_5}$, respectively.
    Then, for all $t \geq t_{\text{cut}}$, it holds that
    \begin{align*}
        W_2(\rho_t, \rho_\Sigma) \leq \frac{M_h}{\kappa}e^{-\alpha t} + \sqrt{\frac{2}{\lambda_{\text{LSI}}} \KL^\Sigma (\tilde{\rho}_t || \rho_\Sigma)} 
    \end{align*}
    where 
   \begin{align*}
        \KL^\Sigma (\tilde{\rho}_t || \rho_\Sigma) & \leq \exp \sbra{-2 \lambda_{\text{LSI}} (t-t_{\text{cut}}) - \frac{2\lambda_{\text{LSI}}C_5}{\alpha}(e^{-\alpha t} -e^{-\alpha t_{\text{cut}}}) }[\KL^\Sigma (\tilde{\rho}_{t_{\text{cut}}} || \rho_\Sigma) \\
        &+ C_6 \int_{t_{\text{cut}}}^t \exp \sbra{ 2\lambda (s -{t_{\text{cut}}}) + \frac{2\lambda_{\text{LSI}}C_5}{\alpha} (e^{-\alpha s} - e^{-\alpha t_{\text{cut}}})}e^{-\alpha s} ds]
    \end{align*}
    In particular, if $\alpha > 2 \lambda_{\text{LSI}}$, it holds that 
    \begin{equation*}
        \KL^\Sigma (\tilde{\rho}_t || \rho_\Sigma) \leq \exp \sbra{-2 \lambda_{\text{LSI}} (t-t_{\text{cut}}) - \frac{2\lambda_{\text{LSI}}C_5}{\alpha}(e^{-\alpha t} -e^{-\alpha t_{\text{cut}}}) }[\KL^\Sigma (\tilde{\rho}_{t_{\text{cut}}} || \rho_\Sigma) + C_7]
    \end{equation*}
    for some constants $C_5=\calO \sbra{1+ \frac{C_{L_A}M_h}{\kappa} +\sbra{\frac{C_{L_A}M_h}{\kappa}}^2}, C_6, C_7 := \frac{C_6 e^{-\alpha t_{\text{cut}}}}{\alpha - 2\lambda_{\text{LSI}}} > 0$ with $C_{L_A}$ being the Lipschitz constant of $\nabla \pi(x) \Pi(x) $ on $\hat{U}_\delta(\Sigma)$.
\end{theorem*}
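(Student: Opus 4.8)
The plan is to obtain the stated bound purely by assembling the three preceding lemmas through a single triangle inequality in the \emph{ambient} $2$-Wasserstein metric. Since $\Sigma \subset \bbR^d$, all three laws $\rho_t$, $\tilde{\rho}_t$, and $\rho_\Sigma$ may be regarded as probability measures on $(\bbR^d, \norm{\cdot}_2)$, so the ordinary triangle inequality for $W_2$ gives
\begin{equation*}
    W_2(\rho_t, \rho_\Sigma) \leq W_2(\rho_t, \tilde{\rho}_t) + W_2(\tilde{\rho}_t, \rho_\Sigma).
\end{equation*}
I would first note that the threshold $t_{\text{cut}}$ is engineered so that, for $t \geq t_{\text{cut}}$, both the nearest-point projection $\pi$ is well-defined on $\mathrm{supp}(\rho_t)$ (by \cref{thm:recoverable tubular neighborhood}, $X_t$ having entered the recoverable tubular neighborhood $\hat{U}_\delta(\Sigma)$ once $\norm{h(X_t)}_2$ is small enough) and the Gr\"onwall coefficient appearing in \cref{lem:Upper bound of KL_Sigma} stays positive; hence $Y_t = \pi(X_t)$ and $\tilde{\rho}_t$ are well-defined throughout the regime of interest.

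For the first term, I would invoke \cref{lem:bound of W_2_eq_only} directly: it chains the regularity estimate $\norm{X_t - \pi(X_t)}_2 \leq \tfrac1\kappa \norm{h(X_t)}_2$ of \cref{lem:regularity lemma} with the exponential constraint decay $\norm{h(X_t)}_2 \leq M_h e^{-\alpha t}$ (\cref{lem:Exponential decay of constraint functions}), and then applies Jensen's inequality along the optimal coupling to yield $W_2(\rho_t, \tilde{\rho}_t) \leq \tfrac{M_h}{\kappa} e^{-\alpha t}$, which is exactly the first summand. For the second term, both $\tilde{\rho}_t$ and $\rho_\Sigma$ are supported on $\Sigma$, so the key reduction is the one-sided metric comparison $\norm{p-q}_2 \leq d_\Sigma(p,q)$ for $p,q \in \Sigma$ (a chord never exceeds any connecting geodesic), giving $W_2(\tilde{\rho}_t, \rho_\Sigma) \leq W_2^\Sigma(\tilde{\rho}_t, \rho_\Sigma)$ where $W_2^\Sigma$ uses the intrinsic geodesic distance. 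Assumption \ref{asm:C4} supplies the LSI with constant $\lambda_{\text{LSI}}$, so \cref{lem:LSI implies Talagrand inequality} (LSI $\Rightarrow$ Talagrand) converts this into
\begin{equation*}
    W_2(\tilde{\rho}_t, \rho_\Sigma) \leq W_2^\Sigma(\tilde{\rho}_t, \rho_\Sigma) \leq \sqrt{\tfrac{2}{\lambda_{\text{LSI}}} \KL^\Sigma(\tilde{\rho}_t \,\|\, \rho_\Sigma)},
\end{equation*}
the second summand. Substituting the two bounds into the triangle inequality yields the displayed estimate, and the explicit decay of $\KL^\Sigma(\tilde{\rho}_t \,\|\, \rho_\Sigma)$ is then precisely the conclusion of \cref{lem:Upper bound of KL_Sigma}, whose hypotheses (the threshold $t \geq t_{\text{cut}}$, and $\alpha > 2\lambda_{\text{LSI}}$ for the sharpened form) coincide with those assumed here.

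Because the theorem is an assembly, the genuine analytic difficulty does not reside at this level but in the supporting \cref{lem:Upper bound of KL_Sigma}: there one must represent the projected process $Y_t$ as an overdamped Langevin dynamics on $\Sigma$ perturbed by a noisy drift $b_N$ and noisy diffusion $A_N$ (\cref{cor:SDE representation of projected process from equality-constrained OLLA}), show every resulting error term in $\partial_t \KL^\Sigma$ decays like $e^{-\alpha t}$, and absorb each one into the dissipation $-I^\Sigma(\tilde{\rho}_t \| \rho_\Sigma)$ via AM--GM so that the LSI dissipation $-2\lambda_{\text{LSI}} \KL^\Sigma$ ultimately dominates. At the level of the theorem itself, the only subtlety I would flag is the ambient-versus-intrinsic Wasserstein bookkeeping: the triangle inequality must be taken in the Euclidean metric so that the cross-manifold term $W_2(\rho_t, \tilde{\rho}_t)$ is meaningful, whereas Talagrand is intrinsic, and the two are reconciled exactly by the inequality $\norm{\cdot}_2 \leq d_\Sigma(\cdot,\cdot)$ on $\Sigma$.
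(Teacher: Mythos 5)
Your proposal is correct and follows essentially the same route as the paper's own proof: an ambient triangle inequality $W_2(\rho_t,\rho_\Sigma)\leq W_2(\rho_t,\tilde{\rho}_t)+W_2(\tilde{\rho}_t,\rho_\Sigma)$, the chord-versus-geodesic comparison $\norm{p-q}_2\leq d_\Sigma(p,q)$ to pass to $W_2^\Sigma$, the LSI-implies-Talagrand lemma for the on-manifold term, and \cref{lem:bound of W_2_eq_only} together with \cref{lem:Upper bound of KL_Sigma} for the two explicit bounds. The only cosmetic difference is that the paper also records that compactness gives completeness of $\Sigma$ via Hopf--Rinow before invoking \cref{lem:LSI implies Talagrand inequality}, a hypothesis your write-up uses implicitly.
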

\begin{proof}
    First, observe that $d_2(p,q) = \norm{p-q}_2 \leq d_\Sigma (p,q), \ \forall p, q \in \Sigma$ because $\Sigma$ is the submanifold of $\bbR^d$ with Euclidean metric. Thus, $W_2 (\tilde{\rho}_t, \rho_\Sigma) \leq W_2^\Sigma (\tilde{\rho}_t, \rho_\Sigma)$ holds and we have
    \begin{equation*}
        W_2(\rho_t, \rho_\Sigma) \overset{\triangle-\text{ineq}} \leq W_2(\rho_t, \tilde{\rho}_t) + W_2 (\tilde{\rho}_t, \rho_\Sigma) \leq W_2(\rho_t, \tilde{\rho}_t) + W_2^\Sigma (\tilde{\rho}_t, \rho_\Sigma).
    \end{equation*}
    Now, recall that $\Sigma:= \mbra{x \in \bbR^d \mid h(x) = 0}$ is a smooth compact and connected Riemannian manifold. Therefore, it is complete by the Hopf-Rinow theorem. Thus, \cref{lem:LSI implies Talagrand inequality} implies 
    \begin{equation*}
        W_2^\Sigma (\tilde{\rho}_t, \rho_\Sigma) \leq \sqrt{\frac{2}{\lambda_{\text{LSI}}} \KL^\Sigma (\tilde{\rho}_t || \rho_\Sigma)} \ \Rightarrow \ W_2 (\rho_t, \rho_\Sigma) \leq W_2(\rho_t, \tilde{\rho}_t) + \sqrt{\frac{2}{\lambda_{\text{LSI}}} \KL^\Sigma (\tilde{\rho}_t || \rho_\Sigma)}.
    \end{equation*}
    Hence, we conclude the proof by borrowing the results of \cref{lem:bound of W_2_eq_only} and \cref{lem:Upper bound of KL_Sigma}
\end{proof}

\section{Proof of Theoretical Results - Inequality-constrained OLLA}
\label{appendix:sec:Proof of theoretical results - Proof of theoretical results - Inequality-constraint OLLA}
In this section, we analyze the non-asymptotic convergence rate of inequality-constrained OLLA. Note that \cref{prop:Construction of OLLA and its closed form SDE_app} gives
\begin{align*}
    dX_t &= -\nabla f(X_t)dt + \sqrt{2}dW_t,&& \text{if } g(X_t) <0\\ 
    dX_t &= -\Pi(X_t) \nabla f(X_t)dt -\alpha \nabla g_{I_x}^T G(X_t)^{-1} (g_{I_x} + \epsilon \eye_{I_x})dt + \sqrt{2}\Pi(X_t) \circ dW_t, && \text{otherwise} 
\end{align*}
as the closed form SDE of inequality-constrained OLLA.
\subsection{Convergence Result for Inequality-constrained OLLA}
\begin{lemma_ap}[Boundary behavior of $\rho_t$ of inequality-constrained OLLA] \ 
    \label{lem:Boundary behaviors of rho_t of inequality-constrained OLLA}
     Let $X_t$ be the stochastic process following the inequality-constrained OLLA and $\rho_t$ be the law of $X_t$. Also, denote $J_t$ be the probability current density defined by $\partial_t \rho_t = -\nabla \cdot J_t$. Then, for $t \geq t_{\text{cut}}$, $t_{\text{cut}}: = \frac{1}{\alpha} \ln \sbra{\frac{M_g+\epsilon}{\epsilon}}$, it holds that 
     \begin{equation*}
         \inner{n(x), J_t(x)} = 0, \quad \rho_t(x) = 0, \quad \forall x \in \partial\Sigma, 
     \end{equation*}
     where $n$ is the unit normal vector of $\Sigma$ and $\sigma_{\partial 
    \Sigma}$ is the surface measure of $\partial \Sigma$.
\end{lemma_ap}
\begin{proof}
    From the Fokker-Planck equation of the inequality-constrained OLLA, we know that 
    \begin{equation*}
        J_t = q \rho_t -\frac{1}{2} \nabla \cdot [Q Q^T \rho_t] = \lbra{q - \frac{1}{2} \nabla \cdot (QQ^T)}\rho_t - \frac{1}{2} QQ^T \nabla \rho_t,
    \end{equation*}
    where the last equality comes from the chain rule of the matrix divergence. Then, for each $\nabla g_j$, $j \in I_x$, observe that
    \begin{align*}
        \nabla g_j^T (\nabla \cdot (QQ^T)) &= \sum_{k=1}^d (\partial_k g_j) \lbra{ \nabla \cdot (QQ^T)}_k = \sum_{i,k=1}^d (\partial_k g_j) \partial_i(QQ^T)_{ik} \\
        &\overset{(\circ)}{=} \sum_{i=1}^d \partial_i \lbra{\underbrace{\sum_{k=1}^d (QQ^T)_{ik} \partial_k g_j}_{=0}} - \sum_{i,k=1}^d (QQ^T)_{ik} \partial_i \partial_k g_j = -\trace{\nabla^2 g_j QQ^T}
    \end{align*}
    holds for $x \in \partial \Sigma$, where $(\circ)$ holds due to the $\nabla g_k^T Q = 0$ condition of \cref{prop:Construction of OLLA and its closed form SDE_app} and $Q^2 = Q$.
    Therefore, for each $j\in I_x$, we have
    \begin{equation*}
        \inner{J_t,\nabla g_j} = \nabla g_j^T J_t = \lbra{\nabla g_j^T q + \frac{1}{2}\trace{\nabla^2 g_j QQ^T}}\rho_t \overset{(\triangle)}{=} -\alpha (g+ \epsilon)\rho_t= -\alpha \epsilon \rho_t \leq 0,
    \end{equation*}
    where $(\triangle)$ holds from the $\nabla g_j q + \frac{1}{2} \trace{\nabla^2 g_j QQ^T} + \alpha (g+\epsilon) =0 $ condition of \cref{prop:Construction of OLLA and its closed form SDE_app}. Finally, we conclude the proof by observing the following: 
    \begin{equation*}
        0 \overset{(+)}{=} \frac{d}{dt} \int_\Sigma \rho_t d\sigma_\Sigma = -\int_\Sigma \nabla \cdot J_t d\sigma_\Sigma = -\int_{\partial \Sigma} n^T J_t d \sigma_{\partial \Sigma} = \int_{\partial \Sigma} \underbrace{\alpha \epsilon \rho_t}_{\geq 0} d \sigma_{\partial \Sigma} \Rightarrow \rho_t = 0,
    \end{equation*}
    where $n$ is the outward unit normal vector of $\partial\Sigma$ and $(+)$ holds because $\text{supp}(\rho_t) \subset \Sigma$ for $t\geq t_{\text{cut}}$ implies $\int_\Sigma \rho_t d\sigma = 1$ for $t \geq t_{\text{cut}}$. 
\end{proof}
\clearpage
\begin{theorem*}[Convergence result for inequality-constrained OLLA] \ 
     Assume that $\rho_\Sigma$ satisfies the LSI condition with constant $\lambda_{\text{LSI}}$. Let $X_t$ be the stochastic process following the inequality-constrained OLLA  and let $\rho_t$ be the law of $X_t$. Then, for $t\geq t_{\text{cut}}$, $t_{\text{cut}}: = \frac{1}{\alpha} \ln \sbra{\frac{M_g+\epsilon}{\epsilon}}$, the following holds
    \begin{align*}
        W_2(\rho_t, \rho_\Sigma) \leq  \sqrt{\frac{2}{\lambda_{\text{LSI}}} \KL^\Sigma (\rho_t || \rho_\Sigma)},
    \end{align*}
    where 
    \begin{equation*}
        \KL^\Sigma(\rho_t ||\rho_\Sigma) \leq e^{-2\lambda_{\text{LSI}} (t-t_{\text{cut}})}
        \KL^\Sigma(\rho_{t_{\text{cut}}} || \rho_\Sigma).
    \end{equation*}
\end{theorem*}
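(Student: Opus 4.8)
The plan is to exploit a structural simplification that is special to the inequality-only setting: by Assumption~\ref{asm:C2} the feasible set is full-dimensional, $\dim\Sigma = d$, so there is no dimensional reduction, no auxiliary projection map $\pi$, and the law $\rho_t$ already lives on $\Sigma$ at late times. First I would invoke \cref{lem:Exponential decay of constraint functions}: for $t \ge t_{\text{cut}} = \frac{1}{\alpha}\ln((M_g+\epsilon)/\epsilon)$ every active inequality has landed, so $g(X_t) \le 0$ almost surely and $\text{supp}(\rho_t)\subset\Sigma$. Because $\Pi = I$ throughout $\text{int}(\Sigma)$, the intrinsic Fisher information $I^\Sigma$ reduces to the ordinary one there, which removes all of the noisy-correction bookkeeping ($b_N$, $A_N$) that complicated the equality-constrained analysis. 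This is why the final bound carries no extra $\frac{M_h}{\kappa}e^{-\alpha t}$ term and no accumulated error constants.

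For the Wasserstein estimate I would note $\norm{p-q}_2 \le d_\Sigma(p,q)$, giving $W_2(\rho_t,\rho_\Sigma) \le W_2^\Sigma(\rho_t,\rho_\Sigma)$, and then apply the Talagrand inequality that follows from the log-Sobolev assumption (Assumption~\ref{asm:C4}, via \cref{lem:LSI implies Talagrand inequality}) to obtain $W_2^\Sigma(\rho_t,\rho_\Sigma) \le \sqrt{(2/\lambda_{\text{LSI}})\,\KL^\Sigma(\rho_t\|\rho_\Sigma)}$. Chaining these two facts yields the first displayed inequality and reduces the theorem to establishing exponential decay of $\KL^\Sigma(\rho_t\|\rho_\Sigma)$.

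The core is a single differential-inequality step. I would write $\frac{d}{dt}\KL^\Sigma(\rho_t\|\rho_\Sigma) = \int_\Sigma \partial_t\rho_t\,\ln(\rho_t/\rho_\Sigma)\,d\sigma_\Sigma$ (the $\partial_t\int\rho_t = 0$ term dropping since mass is conserved on $\Sigma$ for $t\ge t_{\text{cut}}$), substitute the Fokker--Planck equation $\partial_t\rho_t = -\nabla\cdot J_t$, and integrate by parts on the manifold-with-boundary $\Sigma$. This produces a bulk term $\int_\Sigma J_t\cdot\nabla\ln(\rho_t/\rho_\Sigma)\,d\sigma_\Sigma$ and a boundary term $-\int_{\partial\Sigma}\inner{n,J_t}\,\ln(\rho_t/\rho_\Sigma)\,d\sigma_{\partial\Sigma}$. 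Since the interior dynamics is standard overdamped Langevin with $\nabla f = -\nabla\ln\rho_\Sigma$, the current is $J_t = -\rho_t\nabla\ln(\rho_t/\rho_\Sigma)$, so the bulk term equals exactly $-I^\Sigma(\rho_t\|\rho_\Sigma)$. The boundary term is where \cref{lem:Boundary behaviors of rho_t of inequality-constrained OLLA} does the essential work: for $t\ge t_{\text{cut}}$ the normal flux satisfies $\inner{n,J_t} = -\alpha\epsilon\rho_t = 0$ on $\partial\Sigma$, so the boundary contribution vanishes. Thus $\frac{d}{dt}\KL^\Sigma = -I^\Sigma \le -2\lambda_{\text{LSI}}\KL^\Sigma$ by the LSI, and a one-step Gr\"onwall argument on $[t_{\text{cut}},t]$ gives $\KL^\Sigma(\rho_t\|\rho_\Sigma) \le e^{-2\lambda_{\text{LSI}}(t-t_{\text{cut}})}\KL^\Sigma(\rho_{t_{\text{cut}}}\|\rho_\Sigma)$.

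The main obstacle I anticipate is the rigorous justification of the integration by parts and the vanishing of the boundary integral. One must ensure $\ln(\rho_t/\rho_\Sigma)$ and the current $J_t$ are regular enough up to $\partial\Sigma$ that no singular contribution survives; the delicate point is that \cref{lem:Boundary behaviors of rho_t of inequality-constrained OLLA} gives \emph{both} $\rho_t = 0$ and $\inner{n,J_t}=0$ on $\partial\Sigma$, so the integrand is an indeterminate $0\cdot(-\infty)$ form that I would control via the decay $-\alpha\epsilon\,\rho_t\ln\rho_t \to 0$ as the density flattens to zero at the boundary. Once this is handled, the remainder is a clean, error-free Gr\"onwall estimate, in sharp contrast to the equality-constrained case where the projection introduces $\calO(e^{-\alpha t})$ perturbation terms.
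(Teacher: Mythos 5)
Your proposal is correct and follows essentially the same route as the paper: landing by time $t_{\text{cut}}$, the vanishing normal flux and density on $\partial\Sigma$ from \cref{lem:Boundary behaviors of rho_t of inequality-constrained OLLA}, integration by parts reducing $\partial_t\KL^\Sigma$ to $-I^\Sigma(\rho_t\|\rho_\Sigma)$ with no boundary contribution, the LSI plus Gr\"onwall for the KL decay, and \cref{lem:LSI implies Talagrand inequality} together with $\norm{p-q}_2\le d_\Sigma(p,q)$ for the $W_2$ bound. Your explicit attention to the $0\cdot(-\infty)$ indeterminacy of the boundary integrand is in fact slightly more careful than the paper, which simply drops the boundary term once the flux vanishes.
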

\begin{proof}
    Observe that 
    \begin{align*}
        \partial_t \KL^\Sigma (\rho_t ||& \rho_\Sigma) = \int_\Sigma \partial_t \rho_t  \ln \sbra{\frac{\rho_t}{\rho_\Sigma}} d\sigma_\Sigma + \partial_t \int \rho_t d\sigma_\Sigma =  \int_\Sigma \partial \rho_t \ln \sbra{\frac{\rho_t}{\rho_\Sigma}} d\sigma_\Sigma\\
        &= \int_\Sigma (- \nabla \cdot J_t) \ln \sbra{\frac{\rho_t}{\rho_\Sigma}} d\sigma_\Sigma  = \int_\Sigma J_t \nabla \ln  \sbra{\frac{\rho_t}{\rho_\Sigma}}d\sigma_\Sigma - \int_{\partial \Sigma} \underbrace{\inner{J_t, n}}_{=0} \ln \sbra{\frac{\rho_t}{\rho_\Sigma}} d\sigma_{\partial \Sigma}\\
        &\overset{(\circ)}{=} \int_\Sigma J_t \nabla \ln \sbra{\frac{\rho_t}{\rho_\Sigma}} d\sigma_\Sigma \overset{(\triangle)}{=} -I^\Sigma(\rho_t ||\rho_\Sigma) \leq -2\lambda_{\text{LSI}} \KL_\Sigma(\rho_t || \rho_\Sigma),
    \end{align*}
    where $(\circ)$ holds due to \cref{lem:Boundary behaviors of rho_t of inequality-constrained OLLA} for $t \geq t_{\text{cut}}$ and $(\triangle)$ comes from the fact that $J_t = \nabla \ln \rho_\Sigma(x) \rho_t - \rho_t \nabla \ln \rho_t  = - \rho_t \nabla \ln \sbra{ \frac{\rho_t}{\rho_\Sigma}}$ almost everywhere in $\Sigma$.
    Therefore, we recover the upper bound of $\KL^\Sigma (\rho_t || \rho_\Sigma)$ by applying the Gronwall-type inequality as follows:
    \begin{equation*}
        \KL^\Sigma (\rho_t || \rho_\Sigma) \leq e^{-2 \lambda_{\text{LSI}}(t-t_{\text{cut}})} \KL^\Sigma (\rho_{t_{\text{cut}}} || \rho_\Sigma).
    \end{equation*}
    Also, we recall that $\Sigma:= \mbra{x \in \bbR^d \mid g(x) \leq 0}$ is a smooth compact and connected Riemannian manifold, thereby it is complete by the Hopf-Rinow theorem. Thus, \cref{lem:LSI implies Talagrand inequality} implies 
    \begin{equation*}
        W_2(\rho_t, \rho_\Sigma) \leq W_2^\Sigma (\rho_t, \rho_\Sigma) \leq \sqrt{\frac{2}{\lambda_{\text{LSI}}} \KL^\Sigma (\rho_t || \rho_\Sigma)}.
    \end{equation*}
    Hence, we conclude the proof by applying the previous upper bound of $\KL^\Sigma (\rho_t || \rho_\Sigma)$.
\end{proof}

\section{Proof of Theoretical Results - Mixed-constrained OLLA}
\label{appendix:sec:Proof of theoretical results - Mixed constrained OLLA}
\subsection{Upper Bound of $W_2(\rho_t, \tilde{\rho}_t)$} 
\begin{lemma_ap}[Upper bound of $W_2(\rho_t, \tilde{\rho}_t)$] \ 
    \label{lem:bound of W_2_mixed}
    Let $\rho_t$ be the law of $X_t$ which follows mixed-constrained OLLA (\ref{eqn:mixed_OLLA_premitive_form_app}) and define $t_1 := \max \mbra{\frac{1}{\alpha} \ln \sbra{\frac{M_g +\epsilon}{\epsilon}}, \frac{1}{\alpha} \ln \sbra{\frac{M_h}{\delta}}} $. For $t \geq t_1$, the law $\tilde{\rho}_t$ of $Y_t:= \pi(X_t)$ is well-defined and it holds that
    \begin{equation}
        W_2(\rho_t, \tilde{\rho}_t) \leq \frac{M_h}{\kappa} e^{-\alpha t}
    \end{equation}
\end{lemma_ap}
\begin{proof}
    For $t \geq t_1$, observe that 
    \begin{equation*}
        \norm{X_t -\pi(X_t)}_2 \leq \frac{1}{\kappa}\sbra{\norm{h(X_t)}_2  + \underbrace{\norm{g_{I_{\pi(X_t)}}(X_t)}_2}_{=0}} \leq \frac{M_h}{\kappa}  e^{-\alpha t}
    \end{equation*} by \cref{lem:regularity lemma with boundary} and \cref{lem:expoential decay of constraint functions_app}. Then, by integrating both side with respect to optimal coupling of $\rho_t$ and $\tilde{\rho}_t$, we get
    \begin{equation*}
        W_2(\rho_t, \tilde{\rho}_t) \leq \sbra{\mean{\norm{X_t-Y_t}}_2^2}^{\frac{1}{2}} \overset{(\circ)}{\leq} \mean{\norm{X_t- Y_t}_2}  \leq \frac{M_h}{\kappa} e^{-\alpha t}
    \end{equation*}
    where $(\circ)$ holds by Jensen's inequality.
\end{proof}
\subsection{Upper Bound of $\KL^\Sigma(\tilde{\rho}_t || \rho_\Sigma)$}

\begin{corollary_ap}[SDE representation of projected process from mixed-constrained OLLA] \ 
    \label{cor:SDE representation of projected process from mixed constrained OLLA}
    Let $X_t$ be the stochastic process following mixed-constrained OLLA \eqref{eqn:mixed_OLLA_premitive_form_app}. Define $t_1 := \max \mbra{\frac{1}{\alpha} \ln \sbra{\frac{M_g +\epsilon}{\epsilon}}, \frac{1}{\alpha} \ln \sbra{\frac{M_h}{\delta}}}$ and assume $Y_t = \pi(X_t) \in \text{int}(\Sigma)$ for $t \geq t_1$ via the assumption \ref{asm:M1}. Then, the projected process $Y_t$ follows the following SDE:
    \begin{align*}
        dY_t = \lbra{-\Pi(Y_t)\nabla f(Y_t)+b_N(Y_t, t)}dt +\sqrt{2}\Pi(Y_t)(I+A_N(Y_t, t)) \circ dW_t,
    \end{align*}
    where $\norm{b_N(Y_t, t)}_2 = \tilde{C}_{b_N} e^{-\alpha t}, \norm{A_N(Y_t, t)} = \tilde{C}_{A_N} e^{-\alpha t}$ for $t\geq t_1$ almost surely for some constant $\tilde{C}_{b_N}, \tilde{C}_{A_N}:= \frac{\tilde{C}_{L_A}M_h}{\kappa} > 0$ with $\tilde{C}_{L_A}$ being the Lipschitz constant of $\nabla \pi(x) \Pi(x) $ on $\hat{U}_\delta(\Sigma)$.
\end{corollary_ap}
\begin{proof}
    By applying \cref{lem:SDE representation of projected process} to the SDE (\ref{eqn:mixed_OLLA_closed_form}), it holds that
    \begin{align*}
        dY_t &= \Pi(Y_t)b(Y_t, t)dt + \sqrt{2}\Pi(Y_t)\circ dW_t + \lbra{\nabla \pi(X_t)b(X_t, t) - \nabla \pi(Y_t)b(Y_t, t)}dt\\ 
        &+ \sqrt{2}\lbra{\nabla \pi(X_t)\Pi(X_t) - \nabla \pi(Y_t) \Pi(Y_t)} \circ dW_t
    \end{align*}
    for $b(x, t) = b(x) :=  -\lbra{\Pi(x) \nabla f(x) + \alpha \nabla J(x)^T G^{-1} (x) J(x)}$. Now note that $\pi(X_t) \in \text{int}(\Sigma)$ for $t \geq t_1$ and the recovery map $\zeta$ (\cref{thm:recoverable tubular neighborhood with boundary}) is $C^1$ for $\pi(x) \in \text{int}(\Sigma)$, $x\in \hat{U}_\delta(\Sigma)$. Therefore, we can set $X_t = \zeta(Y_t, h(X_0)e^{-\alpha t})$  by using \cref{lem:Exponential decay of constraint functions} and  \cref{thm:recoverable tubular neighborhood with boundary}. Again, since $X_t, Y_t \in \hat{U}_\delta(\Sigma)$ and the closure of $\hat{U}_\delta(\Sigma)$ is compact, $\nabla \pi (x) b(x)$ and $\nabla \pi(x) \Pi(x)$ is $\tilde{C}_{L_b}, \tilde{C}_{L_A}$-Lipschitz on $\hat{U}_\delta(\Sigma)$, respectively for some $\tilde{C}_{L_b}, \tilde{C}_{L_A} > 0 $. Therefore, it holds that
    \begin{equation*}
        \norm{b_N(Y_t,t)}_2 \leq \tilde{C}_{L_b}\norm{\zeta(Y_t, h(X_0)e^{-\alpha t})- Y_t}_2 \leq \frac{\tilde{C}_{L_b} \norm{h(X_0)}_2}{\kappa}e^{-\alpha t} \leq \frac{\tilde{C}_{L_b}M_h}{\kappa}e^{-\alpha t},
    \end{equation*}
    where $b_N(Y_t,t):= \nabla \pi(\zeta(Y_t, h(X_0) e^{-\alpha t})) b(\zeta(Y_t, h(X_0) e^{-\alpha t})) - \nabla \pi(Y_t) b(Y_t)$ and the last inequality comes from \cref{lem:regularity lemma}. Similarly, we obtain the bound of $A_N(Y_t,t) := \nabla \pi(\zeta(Y_t, h(X_0)e^{-\alpha t}))\Pi(\zeta(Y_t, h(X_0)e^{-\alpha t})) - \nabla \pi(Y_t) \Pi(Y_t)$ as follows:
    \begin{equation*}
        \norm{A_N(Y_t,t)}_2 \leq \tilde{C}_{L_A}\norm{\zeta(Y_t, h(X_0)e^{-\alpha t}) - Y_t}_2 \leq \frac{\tilde{C}_{L_A} \norm{h(X_0)}_2}{\kappa} e^{-\alpha t} \leq \frac{\tilde{C}_{L_A}M_h}{\kappa}e^{-\alpha t}.
    \end{equation*}
    Hence, we complete the proof by setting $C_{b_N} := \frac{\tilde{C}_{L_b}M_h}{\kappa}, C_{A_N} := \frac{\tilde{C}_{L_A}M_h}{\kappa}$ and noting that $\nabla \pi(x) = \Pi(\pi(x)) \nabla \pi(x)$ for $ \pi(x) \in \text{int}(\Sigma), x \in \hat{U}_\delta(\Sigma)$, which implies $A_N(Y_t,t) = \Pi(Y_t) A_N(Y_t,t)$.
\end{proof} 
\clearpage

\begin{lemma_ap}[Upper bound of $\KL^\Sigma(\tilde{\rho}_t  || \rho_\Sigma)$] \ 
    \label{lem:Upper bound of KL_Sigma_mixed}
    Assume that $\rho_\Sigma$ satisfies the LSI condition with constant $\lambda_{LSI}$. Let $X_t$ be the stochastic process following mixed-constrained OLLA (\ref{eqn:mixed_OLLA_premitive_form_app}) and $\tilde{\rho}_t$ be the law of $Y_t := \pi(X_t)$ after $t \geq t_{\text{cut}}, t_{\text{cut}}:= \max \mbra{\frac{1}{\alpha} \ln \sbra{\frac{M_g +\epsilon}{\epsilon}}, \frac{1}{\alpha} \ln \sbra{\frac{M_h}{\delta}}, \frac{1}{\alpha} \ln (\tilde{C_5})}$. Suppose 
    \begin{itemize}[leftmargin=*]
        \item (Regularity of $\Sigma_p$) \quad  $\Sigma_p := \pi(\mbra{x \in \bbR^d \mid h(x) = p, g(x) \leq 0}) \subset \text{int}(\Sigma)$ for $ 0 < \norm{p}_2 \leq \delta$.
        \item (Regularity of $\partial \Sigma_p$) \quad The boundary velocity $v_p^b$ of $\partial \Sigma_p$ satisfies $\sup_{x \in \partial \Sigma_p} \norm{v_p^b}_2 \leq V \norm{p}_2^\beta$ for some $V>0, \beta >0$. Also, assume $M_\Sigma := \sup_{\norm{p}_2 < \delta} \sigma_{\partial \Sigma_p} (\partial \Sigma_p) < \infty$.
        \item (Bound on $\rho_t, \rho_\Sigma$) \quad $G_1 := \sup_{t \geq 0, x \in \Sigma} \tilde{\rho}_t <\infty$ and $0 < G_2 \leq \rho_\Sigma \leq G_3$ for $x \in \Sigma$.
    \end{itemize}
    Then, for $\alpha \neq 2\lambda_\text{LSI}$, the following non-asymptotic convergence rate of $\KL^\Sigma(\tilde{\rho}_t  || \rho_\Sigma)$ can be obtained as follows
    \begin{align*}
        \KL^\Sigma (\tilde{\rho}_t ||&\rho_\Sigma) \leq \exp \sbra{-2\lambda_\text{LSI} (t-t_{\text{cut}}) - \frac{2\lambda_\text{LSI} \tilde{C}_5}{\alpha}(e^{-\alpha t} - e^{-\alpha t_{\text{cut}}})} \times \\
        &[\KL^\Sigma (\tilde{\rho}_{t_{\text{cut}}} || \rho_\Sigma) + \int_{t_{\text{cut}}}^t \exp \sbra{2\lambda_\text{LSI}(s-t_{\text{cut}}) + \frac{2\lambda_\text{LSI} \tilde{C}_5}{\alpha} (e^{-\alpha s} - e^{-\alpha t_{\text{cut}}})} \times \\
        &\lbra{\sbra{\tilde{C}_6 + \alpha G_4 G_5 M_h }e^{-\alpha s} +G_4 V M_h^\beta e^{-\alpha \beta s}} ds].
    \end{align*}
    In particular, if $\alpha > 2 \lambda_\text{LSI}$,  $\beta \geq 1$, the inequality becomes 
    \begin{equation*}
        \KL^\Sigma (\tilde{\rho}_t ||\rho_\Sigma) \leq \exp \sbra{-2\lambda_\text{LSI} (t-t_{\text{cut}}) - \frac{2\lambda_\text{LSI} \tilde{C}_5}{\alpha}(e^{-\alpha t} - e^{-\alpha t_{\text{cut}}})} [\KL^\Sigma (\tilde{\rho}_{t_{\text{cut}}} || \rho_\Sigma) + \tilde{C}_7 + \tilde{C}_8] 
    \end{equation*}
    for some constants $\tilde{C}_5=\calO(1+\tilde{C}_{A_N}+\tilde{C}_{A_N}^2), G_4, G_5, G_6, \tilde{C}_6, \tilde{C}_7 > 0$, and $\tilde{C_7}:= (\tilde{C}_6 + \alpha G_4 G_5 M_h) \frac{e^{-\alpha t_{\text{cut}}}}{\alpha - 2\lambda_\text{LSI}}$ and $\tilde{C_8}:= (G_6 V M_h^\beta) \frac{e^{-\alpha \beta t_{\text{cut}}}}{\alpha \beta - 2\lambda_\text{LSI}}$.
\end{lemma_ap}
\begin{proof} 
    First, \cref{cor:SDE representation of projected process from mixed constrained OLLA}, and the choice of $\nabla f = -\nabla \ln \rho_\Sigma$ gives the following SDE of the projected process $Y_t$ of $X_t$ for $t\geq t_{\text{cut}}$:
    \begin{equation*}
        dY_t = \lbra{-\Pi(Y_t)\nabla f(Y_t)+b_N(Y_t, t)}dt +\sqrt{2}\Pi(Y_t)(I+A_N(Y_t,t)) \circ dW_t 
    \end{equation*}
    where $b_N(x,t):= \nabla \pi(\zeta(x, h(X_0) e^{-\alpha t})) b(\zeta(x, h(X_0) e^{-\alpha t})) - \nabla \pi(x) b(x)$, $A_N(x,t) := \nabla \pi(\zeta(x, h(X_0)e^{-\alpha t}))\Pi(\zeta(x, h(X_0)e^{-\alpha t})) - \nabla \pi(x) \Pi(x)$ for $x \in \Sigma$, conditionally on $X_0$.
    
    Hence, from \cref{cor:SDE representation of projected process from mixed constrained OLLA}, its associated Fokker-Planck equation can be written as follows:
    \begin{equation*}
        \partial_t \tilde{\rho}_t = -\divergence_\Sigma (\tilde{\rho}_t \sbra{\nabla_\Sigma \ln \rho_\Sigma + b_N}) + \sum_{k=1}^d \divergence_\Sigma \sbra{ \divergence_\Sigma (\tilde{\rho}_t (f_k + \delta_k)) (f_k +\delta_k)}.
    \end{equation*}
    
    Defining $\Sigma_t := \pi(\mbra{x \in \bbR^d \mid h(x) = h(X_0)e^{-\alpha t}, g(x) \leq 0})$ conditionally on $X_0$,  we observe
    \begin{align*}
        \partial_t \KL^\Sigma(\tilde{\rho}_t ||\rho_\Sigma) &= \partial_t \int_{\Sigma_t} \tilde{\rho}_t \ln \sbra{\frac{\tilde{\rho}_t}{\rho_\Sigma}} d\sigma_\Sigma \\
        &= \underbrace{\int_{\Sigma_t} \partial_t \tilde{\rho}_t \ln \sbra{\frac{\tilde{\rho}_t}{\rho_\Sigma}} d\sigma_\Sigma}_{\text{Term (1)}} + \underbrace{\int_{\partial \Sigma_t} \tilde{\rho}_t\lbra{ \ln \sbra{\frac{\tilde{\rho}_t}{\rho_\Sigma}} - 1} \inner{v_t^b, n_t} d\sigma_{\partial \Sigma_t} }_{\text{Term (2)}}+ \underbrace{\partial_t \int_{\Sigma_t} \tilde{\rho}_t d\sigma_\Sigma}_{=0} 
    \end{align*}
    where the last equality holds from the Leibniz integral rule with $v_t^b$ being the velocity vector of the boundary of $\Sigma_t$ and $n_t$ being the outward unit normal vector of $\partial\Sigma_t$. Therefore, the expression of $\partial_t \tilde{\rho}_t$ implies that Term (1) becomes
    \begin{align*}
        \text{Term (1)} &= \underbrace{\int_{\Sigma_t} \inner{\lbra{\tilde{\rho}_t (\nabla_\Sigma \ln \rho_\Sigma + b_N) -\sum_{k=1}^d \divergence_\Sigma (\tilde{\rho}_t (f_k + \delta_k)) (f_k+\delta_k)}, \nabla_\Sigma \ln \sbra{\frac{\tilde{\rho}_t}{\rho_\Sigma}}} d\sigma_\Sigma}_{\text{Term (1-1)}} \\
        &- \underbrace{\int_{\partial \Sigma_t} \inner{\lbra{\tilde{\rho}_t (\nabla_\Sigma \ln \rho_\Sigma + b_N) -\sum_{k=1}^d \divergence_\Sigma (\tilde{\rho}_t (f_k + \delta_k)) (f_k+\delta_k)} \ln \sbra{\frac{\tilde{\rho}_t}{\rho_\Sigma}}, n_t} d\sigma_{\partial \Sigma_t}}_{\text{Term (1-2)}}
    \end{align*}
    by integration by parts, where $f_k := \Pi e_k, \delta_k := \Pi A_N \delta_k$. Now, we observe that Term (1-1) can be bounded as 
    \begin{equation*}
        \text{Term (1-1)} \leq -(1-\tilde{C}_5 e^{-\alpha t}) I^{\Sigma} (\tilde{\rho}_t || \rho_\Sigma) + \tilde{C}_6 e^{-\alpha t} 
    \end{equation*}
    following the same proof of \cref{lem:Upper bound of KL_Sigma} with different constants $\tilde{C}_5 = \calO(1 + \tilde{C}_{A_N} + \tilde{C}_{A_N}^2), \tilde{C}_6 > 0$ (note that we ignored the integrand at $\partial \Sigma$ which is measure zero with respect to $d \sigma_\Sigma$). 
    
    For the analysis of Term (1-2), we first observe the following fact: 
    \begin{align*}
        \int_{\partial \Sigma_t} \abs{\tilde{\rho}_t \ln \sbra{\frac{\tilde{\rho}_t}{\rho_\Sigma}}} d\sigma_{\partial \Sigma_t} & \leq G_3 \max \mbra{\frac{1}{e}, \abs{\frac{G_1}{G_2} \ln \sbra{\frac{G_1}{G_2}}}} \sigma_{\partial \Sigma_t} (\partial \Sigma_t) \\
        & \leq G_3 M_\Sigma \max \mbra{\frac{1}{e}, \abs{\frac{G_1}{G_2} \ln \sbra{\frac{G_1}{G_2}}}}:= G_4
    \end{align*}
    from the assumptions of $G_1 := \sup_{t\geq0 , x \in \Sigma}\tilde{\rho}_t < \infty$, $0 < G_2 \leq  \rho_\Sigma \leq G_3$, and the regularity of $\partial \Sigma_t$ such that $ \sup_{t \geq t_{\text{cut}} }\sigma_{\partial \Sigma_t}(\partial \Sigma_t) \leq M_\Sigma < \infty$.

    Next, from \cref{cor:SDE representation of projected process from mixed constrained OLLA},  we note that the following holds conditionally on $X_0$:
    \begin{equation*}
        f_k(x) + \delta_k(x) = \Pi(x)(I + A_N(t, x)) e_k = \nabla \pi (\zeta(x, h(X_0)e^{-\alpha t})) \Pi(\zeta (x, h(X_0) e^{-\alpha t}))e_k.
    \end{equation*}
   Because $\Pi(\zeta(x, h(X_0)e^{-\alpha t}))e_k $ is a tangent vector on $\mbra{x \in \bbR^d \mid h(x) = h(X_0)e^{-\alpha t}, g(x) \leq 0}$, $f_k(x) + \delta_k(x) = \nabla \pi(\zeta(x, h(X_0)e^{-\alpha t})) \Pi(\zeta(x, h(X_0) e^{-\alpha t}))e_k$ becomes a tangent vector of $\Sigma_t$. Similarly, it also becomes a tangent vector of $\partial \Sigma_t$ on the boundary because $\Pi$ is the orthogonal projector induced by $h$ and active $g$. Hence, $\inner{f_k+\delta_k, n_t} = 0$ holds, where $n_t$ is the outward unit normal vector of $\partial\Sigma_t$. 
   
   Therefore, Term (1-2) becomes
    \begin{align*}
        \abs{\text{Term (1-2)}} \overset{(1)}{\leq} \alpha G_5M_h e^{-\alpha t} \int_{\partial \Sigma_t} \abs{\tilde{\rho}_t \ln \sbra{\frac{\tilde{\rho}_t}{\rho_\Sigma}}} d\sigma_{\partial \Sigma_t} \leq \alpha G_4 G_5 M_h e^{-\alpha t},
    \end{align*}
    where (1) holds because \cref{cor:SDE representation of projected process from mixed constrained OLLA} gives
    \begin{equation*}
        \nabla_\Sigma \ln \rho_\Sigma(x) + b_N(x,t) = \nabla \pi(\zeta(x, h(X_0)e^{-\alpha t}))b(\zeta(t, h(X_0)e^{-\alpha t}))
    \end{equation*}
    with $b(x) := -\lbra{\Pi(x) \nabla f(x) + \alpha \nabla J(x)^T G^{-1}(x) J(x)}$ and, therefore,
    \begin{equation*}
        \abs{\inner{\nabla_\Sigma \ln \rho_\Sigma(x)  +  b_N(x, t), n_t(x)}} \leq \alpha G_5 \norm{J(\zeta(x, h(X_0)e^{-\alpha t}))}_2 \leq \alpha G_5 M_h e^{-\alpha t}
    \end{equation*}
    for $x \in \partial \Sigma_t $ with $G_5 :=  \sup_{x \in \Sigma, \norm{p}_2 < \delta} \norm{\nabla \pi(\zeta(x, p)) \nabla J(\zeta(x,p))^T G^{-1}(\zeta(x,p))}_2 <\infty$.

    Also, similarly, Term (2) is bounded as follows:
    \begin{align*}
        \abs{\text{Term (2)}} \leq \sup_{x \in \partial\Sigma_t}\norm{v_t^b}_2 \int_{\partial \Sigma_t} \sbra{\abs{\tilde{\rho}_t \ln \sbra{\frac{\tilde{\rho}_t}{\rho_\Sigma}}} + \abs{\tilde{\rho}_t}} d\sigma_{\partial \Sigma_t} \leq G_6 V M_h^\beta e^{-\alpha \beta t}
    \end{align*}
    with $G_6 := G_4 + G_1 M_\Sigma$.
    Therefore, combining the results with the LSI condition gives the following inequality:
    \begin{align*}
        \partial_t \KL^\Sigma (\tilde{\rho}_t ||\rho_\Sigma) &\leq -2\lambda_\text{LSI}(1- \tilde{C}_5 e^{-\alpha t})\KL^\Sigma (\tilde{\rho}_t || \rho_\Sigma) +\sbra{\tilde{C}_6 +\alpha G_4 G_5 M_h}e^{-\alpha t} + G_6 V M_h^\beta e^{-\alpha \beta t}
    \end{align*}
    where the last inequality comes from the LSI condition. Finally, applying the Gr\"onwall-type inequality recovers the following inequality:
    \begin{align*}
        \KL^\Sigma (\tilde{\rho}_t ||\rho_\Sigma) &\leq \exp \sbra{-2\lambda_\text{LSI} (t-t_{\text{cut}}) - \frac{2\lambda_\text{LSI} \tilde{C}_5}{\alpha}(e^{-\alpha t} - e^{-\alpha t_{\text{cut}}})} [\KL^\Sigma (\tilde{\rho}_{t_{\text{cut}}} || \rho_\Sigma) \\
        &+ \int_{t_{\text{cut}}}^t \exp \sbra{2\lambda_\text{LSI}(s-t_{\text{cut}}) + \frac{2\lambda_\text{LSI} \tilde{C}_5}{\alpha} (e^{-\alpha s} - e^{-\alpha t_{\text{cut}}})} \times
        \\ &\lbra{\sbra{\tilde{C}_6 + \alpha G_4 G_5 M_h }e^{-\alpha s} +G_6 V M_h^\beta e^{-\alpha \beta s}} ds].
    \end{align*}
    Also, similarly in the last argument of \cref{lem:Upper bound of KL_Sigma}, we observe that if $\alpha > 2\lambda_\text{LSI}$ and $\beta \geq 1$
    \begin{align*}
        \int_{t_{\text{cut}}}^\infty \mkern-10mu\exp \sbra{2 \lambda_\text{LSI} (s-t_{\text{cut}}) + \frac{2\lambda_\text{LSI} \tilde{C}_5}{\alpha} (e^{-\alpha s} - e^{-\alpha t_{\text{cut}}})}e^{-\alpha s} ds &\leq \mkern-5mu \int_{t_{\text{cut}}}^\infty \exp \sbra{2\lambda_\text{LSI} (s-t_{\text{cut}})} e^{-\alpha s} ds \\
        & \leq \frac{e^{-\alpha t_{\text{cut}}}}{\alpha -2 \lambda_\text{LSI}} <\infty
    \end{align*}
    and
    \begin{align*}
        \int_{t_{\text{cut}}}^\infty \exp \sbra{2 \lambda_\text{LSI} (s-t_{\text{cut}}) + \frac{2\lambda_\text{LSI} \tilde{C}_5}{\alpha} (e^{-\alpha s} - e^{-\alpha t_{\text{cut}}})}e^{-\alpha \beta s} ds &\leq \frac{e^{-\alpha \beta t_{\text{cut}}}}{\alpha \beta - 2\lambda_\text{LSI}} <\infty.
    \end{align*}
    Therefore, there exists $\tilde{C_7}, \tilde{C_8} < \infty$ such that 
    \begin{align*}
        \KL^\Sigma (\tilde{\rho}_t ||\rho_\Sigma) \leq \exp \sbra{-2\lambda_\text{LSI} (t-t_{\text{cut}}) - \frac{2\lambda_\text{LSI} \tilde{C}_5}{\alpha}(e^{-\alpha t} - e^{-\alpha t_{\text{cut}}})} [\KL^\Sigma (\tilde{\rho}_{t_{\text{cut}}} || \rho_\Sigma) + \tilde{C}_7 + \tilde{C}_8],
    \end{align*}
    where $\tilde{C_7}:= (\tilde{C}_6 + \alpha G_4 G_5 M_h) \frac{e^{-\alpha t_{\text{cut}}}}{\alpha - 2\lambda_\text{LSI}}$ and $\tilde{C_8}:= (G_6 V M_h^\beta) \frac{e^{-\alpha \beta t_{\text{cut}}}}{\alpha \beta - 2\lambda_\text{LSI}}$.
\end{proof}

\clearpage

\begin{theorem*}[Convergence result for mixed-constrained OLLA]
    Assume that $\rho_\Sigma$ satisfies the LSI condition with constant $\lambda_\text{LSI}$. Let $X_t$ be the stochastic process following mixed-constrained OLLA (\ref{eqn:mixed_OLLA_closed_form}) and $\tilde{\rho}_t$ be the law of $Y_t := \pi(X_t)$ after $t \geq t_{\text{cut}}, t_{\text{cut}}:= \max \mbra{\frac{1}{\alpha} \ln \sbra{\frac{M_g +\epsilon}{\epsilon}}, \frac{1}{\alpha} \ln \sbra{\frac{M_h}{\delta}}, \frac{1}{\alpha} \ln (\tilde{C_5})}$. Suppose 
    \begin{itemize}[leftmargin=*]
        \item (Regularity of $\Sigma_p$) \quad  $\Sigma_p := \pi(\mbra{x \in \bbR^d \mid h(x) = p, g(x) \leq 0}) \subset \text{int}(\Sigma)$ for $ 0 < \norm{p}_2 \leq \delta$.
        \item (Regularity of $\partial \Sigma_p$) \quad The boundary velocity $v_p^b$ of $\partial \Sigma_p$ satisfies $\sup_{x \in \partial \Sigma_p} \norm{v_p^b}_2 \leq V \norm{p}_2^\beta$ for some $V>0, \beta >0$. Also, assume $M_\Sigma := \sup_{\norm{p}_2 < \delta} \sigma_{\partial \Sigma_p} (\partial \Sigma_p) < \infty$.
        \item (Bound on $\rho_t, \rho_\Sigma$) \quad $G_1 := \sup_{t \geq 0, x \in \Sigma} \tilde{\rho}_t <\infty$ and $0 < G_2 \leq \rho_\Sigma \leq G_3$ for $x \in \Sigma$.
    \end{itemize}
    Then, for $\alpha \neq 2\lambda_\text{LSI}$ the following non-asymptotic convergence rate of $W_2(\rho_t, \rho_\Sigma)$ can be obtained as follows
    \begin{align*}
        W_2(\rho_t, \rho_\Sigma) \leq \frac{M_h}{\kappa}e^{-\alpha t} + \sqrt{\frac{2}{\lambda_\text{LSI}} \KL^\Sigma (\tilde{\rho}_t || \rho_\Sigma)} 
    \end{align*}
    where 
    \begin{align*}
        &\KL^\Sigma (\tilde{\rho}_t ||\rho_\Sigma) \leq \exp \sbra{-2\lambda_\text{LSI} (t-t_{\text{cut}}) - \frac{2\lambda_\text{LSI} \tilde{C}_5}{\alpha}(e^{-\alpha t} - e^{-\alpha t_{\text{cut}}})} \times \\
        &[\KL^\Sigma (\tilde{\rho}_{t_{\text{cut}}} || \rho_\Sigma) + \int_{t_{\text{cut}}}^t \exp \sbra{2\lambda_\text{LSI}(s-t_{\text{cut}}) + \frac{2\lambda_\text{LSI} \tilde{C}_5}{\alpha} (e^{-\alpha s} - e^{-\alpha t_{\text{cut}}})} \times \\
        &\lbra{\sbra{\tilde{C}_6 + \alpha G_4 G_5 M_h }e^{-\alpha s} +G_6 V M_h^\beta e^{-\alpha \beta s}} ds] 
    \end{align*}
    In particular, if $\alpha > 2 \lambda_\text{LSI}$ and $\beta \geq 1$, the previous bound simplifies to 
    \begin{equation*}
        \KL^\Sigma (\tilde{\rho}_t ||\rho_\Sigma) \leq \exp \sbra{-2\lambda_\text{LSI} (t-t_{\text{cut}}) - \frac{2\lambda_\text{LSI} \tilde{C}_5}{\alpha}(e^{-\alpha t} - e^{-\alpha t_{\text{cut}}})} [\KL^\Sigma (\tilde{\rho}_{t_{\text{cut}}} || \rho_\Sigma) + \tilde{C}_7 + \tilde{C}_8] 
    \end{equation*}
    for some constants $\tilde{C}_5=\calO \sbra{1+ \frac{\tilde{C}_{L_A}M_h}{\kappa} +\sbra{\frac{\tilde{C}_{L_A}M_h}{\kappa}}^2}, G_4, G_5, G_6, \tilde{C}_6, \tilde{C}_7 > 0$, and $\tilde{C_7}:= (\tilde{C}_6 + \alpha G_4 G_5 M_h) \frac{e^{-\alpha t_{\text{cut}}}}{\alpha - 2\lambda_\text{LSI}}$ and $\tilde{C_8}:= (G_6 V M_h^\beta) \frac{e^{-\alpha \beta t_{\text{cut}}}}{\alpha \beta - 2\lambda_\text{LSI}}$, with $\tilde{C}_{L_A}$ being the Lipschitz constant of $\nabla \pi(x) \Pi(x) $ on $\hat{U}_\delta(\Sigma)$.
\end{theorem*}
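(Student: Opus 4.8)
The plan is to assemble the result from the two complementary estimates already in place---the exponential ``landing'' contraction that pulls $X_t$ toward $\Sigma$, and the intrinsic exponential mixing of the projected law $\tilde{\rho}_t$ toward $\rho_\Sigma$---in exactly the manner of \cref{thm:Convergence result for equality-constrained OLLA}. First I would split $W_2(\rho_t,\rho_\Sigma)$ by the triangle inequality into $W_2(\rho_t,\tilde{\rho}_t) + W_2(\tilde{\rho}_t,\rho_\Sigma)$, where $\tilde{\rho}_t$ is the law of the projected process $Y_t = \pi(X_t)$, which is well-defined for $t \geq t_{\text{cut}}$ by \cref{thm:recoverable tubular neighborhood with boundary} together with \cref{lem:Exponential decay of constraint functions}.

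For the second term, since $\Sigma$ carries the metric induced from $\bbR^d$, the ambient Euclidean distance is dominated by the intrinsic geodesic distance, so $W_2(\tilde{\rho}_t,\rho_\Sigma) \leq W_2^\Sigma(\tilde{\rho}_t,\rho_\Sigma)$. Because $\Sigma$ is compact and connected, the Hopf--Rinow theorem gives geodesic completeness, so \cref{lem:LSI implies Talagrand inequality} applies and yields $W_2^\Sigma(\tilde{\rho}_t,\rho_\Sigma) \leq \sqrt{\tfrac{2}{\lambda_{\text{LSI}}}\,\KL^\Sigma(\tilde{\rho}_t || \rho_\Sigma)}$. The first term is handled directly by \cref{lem:bound of W_2_mixed}, giving $W_2(\rho_t,\tilde{\rho}_t) \leq \tfrac{M_h}{\kappa} e^{-\alpha t}$, which rests on the regularity estimate \cref{lem:regularity lemma with boundary} combined with exponential constraint decay. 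Substituting the Grönwall-type bound on $\KL^\Sigma(\tilde{\rho}_t || \rho_\Sigma)$ from \cref{lem:Upper bound of KL_Sigma_mixed} completes the assembly, with the hypotheses $\alpha > 2\lambda_{\text{LSI}}$ and $\beta \geq 1$ guaranteeing integrability of the forcing terms that collapse the two time integrals into the closed-form constants $\tilde{C}_7$ and $\tilde{C}_8$.

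The substance of the theorem is therefore carried almost entirely by \cref{lem:Upper bound of KL_Sigma_mixed}, and that is where I expect the genuine difficulty to lie. Unlike the equality-only setting, $\Sigma$ now has a boundary $\partial\Sigma$ from the active inequality constraints, so differentiating $\KL^\Sigma(\tilde{\rho}_t || \rho_\Sigma)$ along the evolving domain $\Sigma_t$ generates extra boundary integrals through the Leibniz rule, weighted by the boundary velocity $v_t^b$. Assumption~\ref{asm:M1} (that $\Sigma_t \subset \text{int}(\Sigma)$) is precisely what prevents $Y_t$ from ``sticking'' at $\partial\Sigma$ and keeps the recovery map $C^1$, while Assumption~\ref{asm:M2} bounds $\norm{v_t^b}_2 \leq V\norm{h(X_0)e^{-\alpha t}}_2^\beta$ so the velocity-driven term decays like $e^{-\alpha\beta t}$. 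The key structural observation is that the perturbed tangential frame $f_k + \delta_k$ remains tangent to $\Sigma_t$, and hence to $\partial\Sigma_t$, so its inner product with the outward normal $n_t$ vanishes and the dominant boundary contributions are eliminated; the surviving terms, controlled by the uniform density bounds $G_1, G_2, G_3$ of Assumption~\ref{asm:M3}, decay exponentially and feed cleanly into the Grönwall argument. Once those boundary obstructions are dispatched inside the lemma, the theorem itself reduces to the short triangle-inequality combination sketched above.
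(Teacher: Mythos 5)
Your proposal is correct and follows essentially the same route as the paper: the triangle-inequality split $W_2(\rho_t,\rho_\Sigma)\le W_2(\rho_t,\tilde{\rho}_t)+W_2^\Sigma(\tilde{\rho}_t,\rho_\Sigma)$, the Talagrand inequality obtained from the LSI via \cref{lem:LSI implies Talagrand inequality}, the landing bound of \cref{lem:bound of W_2_mixed}, and the Gr\"onwall-type $\KL$ bound of \cref{lem:Upper bound of KL_Sigma_mixed}, with $\alpha>2\lambda_{\text{LSI}}$ and $\beta\geq 1$ collapsing the time integrals into $\tilde{C}_7$ and $\tilde{C}_8$. Your description of where the substance lies also matches the paper's proof of that lemma: the Leibniz-rule boundary terms on the evolving domain $\Sigma_t$, the vanishing of $\inner{f_k+\delta_k,\, n_t}$ by tangency to $\partial\Sigma_t$, and the exponentially decaying remainders controlled through Assumptions \ref{asm:M1}--\ref{asm:M3}.
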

\begin{proof}
    We note that $\tilde{\rho}_t(x) = 0$ on $\partial \Sigma$ for $t \geq t_{\text{cut}}$ holds from the regularity of the $\Sigma_p$ assumption. Therefore, by the same approach in \cref{thm:Convergence result for equality-constrained OLLA} and \cref{lem:LSI implies Talagrand inequality}, it holds that
    \begin{equation*}
        W_2^\Sigma (\tilde{\rho}_t, \rho_\Sigma) \leq \sqrt{\frac{2}{\lambda_\text{LSI}} \KL^\Sigma (\tilde{\rho}_t || \rho_\Sigma)} \ \Rightarrow \ W_2 (\rho_t, \rho_\Sigma) \leq W_2(\rho_t, \tilde{\rho}_t) + \sqrt{\frac{2}{\lambda_\text{LSI}} \KL^\Sigma (\tilde{\rho}_t || \rho_\Sigma)}.
    \end{equation*}
    Therefore, we conclude the proof by combining the results of \cref{lem:bound of W_2_mixed} and \cref{lem:Upper bound of KL_Sigma_mixed}.
\end{proof}


\clearpage
\section{Experiment Settings and Supplementary Results}
\textbf{Settings.}\quad The first two experiments were executed on a desktop with an AMD Ryzen 9 7900X CPU (12 cores) with 32 GB RAM. Runs were implemented in WSL2 (Ubuntu) environment (CPU-only), using the Python and the PyTorch \citep{paszke2019pytorch} framework.

\subsection{Experiment Settings and Supplementary Results for Synthetic 2D Examples}
\label{app:Experiment setup for synthetic 2D examples}

\textbf{Experiment Settings.}\quad  In this experiment, we compare four samplers (OLLA, OLLA-H, CLangevin, CHMC) on the following synthetic 2D examples:

\begin{enumerate}
    \item \textbf{(Star)} \quad a star-shaped equality manifold with uniform density:
    \begin{equation*}
        f(x) = 0, \ h(x) = \sqrt{x_1^2 + x_2^2} - (1.5 + 0.3 \cos(5\theta)), \  \theta = \arctan2 (x_2,x_1).
    \end{equation*}
    \item \textbf{(Two Lobes)} \quad a two-lobe inequality manifold (from \citep{zhang2024functional}) with uniform density:
    \begin{equation*}
        f(x) = 0, \ g(x) = -\ln q(x) - 2, \ q(x) = \frac{e^{-2(x_1-3)^2+ e^{-2(x_1+3)^2}}}{e^{2(\norm{x}_2 - 3)^2}}.
    \end{equation*}
    \item \textbf{(Quadratic Poly)} \quad a quadratic curve defined by mixed polynomial equality and inequality under a standard Gaussian target:
    \begin{equation*}
        f(x) = \frac{1}{2}\norm{x}_2^2, \ h(x) = x_1^4x_2^2 + x_1^2 + x_2 -1, \ g(x) = x_1^3-x_2^3 - 1.
    \end{equation*}
    \item \textbf{(Mixture Gaussian)} \quad a nine-Gaussian mixture restricted by a seven-lobe manifold:
    \begin{align*}
        f(x) = -\ln \sbra{\sum_{i=1}^9 \exp \sbra{-5 \norm{x-c_i}_2^2}}, \ h(x) = \sqrt{x_1^2 + x_2^2} - (3 + \cos (7 \theta)).
    \end{align*}
    \begin{equation*}
        g(x) = (x_1-2)^2 - 5x_1x_2^3 + 0.5 x_2^5 -40, \ \theta = \arctan2 (x_2,x_1).
    \end{equation*}
    with $\mbra{c_i}_{i=1}^9 = \mbra{-2, 0, 2}^2$.
\end{enumerate}
where $x = [x_1, x_2]^T \in \bbR^2$. For each 2D example, we run $200$ independent chains for $K = 5000$ steps each. From each chain, we retain only the state at step $K$, yielding 200 samples per sampler.

\begin{wrapfigure}{r}{0.5\textwidth}
\vspace{-12pt}
\captionof{table}{Hyperparameter settings for 2D synthetic examples ($\Delta t = 5 \times 10^{-4}$)}
\label{tab:hyperparams_2d}
\centering
\begin{tabular}{l l}
\toprule
\textbf{Method} & \textbf{Hyperparameters} \\
\midrule
OLLA      & $\alpha = 200,\;\epsilon = 1$ \\
OLLA–H    & $\alpha = 200,\;\epsilon = 1,\;N = 5$ \\
CLangevin & $L = 3,\;\tau = 10^{-4}, \lambda = \mbra{1, 0.1}$ \\
CHMC      & $\gamma = 1,\;L = 3,\;\tau = 10^{-4} ,\lambda = 0$ \\
CGHMC     & $\gamma = 1,\;L = 3,\;\tau = 10^{-4} , \lambda = 0$ \\
\bottomrule
\end{tabular}
\vspace{-12pt}
\end{wrapfigure}

To provide a fixed target distribution for distance calculation, we generate $200$ samples using CGHMC. This reference is held constant across all comparisons. For each sampler, we compute $W_2^2$, energy distance as well as the mean constraint violations $\bbE[ \abs{h(x)}]$ and $\bbE[\max g(x)^+]$, over the 200 samples. 

The hyperparameter setup is provided in \cref{tab:hyperparams_2d}. To mitigate ill-conditioning in the Newton solver of CLangevin, we add Tikhonov regularization with Tikhonov matrix $\Gamma = \sqrt{\lambda} I$, $\lambda = 1.0$ for the Mixture Gaussian example, and $\lambda = 0.1$ for the other three. For the CLangevin, CHMC, CGHMC, $X_0$ is initialized exactly on $\Sigma$ to ensure the stability of algorithms while OLLA and OLLA-H have noisy initialization $X_0 = Y_0 + \calN(0,I), Y_0 \in \Sigma$ and $X_t$ progressively approaches to $\Sigma$ by the landing mechanism. 

\begin{remark} \quad 
    The results shown in \cref{fig:scatter plots of 2D synthetic examples} and \cref{fig:Convergence diagnostics on the 9 Gaussian mixture on 7 Lobe manifold} were obtained under a different setup described above: a larger step size $\Delta t$ was used; all methods were initialized from the same initialization point $X_0 \in \Sigma$, rather than via random sampling near $\Sigma$; and the regularization parameter $\lambda$ in CLangevin was increased for improved numerical stability under large step size.
\end{remark}

\begin{figure}
    \centering
    \includegraphics[width=1\linewidth]{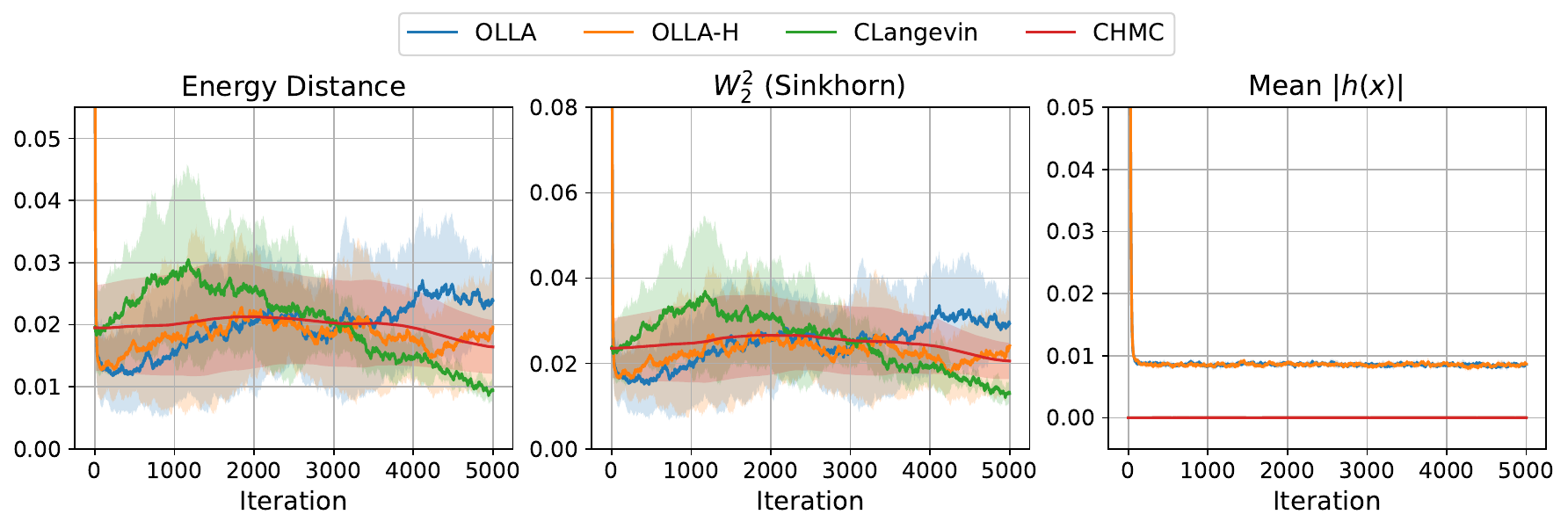}
    \caption{Convergence diagnostics on the Star example (equality-only case). (1) energy distance to CGHMC samples (left),  (2) $W_2^2$ distance to CGHMC samples (center), and  (3) mean of $\abs{h(x)}$ (right)}
    \label{fig:Convergence diagnostics on the Star example_app}
\end{figure}

\begin{figure}
    \centering
    \includegraphics[width=1\linewidth]{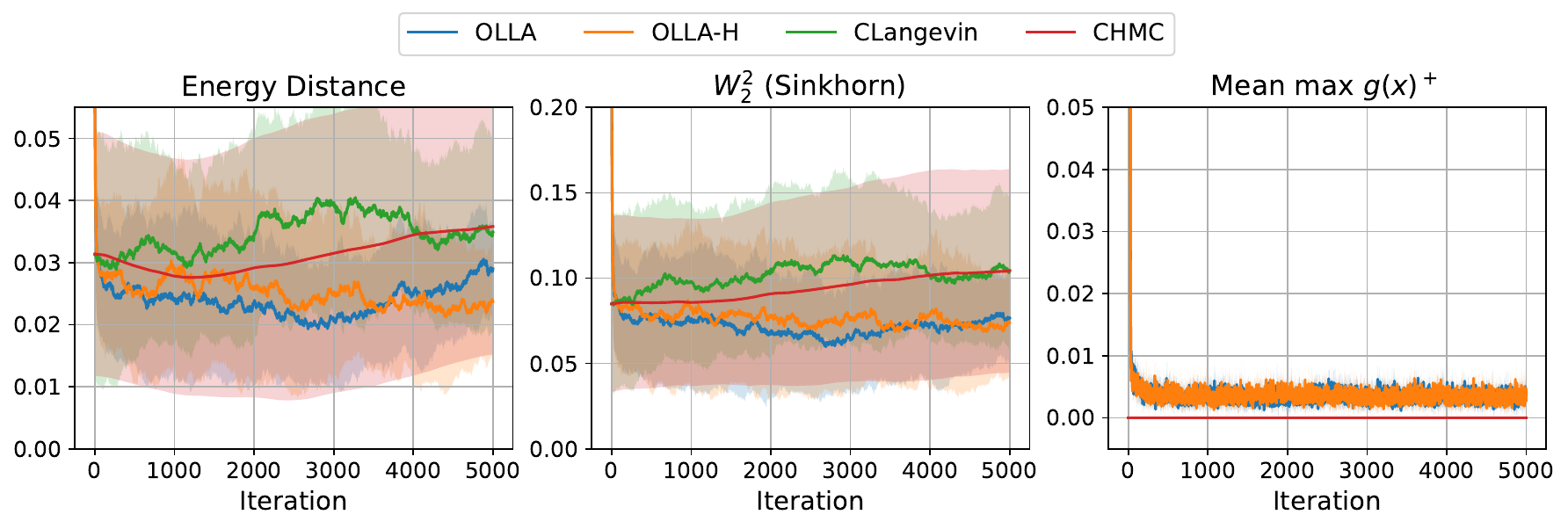}
    \caption{Convergence diagnostics on the Two Lobes example (inequality-only case). (1) energy distance to CGHMC samples (left), (2) $W_2^2$ distance to CGHMC samples (center), and (3) mean of $\max g(x)^+$ (right)}
    \label{fig:Convergence diagnostics on the Two Lobes example_app}
\end{figure}

\begin{figure}
    \centering
    \includegraphics[width=0.7\linewidth]{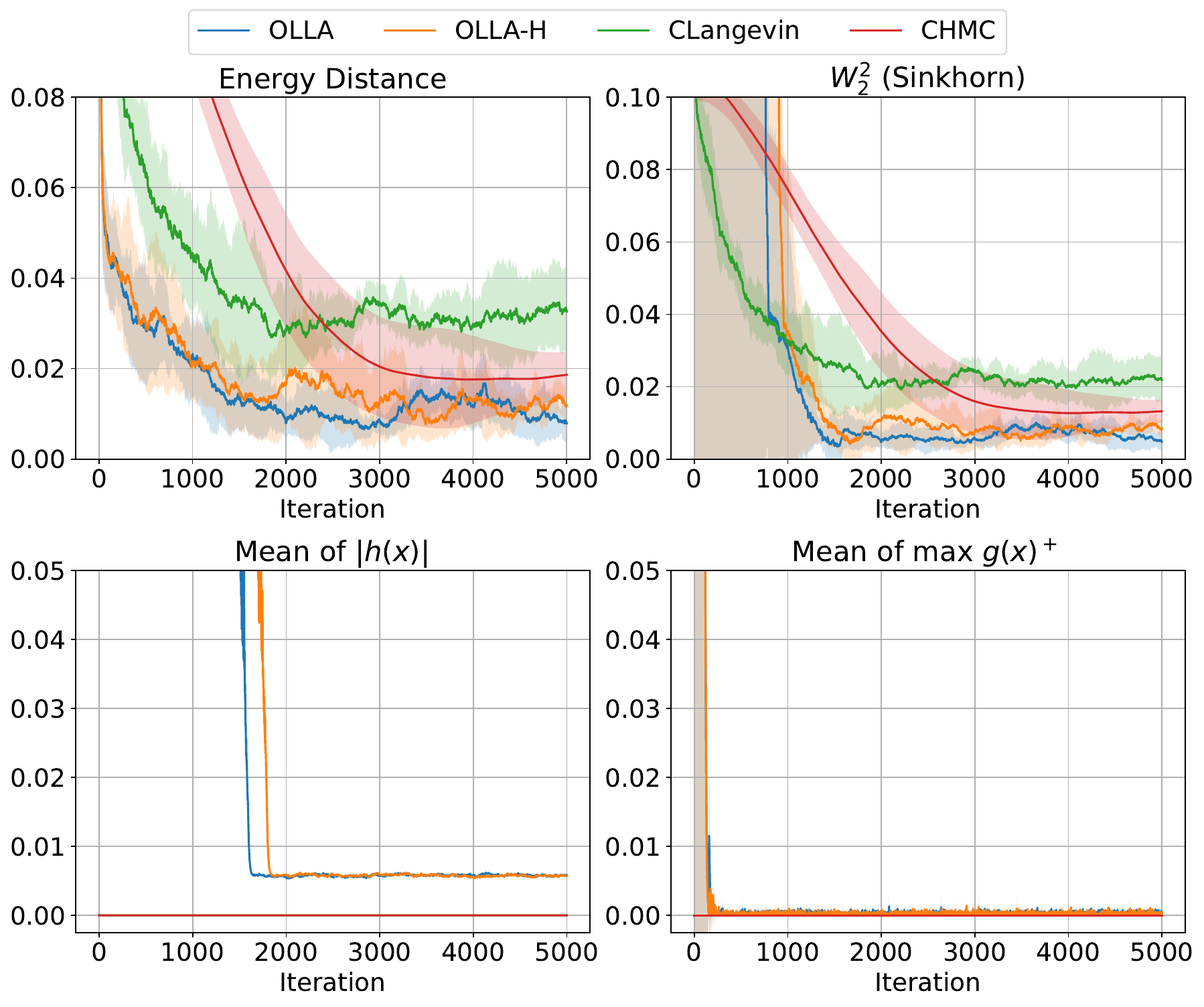}
    \caption{Convergence diagnostics on the Quadratic Poly example (mixed-case). (1) energy distance to CGHMC samples (top left) and (2) $W_2^2$ distance to CGHMC samples (top right). (3) mean of $|h(x)|$ (bottom left) and (4) mean of $\max g(x)^+$ (bottom right)}
    \label{fig:Convergence diagnostics on the Quadratic poly example_app}
\end{figure}
\clearpage

\begin{figure}
    \centering
    \includegraphics[width=0.7\linewidth]{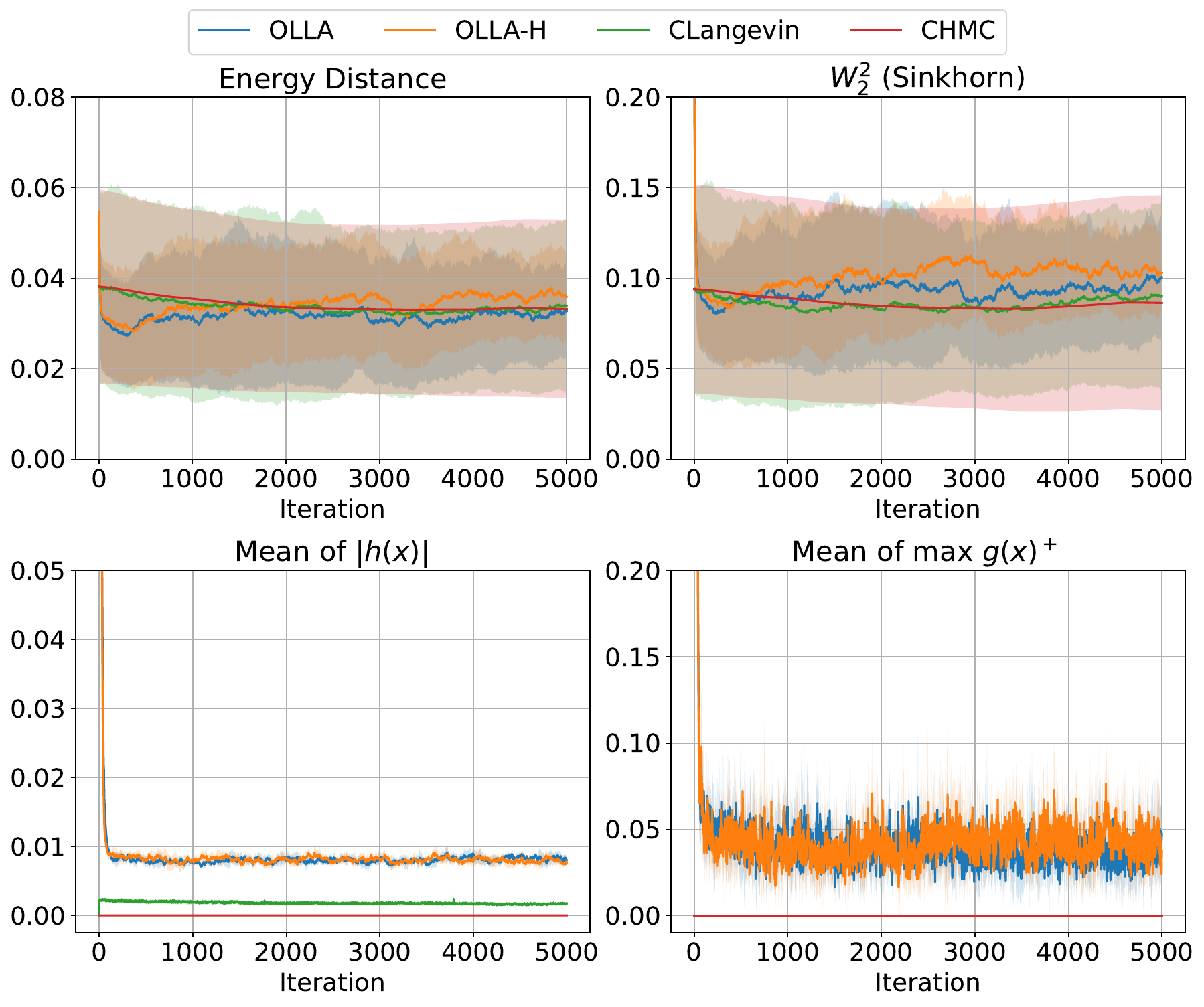}
    \caption{Convergence diagnostics on the Mixture Gaussian example (mixed-case). (1) energy distance to CGHMC samples (top left) and (2) $W_2^2$ distance to CGHMC samples (top right). (3) mean of $|h(x)|$ (bottom left) and (4) mean of $\max g(x)^+$ (bottom right)}
    \label{fig:Convergence diagnostics on the Mixture Gauss example_app}
\end{figure}

\textbf{Supplementary Results - Sampling Accuracy \& Constraint Violation.} \quad In addition to the Mixture Gaussian example (\cref{fig:Convergence diagnostics on the 9 Gaussian mixture on 7 Lobe manifold}), we evaluated OLLA and OLLA-H on three further 2D geometries to verify that the distributional accuracy observed in \cref{fig:Convergence diagnostics on the 9 Gaussian mixture on 7 Lobe manifold} of the main text generalize to other settings. Across all three additional examples, OLLA and OLLA-H closely match the convergence behavior of CLangevin and CHMC in both energy distance and $W_2^2$ metrics, while maintaining constraint violations at low levels without requiring explicit projection steps. These trends are illustrated in \cref{fig:Convergence diagnostics on the Star example_app} (Star), \cref{fig:Convergence diagnostics on the Two Lobes example_app} (Two Lobes), \cref{fig:Convergence diagnostics on the Quadratic poly example_app} (Quadratic Poly), and \cref{fig:Convergence diagnostics on the Mixture Gauss example_app} (Mixture Gaussian under the hyperparameter setup of \cref{tab:hyperparams_2d}). These additional experiments confirm that our landing-based sampler can provide relatively accurate, constraint-respecting samples across diverse manifold geometries.

\textbf{Supplementary Results - Effect of Hyperparameters $\alpha$ and $\epsilon$.} \quad In \cref{tab:effect-alpha_app}, we report how varying the landing rate $\alpha$ (with $\epsilon=1$) affects sampling performance on the Mixture Gaussian example. For both OLLA and OLLA--H, increasing $\alpha$ from $1$ to $500$ leads to consistent reductions in energy distance, $W_2^2$, and the average constraint violation $\bbE[|h(x)|]$. However, setting $\alpha$ too large causes sampling failures and numerical errors (e.g.\ $\alpha=5000$).

Similarly, \cref{tab:effect-epsilon_app} examines the effect of boundary repulsion rate $\epsilon$ (with $\alpha=100$). Across the full range of $\epsilon$ tested, sampling accuracy—as measured by both $W_2^2$ and energy distance—remains essentially invariant, exhibiting smaller variation than when $\alpha$ is changed. In contrast, inequality‐constraint enforcement steadily improves as $\epsilon$ increases: the average violation $\bbE[g(x)^+]$ declines monotonically, reflecting stronger repulsion. Only when $\epsilon$ becomes exceedingly large does one observe a degradation in equality‐constraint enforcement and occasional numerical instabilities, mirroring the breakdown seen at an extreme $\alpha$ value.

\begin{table}[t]
  \centering
  \caption{Effect of $\alpha$ on energy distance, $W_2^2$, $\bbE[|h(x)|]$, and $\bbE[g(x)^+]$ on the Mixture Gaussian example with $\epsilon = 1$ (top: OLLA, bottom: OLLA-H)}
  \label{tab:effect-alpha_app}
  \vspace{5pt}
  \begin{tabular}{ccccc}
    \toprule
    $\alpha$ & energy distance & $W_2^2$ & $\bbE[|h(x)|]$ & $\bbE[g(x)^+]$ \\
    \midrule
    1    & $0.121\text{\scriptsize$\pm$0.025}$ & $0.363\text{\scriptsize$\pm$0.064}$ & $0.682\text{\scriptsize$\pm$0.017}$ & $1.113\text{\scriptsize$\pm$0.351}$ \\
    10   & $0.066\text{\scriptsize$\pm$0.019}$ & $0.200\text{\scriptsize$\pm$0.035}$ & $0.130\text{\scriptsize$\pm$0.001}$ & $0.234\text{\scriptsize$\pm$0.032}$ \\
    100  & $0.053\text{\scriptsize$\pm$0.016}$ & $0.159\text{\scriptsize$\pm$0.032}$ & $0.017\text{\scriptsize$\pm$0.001}$ & $0.045\text{\scriptsize$\pm$0.007}$ \\
    200  & $0.040\text{\scriptsize$\pm$0.012}$ & $0.121\text{\scriptsize$\pm$0.019}$ & $0.008\text{\scriptsize$\pm$0.001}$ & $0.054\text{\scriptsize$\pm$0.009}$ \\
    500  & $0.033\text{\scriptsize$\pm$0.011}$ & $0.104\text{\scriptsize$\pm$0.020}$ & $0.004\text{\scriptsize$\pm$0.000}$ & $0.021\text{\scriptsize$\pm$0.018}$ \\
    700  & $0.044\text{\scriptsize$\pm$0.012}$ & $0.132\text{\scriptsize$\pm$0.019}$ & $0.003\text{\scriptsize$\pm$0.000}$ & $0.016\text{\scriptsize$\pm$0.011}$ \\
    5000 & \multicolumn{4}{c}{NaN (results unavailable)} \\

    \midrule
    1    & $0.104\text{\scriptsize$\pm$0.019}$ & $0.333\text{\scriptsize$\pm$0.077}$ & $0.643\text{\scriptsize$\pm$0.032}$ & $1.102\text{\scriptsize$\pm$0.337}$ \\
    10   & $0.059\text{\scriptsize$\pm$0.018}$ & $0.181\text{\scriptsize$\pm$0.038}$ & $0.129\text{\scriptsize$\pm$0.011}$ & $0.189\text{\scriptsize$\pm$0.023}$ \\
    100  & $0.052\text{\scriptsize$\pm$0.017}$ & $0.156\text{\scriptsize$\pm$0.026}$ & $0.015\text{\scriptsize$\pm$0.001}$ & $0.055\text{\scriptsize$\pm$0.024}$ \\
    200  & $0.052\text{\scriptsize$\pm$0.018}$ & $0.153\text{\scriptsize$\pm$0.037}$ & $0.009\text{\scriptsize$\pm$0.000}$ & $0.039\text{\scriptsize$\pm$0.015}$ \\
    500  & $0.041\text{\scriptsize$\pm$0.013}$ & $0.124\text{\scriptsize$\pm$0.027}$ & $0.004\text{\scriptsize$\pm$0.000}$ & $0.013\text{\scriptsize$\pm$0.009}$ \\
    700  & $0.037\text{\scriptsize$\pm$0.011}$ & $0.110\text{\scriptsize$\pm$0.019}$ & $0.002\text{\scriptsize$\pm$0.000}$ & $0.007\text{\scriptsize$\pm$0.006}$ \\
    5000 & \multicolumn{4}{c}{NaN (results unavailable)} \\
    \bottomrule
  \end{tabular}
\end{table}

\begin{table}[t]
  \centering
  \caption{Effect of $\epsilon$ on energy distance, $W_2^2$, $\bbE[|h(x)|]$, and $\bbE[g(x)^+]$ on the Mixture Gaussian example with $\alpha = 100$ (top: OLLA, bottom: OLLA-H)}
  \vspace{5pt}
  \label{tab:effect-epsilon_app}
  \begin{tabular}{ccccc}
    \toprule
    $\epsilon$ 
      & energy distance 
      & $W_2^2$ 
      & $\bbE[|h(x)|]$ 
      & $\bbE[g(x)^+]$ \\
    \midrule
    0.1   & $0.048\text{\scriptsize$\pm$0.014}$ & $0.151\text{\scriptsize$\pm$0.026}$ & $0.014\text{\scriptsize$\pm$0.001}$ & $0.082\text{\scriptsize$\pm$0.017}$ \\
    1   & $0.033\text{\scriptsize$\pm$0.004}$ & $0.108\text{\scriptsize$\pm$0.011}$ & $0.017\text{\scriptsize$\pm$0.002}$ & $0.067\text{\scriptsize$\pm$0.027}$ \\
    5   & $0.040\text{\scriptsize$\pm$0.006}$ & $0.123\text{\scriptsize$\pm$0.018}$ & $0.016\text{\scriptsize$\pm$0.001}$ & $0.040\text{\scriptsize$\pm$0.015}$ \\
    10  & $0.036\text{\scriptsize$\pm$0.016}$ & $0.112\text{\scriptsize$\pm$0.034}$ & $0.017\text{\scriptsize$\pm$0.001}$ & $0.019\text{\scriptsize$\pm$0.006}$ \\
    50  & $0.038\text{\scriptsize$\pm$0.014}$ & $0.112\text{\scriptsize$\pm$0.029}$ & $0.018\text{\scriptsize$\pm$0.001}$ & $0.003\text{\scriptsize$\pm$0.004}$ \\
    200 & $0.041\text{\scriptsize$\pm$0.020}$ & $0.119\text{\scriptsize$\pm$0.057}$ & $0.018\text{\scriptsize$\pm$0.002}$ & $0.006\text{\scriptsize$\pm$0.012}$ \\
    10000 & $0.066\text{\scriptsize$\pm$0.022}$ & $0.137\text{\scriptsize$\pm$0.124}$ & $0.033\text{\scriptsize$\pm$0.020}$ & $0.000\text{\scriptsize$\pm$0.000}$ \\

    \midrule
    0.1   & $0.039\text{\scriptsize$\pm$0.012}$ & $0.126\text{\scriptsize$\pm$0.014}$ & $0.013\text{\scriptsize$\pm$0.001}$ & $0.073\text{\scriptsize$\pm$0.026}$ \\
    1   & $0.048\text{\scriptsize$\pm$0.018}$ & $0.142\text{\scriptsize$\pm$0.029}$ & $0.016\text{\scriptsize$\pm$0.001}$ & $0.048\text{\scriptsize$\pm$0.014}$ \\
    5   & $0.035\text{\scriptsize$\pm$0.011}$ & $0.111\text{\scriptsize$\pm$0.031}$ & $0.018\text{\scriptsize$\pm$0.001}$ & $0.027\text{\scriptsize$\pm$0.007}$ \\
    10  & $0.044\text{\scriptsize$\pm$0.014}$ & $0.127\text{\scriptsize$\pm$0.026}$ & $0.018\text{\scriptsize$\pm$0.001}$ & $0.027\text{\scriptsize$\pm$0.007}$ \\
    50  & $0.047\text{\scriptsize$\pm$0.013}$ & $0.134\text{\scriptsize$\pm$0.019}$ & $0.018\text{\scriptsize$\pm$0.002}$ & $0.010\text{\scriptsize$\pm$0.016}$ \\
    200 & $0.040\text{\scriptsize$\pm$0.007}$ & $0.117\text{\scriptsize$\pm$0.016}$ & $0.018\text{\scriptsize$\pm$0.001}$ & $0.001\text{\scriptsize$\pm$0.001}$ \\
    10000& $0.073\text{\scriptsize$\pm$0.027}$ & $1.111\text{\scriptsize$\pm$1.830}$ & $0.083\text{\scriptsize$\pm$0.117}$ & $0.000\text{\scriptsize$\pm$0.000}$ \\

    \bottomrule
  \end{tabular}
\end{table}

\clearpage

\subsection{Experiment Settings and Supplementary Results for High-dimensional Manifold with Large Number of Constraints}
\label{app:Experiment setup for high-dimensional and large number of constraint}

\textbf{Experiment Settings.} \quad  In this high-dimensional experiment, we construct a synthetic ``stress-test'' manifold in $\bbR^d$ by imposing $m-1$ linear equality and $l$ spherical inequality constraints inside a bounding sphere. Concretely, we generate $m-1$ random hyperplanes $h_i(x) = a_i^Tx - b_i$ (with $a_i \sim \calN(0, I_d), b_i \sim \calN(0, 0.1^2 )$) for $i \in [m-1]$, together with the sphere constraint $h_m(x) = \norm{x}_2^2 - R^2$ (with $R=5$), and $l$ spherical obstacles $g_j(x) = r^2 - \norm{x-c_j}_2^2$ (with $r=1$ and obstacle centers $c_j \sim \calN(0, \sqrt{R/2}I_d)$) for $j \in [l]$. All randomness is fixed via a seed across experiments.

\begin{wrapfigure}{r}{0.5\textwidth}
\vspace{-12pt}
\captionof{table}{Hyperparameter settings for high-dimensional manifold example ($\Delta t = 1 \times 10^{-2}$)}
\label{tab:hyperparams_high_dim}
\centering
\begin{tabular}{l l}
\toprule
\textbf{Method} & \textbf{Hyperparameters} \\
\midrule
OLLA      & $\alpha = 200,\;\epsilon = 1$ \\
OLLA–H    & $\alpha = 200,\;\epsilon = 1,\;N = 5$ \\
CLangevin & $L = 5,\;\tau = 10^{-4}, \lambda = 0.1$ \\
CHMC      & $\gamma = 1,\;L = 5,\;\tau = 10^{-4}, \lambda = 0$ \\
CGHMC     & $\gamma = 1,\;L = 5,\;\tau = 10^{-4}, \lambda = 0$ \\
\bottomrule
\end{tabular}
\vspace{-12pt}
\end{wrapfigure}

For each choice of ambient dimension $d$, the number of equality constraints $m$, and the number of obstacles $l$, we run one single chain of each algorithm for $1000$ iterations. We discard the first $200$ iterations as burn-in and then retain every $5$th iterate, yielding $160$ post-burn-in samples. Similar to 2D synthetic experiments, the baseline samplers (CLangevin, CHMC, and CGHMC) are initialized exactly on $\Sigma$ so that $X_0 \in \Sigma$, where as OLLA and OLLA-H start from noisy initialization $Y_0 = X_0 + \calN(0,I_d), X_0 \in \Sigma$.

We measure performance along two complementary criteria. First, we report the computational cost per effective sample size (ESS), defined as
\begin{equation*}
  \text{CPU time / ESS} := \frac{\text{total CPU runtime (s)}}{\min_{1\le i\le d}\text{ESS}_i}\,,
\end{equation*}
where \(\text{ESS}_i\) is the uni-variate ESS in coordinate \(i\). Second, we assess the estimation accuracy via the sample means of representative test functions—e.g.\ \(P(x_1>0)\) and some complicated test functions. To isolate the effect of each problem parameter, we vary one element of \(\{d,m,l\}\) at a time while holding the others fixed.

\begin{remark} \quad 
    The results shown in \cref{fig:Sampling performance and accuracy as the dimension d increases} and \cref{fig:Sampling performance and accuracy as the inequalities l increases} were obtained under a different setup as described above: a larger step size $\Delta t$ was used and the regularization parameter $\lambda$ in CLangevin, CHMC, and CGHMC was increased for improved numerical stability under large step size.
\end{remark}

\textbf{Supplementary Results - Scaling under Dimension and the Number of Inequality Constraints} \quad  In \cref{fig:Scaling under ambient dimension_app}, we plot the performance of samplers as the ambient dimension $d$ increases from $50$ to $700$ (with $m=l=5$). Across all $d$, OLLA and OLLA-H produce virtually identical estimates of our benchmark test functions similar to the baselines. In contrast, CPU time per ESS of OLLA-H stays essentially flat (on the order of $\approx 0.05$s/sample), whereas OLLA grows roughly linearly (reaching $\approx 1.1$s/sample at $d=700$), Also, the CPU runtime plot exhibits OLLA-H achieves the lowest wall-clock times across all dimensions.

Similarly, \cref{fig:Scaling under inequalities_app} shows results with dimension fixed at $d=100$ and the number of inequality constraints $l$ varying form $10$ to $60$ (with $m= 5$). Again, OLLA and OLLA-H approximately match baseline methods in estimating the mean of test functions, and OLLA-H demonstrates dramatically lower CPU time per ESS and CPU runtime than both OLLA and the other baselines.
\begin{figure}
    \centering
    \includegraphics[width=1\linewidth]{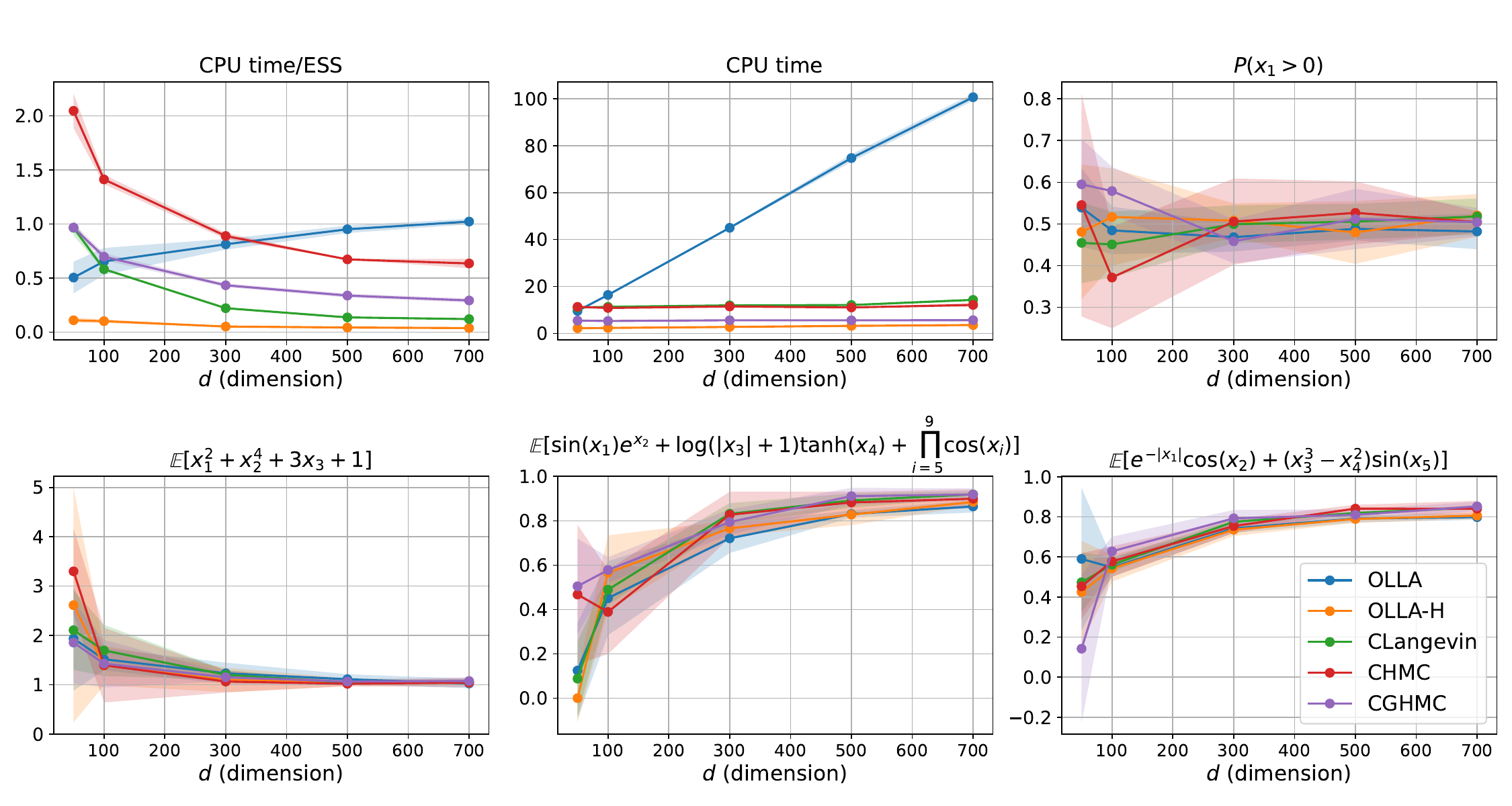}
    \caption{Scaling under the change of ambient dimension $d$. (1) CPU time per ESS (top left), (2) total CPU runtime (top center), (3) estimates of test functions (others) as the dimension $d$ increases from 50 to 700 (with $m=l=5$).}
    \label{fig:Scaling under ambient dimension_app}
\end{figure}

\begin{figure}
    \centering
    \includegraphics[width=1\linewidth]{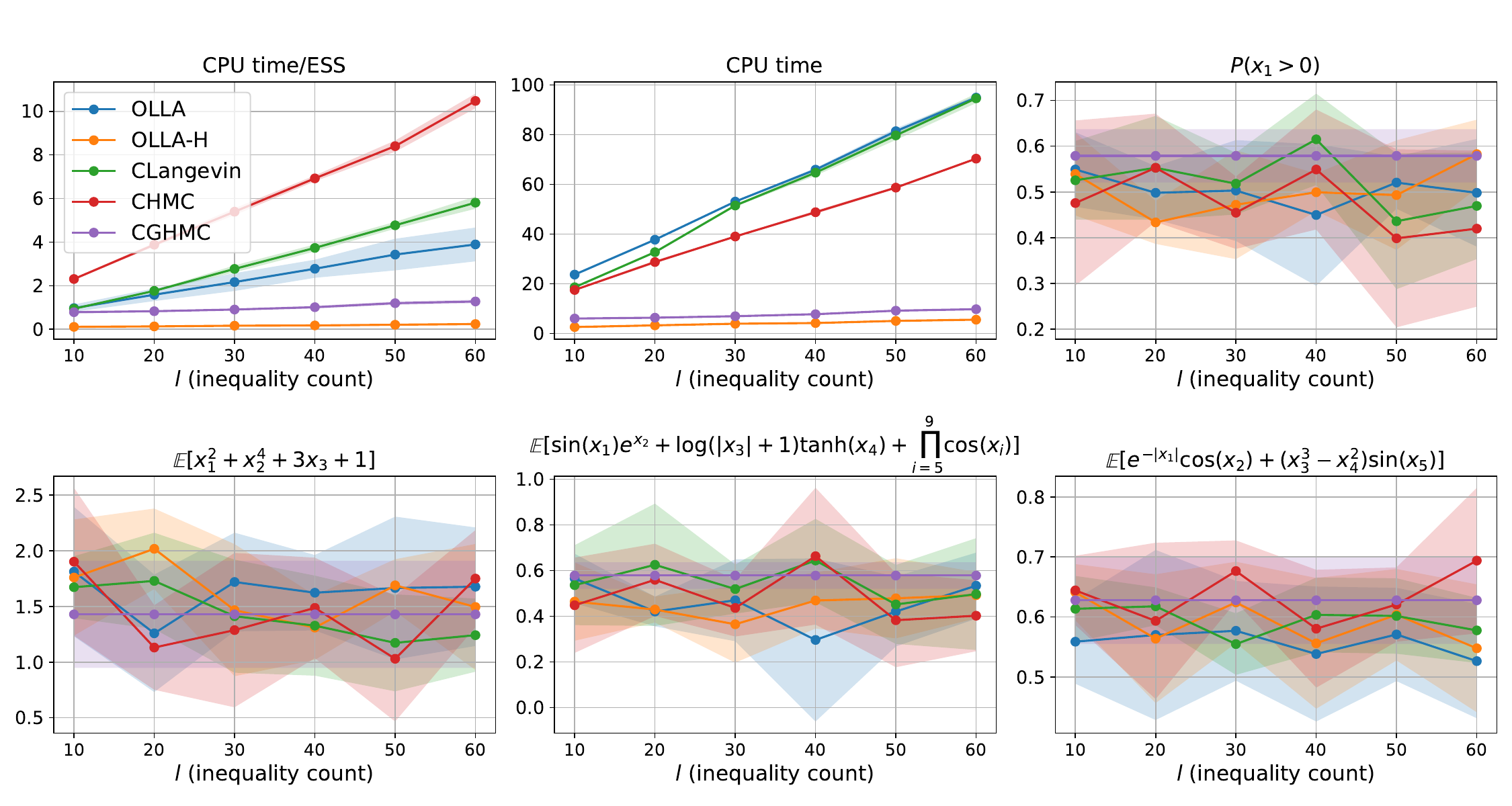}
    \caption{Scaling under the change of inequality count $l$. (1) CPU time per ESS (top left), (2) total CPU runtime (top center) (3) estimates of test functions (others) as the number of inequalities $l$ increases from 10 to 60 (with $d=100, m =5$). The average inequality violation is maintained at $\bbE[g(x)^+] = 0.000 \pm 0.000$ for all samplers.} 
    \label{fig:Scaling under inequalities_app}
\end{figure}

\textbf{Supplementary Results - Scaling under the Number of Equality Constraints} \quad In \cref{fig:Scaling under the change of equality count_app}, we show how sampling performance changes as the number of equality constraints $m$ grows from $10$ to $60$ (with $d= 100, l=5$, fixed $\alpha =200$). Although OLLA-H continues to show the lowest CPU time/ESS (and total CPU time) among the methods, its sampling accuracy  gradually degrades as $m$ increases, drifting away from the baseline values. Especially, we observe that the equality constraint violation worsens, and one must compensate by increasing $\alpha$, reducing $\Delta t$ with a longer chain to suppress the equality constraint violation. Each of these solutions may lead to increase of computational cost, and $\alpha$ cannot be driven arbitrarily high due to its induced discretization instabilities (\cref{tab:effect-alpha_app}). Equality-only baselines therefore achieve more accurate estimates-albeit at higher cost-indicating that large $m$ regimes are particularly challenging to OLLA-H compared to the high-dimensional or many inequality constraints settings.

\begin{figure}
    \centering
    \includegraphics[width=1\linewidth]{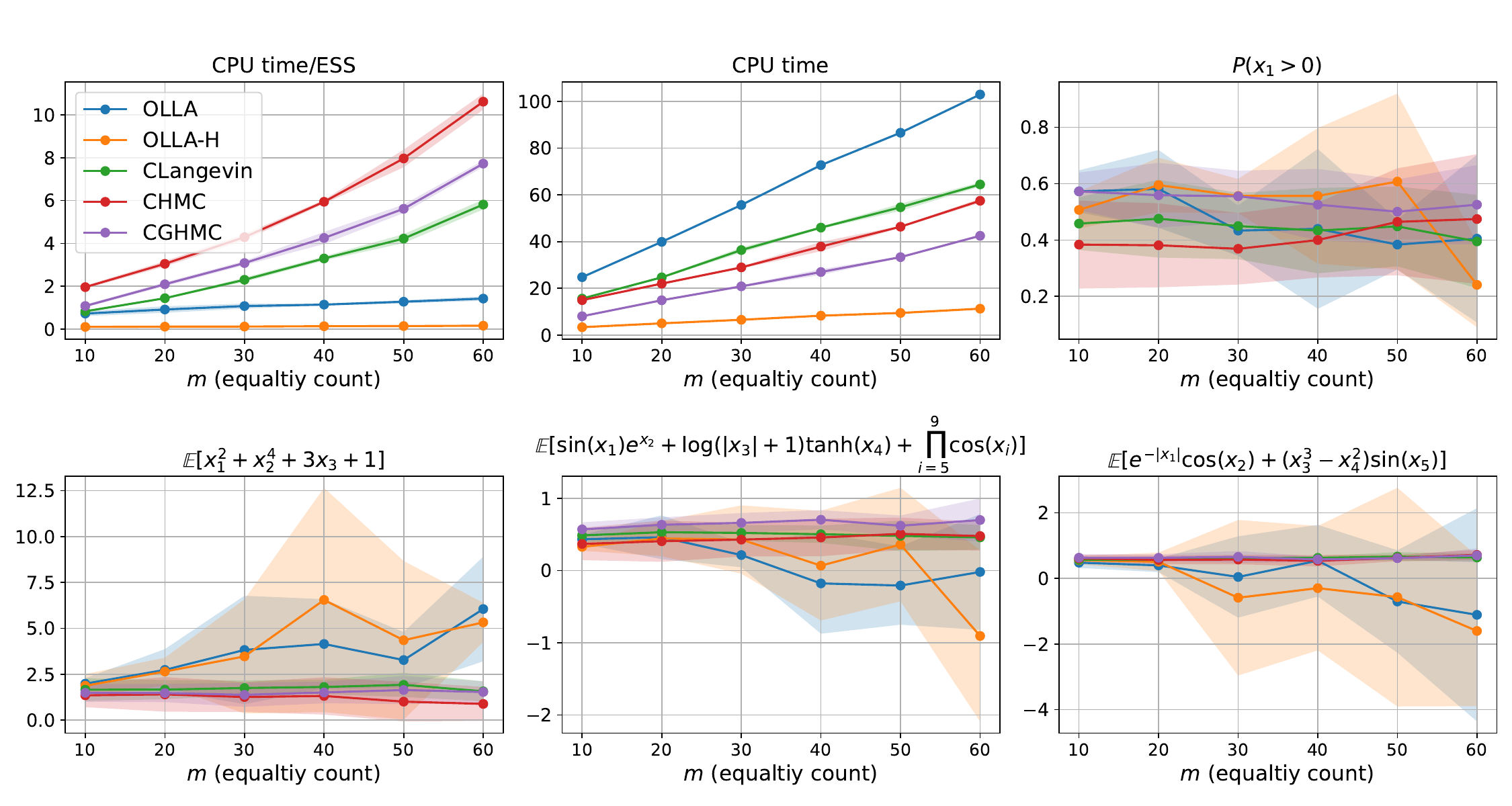}
    \caption{Scaling under the change of equality count $m$. (1) CPU time per ESS (top left), (2) total CPU runtime (top center), (3) estimates of test functions (others) as the number of inequalities $m$ increases from 10 to 60 (with $d=100, l =5$).} 
    \label{fig:Scaling under the change of equality count_app}
\end{figure}

\subsection{Experiment Settings for Molecular system and Bayesian logistic regression task}
\label{app:Experiment setup for Molecular system and Bayesian logistic regression task}

\textbf{Settings.} \quad These two experiments were executed on a Linux machine equipped with Intel Xeon Gold 6226 CPU (24 cores) and 192 GB RAM. 

\textbf{Experiment Settings (Molecular System).} \quad This experiment models a polymer chain of $N_{\text{atoms}}$ atoms in $\bbR^3$, so the state space dimension is $d=3N_{\text{atoms}}$. Let $p \in \bbR^d$ be the flattened vector of atom positions, and let $P_k \in \bbR^3$ denote the position of the $k$-th atom (where $p$ is reshaped into an $N_{\text{atoms}} \times 3$ matrix).

The \textbf{equality constraints} $h(p) =0 $ enforce fixed bond lengths ($l_b$) between adjacent atoms and fixed angles ($\theta_a$) between consecutive bonds:
\begin{enumerate}[leftmargin = 20pt]
    \item Bond length constraints ($k=0, \dots, N_{\text{atoms}}-2$):
    \begin{equation*}
        h_{\text{bond}, k}(p) = \norm{P_k - P_{k+1}}_2^2 - l_b^2 = 0
    \end{equation*}
    \item Bond angle constraints ($k=1, \dots, N_{\text{atoms}}-2$): Let $v_1^k = P_{k-1} - P_k$ and $v_2^k = P_{k+1} - P_k$.
    \begin{equation*}
        h_{\text{angle}, k}(p) = \frac{v_1^k \cdot v_2^k}{\norm{v_1^k}_2 \norm{v_2^k}_2} - \cos(\theta_a) = 0
    \end{equation*}
    with $l_b=1.0$ and $\theta_a = 109.5^{\circ}$.
\end{enumerate}
The \textbf{inequality constraints} ($g(p) \leq 0$) enforce steric hindrance, ensuring a minimum distance ($r_{\text{min}}$) between non-adjacent atoms ($|i-j| \ge 2$):
\begin{equation*}
    g_{ij}(p) = r_{\text{min}}^2 - \norm{P_i - P_j}_2^2 \le 0 \quad \text{for } j \ge i+2
\end{equation*}
with $r_{\text{min}}=1.0$. 

The \textbf{potential function} $f(p)$ models the energy of the polymer configuration and includes terms for torsion angles and non-bonded interactions based on the Weeks-Chandler-Andersen (WCA) potential. It is calculated as $f(p) = \beta (U_{\text{tor}}(p) + U_{\text{nb}}(p))$, where $\beta$ is an inverse temperature parameter (default: $\beta=1.0$).

The torsion potential, $U_{\text{tor}}$, depends on the dihedral angles $\phi_k$ (for $k=1, \dots, N_{\text{atoms}}-3$) formed by consecutive bonds, where $\phi_k$ denotes the dihedral angle between atoms $P_{k-1}, P_k, P_{k+1}, P_{k+2}$.

The potential is a sum over modes $m \in M$ (default: $M = \mbra{1,3}$) with corresponding force constants $k_m$ (default: $k_1 = 0.5, k_3 = 0.2$) and phase shifts $\delta_m$ (default: $\delta_1 =0.0, \delta_3 = 0.0$):
\begin{equation*}
    U_{\text{tor}}(p) = \sum_{k=1}^{N_{\text{atoms}}-3} \sum_{m \in M} k_m (1 + \cos(m \phi_k - \delta_m)).
\end{equation*}

The non-bonded WCA potential, $U_{\text{nb}}$, models repulsion between atoms $i$ and $j$ that are not directly bonded and are separated by at least two bonds ($|i-j| \ge 3$). Let $R_{ij} = \|P_i - P_j\|_2$ be the distance between atoms $i$ and $j$. With the steric minimum distance $r_{\text{min}}$, its associated length scale $\sigma = r_{\text{min}} / 2^{1/6}$, and the potential cutoff is $r_c = 2^{1/6} \sigma$. The interaction energy is given by:
\begin{equation*}
    U_{\text{LJ}}(R_{ij}) = 4 \epsilon_{\text{WCA}} \lbra{ \sbra{\frac{\sigma}{R_{ij}}}^{12} - \sbra{\frac{\sigma}{R_{ij}}}^{6}} + \epsilon_{\text{WCA}}
\end{equation*}
where $\epsilon_{\text{WCA}}$ is the energy scale (default: 1.0). The total non-bonded potential is the sum over eligible pairs ($j \ge i+3$) where the distance is less than the cutoff:
\begin{equation*}
    U_{\text{nb}}(p) = \sum_{i=0}^{N_{\text{atoms}}-4} \sum_{j=i+3}^{N_{\text{atoms}}-1} U_{\text{LJ}}(R_{ij}) \cdot \bbI(R_{ij} < r_c)
\end{equation*}
where $\bbI(\cdot)$ is the indicator function.

We vary $N_{\text{atoms}} \in \{5, 10, 15, 20, 30\}$, corresponding to $d \in \{15, 30, 45, 60, 90\}$. We run a single chain for $K=5000$ steps, discard the first 1000 as burn-in, and thin by 5. To measure the accuracy of sampling, we use the radius of gyration squared $(R_g^2)$ as a test function, which is defined as:
\begin{equation*}
    R_g^2(p) = \frac{1}{N_{\text{atoms}}} \sum_{k=0}^{N_{\text{atoms}}-1} \norm{P_k - P_{cm}}_2^2, \quad \text{where } P_{cm} = \frac{1}{N_{\text{atoms}}} \sum_{k=0}^{N_{\text{atoms}}-1} P_k
\end{equation*}

\vspace{-15pt}
\begin{table}[h]
\centering
\caption{Hyperparameter settings for the Molecular System example ($\Delta t = 1 \times 10^{-5}$)}
\vspace{5pt}
\label{tab:hyperparams_polymer}
\begin{tabular}{l l}
\toprule
\textbf{Method} & \textbf{Hyperparameters} \\
\midrule
OLLA-H    & $\alpha = 500, \epsilon = 1.0, N \in \mbra{0, 5}$ \\
CLangevin & $L = 30, \tau = 10^{-4}, \lambda = 0.5$ \\
CHMC      & $\gamma = 1.0, L = 30, \tau = 10^{-4}, \lambda = 0.0$ \\
CGHMC     & $\gamma = 1.0, L = 30, \tau = 10^{-4}, \lambda = 0.0$ \\
\bottomrule
\end{tabular}
\end{table}

\textbf{Experiment Settings (Bayesian logistic regression).} \quad This experiment involves sampling the posterior distribution of weights $\theta \in \bbR^d$ for a two-layer Bayesian neural network applied to the German Credit dataset \cite{statlog_(german_credit_data)_144}. Let $\sigma(\cdot)$ denote the sigmoid function and $\hat{p}_{\text{logit}}(\theta, x, a)$ be the network's output probability for input features $x$ and sensitive attribute $a$. 

The neural network consists of an input layer, two hidden layers with ReLU activation (sizes $H_1=32$, $H_2=16$), and a final linear output layer combined with a bias term $b_0$ and a term $\alpha \cdot a$ dependent on the sensitive attribute:
\begin{align*}
    h_1 &= \text{ReLU}(W_1 x + b_1) \in \bbR^{H_1} \\
    h_2 &= \text{ReLU}(W_2 h_1 + b_2) \in \bbR^{H_2} \\
    \hat{p}_{\text{logit}}(\theta, x, a) &= w_3^T h_2 + \alpha a + b_0 \in \bbR
\end{align*}
where the parameters constituting $\theta$ have dimensions: $W_1 \in \bbR^{H_1 \times \text{input\_dim}}$, $b_1 \in \bbR^{H_1}$, $W_2 \in \bbR^{H_2 \times H_1}$, $b_2 \in \bbR^{H_2}$, $w_3 \in \bbR^{H_2}$, $\alpha \in \bbR$, and $b_0 \in \bbR$. So, the total dimension $d = H_1 \cdot \text{input\_dim} + H_1 + H_2 \cdot H_1 + H_2 + H_2 + 2$ varies based on the input dimension, which itself depends on the feature hashing dimension used for categorical features. In particular, the parameter size of $\theta$ can be changed by adjusting hashing dimension.

The \textbf{potential function} is the negative log-posterior $f(v) = -\left( \log P(\calD|\theta) + \log P(\theta) \right)$, where $\calD$ is the training data, $P(\calD|\theta)$ is the log-likelihood using the sigmoid of the logits, and $\log P(\theta)$ is the log-prior based on an isotropic Gaussian distribution with precision $10^{-3}$.

The \textbf{equality constraints} ($h(\theta)=0$) enforce fairness via demographic parity on True Positive Rate (TPR) and False Positive Rate (FPR) between groups defined by the sensitive attribute (default: gender) $a \in \{0, 1\}$:
\begin{align*}
    h_{\text{TPR}}(v) &= \bbE_{x, y | a=1, y=1}[\sigma(\hat{p}_{\text{logit}}(v, x, 1))] - \bbE_{x, y | a=0, y=1}[\sigma(\hat{p}_{\text{logit}}(v, x, 0))] = 0 \\
    h_{\text{FPR}}(v) &= \bbE_{x, y | a=1, y=0}[\sigma(\hat{p}_{\text{logit}}(v, x, 1))] - \bbE_{x, y | a=0, y=0}[\sigma(\hat{p}_{\text{logit}}(v, x, 0))] = 0.
\end{align*}
These expectations are estimated using averages over the training data subsets corresponding to each sensitive attribute $a$ and true label $y$.

The \textbf{inequality constraints} ($g(\theta) \le 0$) enforce monotonicity on selected features (default: duration, credit amount, existing credit, age) by requiring the gradient of the logit with respect to these features to have a specific sign (or be close to zero, within a margin $\delta = 1.0$) at a subset of anchor data points $\calD_{\text{anchor}}$. Let $S^+ = \mbra{\text{duration, credit amount, existing credits}}$ and $S^- = \mbra{\text{age}}$. The constraints are formulated as:
\begin{align*}
    g_{\text{mono}}(v) = \max \sbra{\max_{j \in S^+, x_i \in \calD_{\text{anchor}}} \mbra{ -\frac{\partial \hat{p}_{\text{logit}}(v, x_i, a_i)}{\partial x_{ij}} }, \max_{k \in S^-, x_i \in \calD_{\text{anchor}} } \mbra{ \frac{\partial \hat{p}_{\text{logit}}(v, x_i, a_i)}{\partial x_{ik}}} } - \delta.
\end{align*}
The dimension $d$ varies based on feature hashing as $d \in \{706, 1986, 4994, 9986, 49986, 100002\}$. We run a single chain for $K=200$ steps, discard the first 40 as burn-in, and thin by 2. Also, we use the test Negative Log-Likelihood (NLL) as the metric.
\vspace{-15pt}
\begin{table}[h]
\centering
\caption{Hyperparameter settings for the Bayesian logistic regression task}
\vspace{5pt}
\label{tab:hyperparams_german}
\begin{tabular}{l l}
\toprule
\textbf{Method} & \textbf{Hyperparameters} \\
\midrule
OLLA-H    & $\alpha = 100, \epsilon = 1.0, N \in \mbra{0,5}, \Delta t=5\times 10^{-4}$ \\
CLangevin & $L = 10, \tau = 1.0, \lambda = 0.5, \Delta t=5\times 10^{-4}$ \\
CHMC      & $\gamma = 1.0, L = 10, \tau = 1.0, \lambda = 0.5, \Delta t=5 \times 10^{-3}$ \\
CGHMC     & $\gamma = 1.0, L = 10, \tau = 1.0, \lambda = 0.5, \Delta t=5 \times 10^{-3}$ \\
\bottomrule
\end{tabular}
\end{table}

\end{document}